\def\E{\mathop{\mathbb{E}}} 
\newcommand\indep{\protect\mathpalette{\protect\independenT}{\perp}}
\def\independenT#1#2{\mathrel{\rlap{$#1#2$}\mkern2mu{#1#2}}}
\newacronym{KL}{kl}{Kullback-Leibler}
\newacronym{ELBO}{elbo}{\emph{evidence lower bound}}
\newacronym{POPELBO}{pop-elbo}{\emph{population evidence lower bound}}
\newacronym{SVI}{svi}{stochastic variational inference}
\newacronym{conc}{conc}{concordance}
\newacronym{svm}{svm}{svm}
\newacronym{BUMPVI}{bump-vi}{bumping variational inference}
\newacronym{GMM}{gmm}{Gaussian mixture model}
\newacronym{LDA}{lda}{latent Dirichlet allocation}
\newacronym{SUTVA}{sutva}{stable unit treatment value assumption}
\newacronym{KSD}{ksd}{{kernelized Stein discrepancy}}
\newacronym{KCC-SD}{kcc-sd}{kernelized complete conditional Stein discrepancy}
\newacronym{OPVI}{opvi}{operator variational inference}
\newacronym{SVGD}{svgd}{Stein variational gradient descent}
\newacronym{vde}{vde}{variational decoupling}
\newacronym{cfn}{cfn}{control-function method}
\newacronym{gcfn}{gcfn}{generalized control-function method}
\newacronym{2sls}{2sls}{two-stage least-squares method}
\newacronym{gmm}{gmm}{generalized method of moments}
\newacronym{iv}{iv}{instrumental variable}
\newacronym{cdf}{cdf}{cumulative distribution function}
\newacronym{x-cal}{x-cal}{explicit calibration}
\newacronym{d-cal}{d-calibration}{distributional calibration}
\newacronym{d-cal-short}{d-cal}{d-cal}
\newacronym{nll}{nll}{negative log likelihood}
\newacronym{ipcw}{ipcw}{inverse probability of censor-weighting}
\newacronym{bs}{bs}{Brier score}
\newacronym{fbs}{fbs}{F Brier score}
\newacronym{gbs}{gbs}{G Brier score}
\newacronym{fbscw}{fbscw}{Weighted F Brier score}
\newacronym{gbscw}{gbscw}{Weighted G Brier score}
\newacronym{bl}{bl}{Bernoulli likelihood}
\newacronym{bll}{bll}{Bernoulli log likelihood}
\newacronym{auc}{auc}{area under curve}
\newacronym{km}{km}{Kaplan-Meier}
\newacronym{gan}{gan}{Generative Adversarial Network}
\newacronym{support}{support}{Study to Understand Prognoses Preferences Outcomes and Risks of Treatment}
\newacronym{metabric}{metabric}{Molecular Taxonomy of Breast Cancer International Consortium}
\newacronym{rott}{rott}{Rotterdam Tumor Bank}
\newacronym{gbsg}{gbsg}{German Breast Cancer Study Group}
\newacronym{rott-gbsg}{rott. \& gbsg}{Rotterdam \& GBSG}
\newacronym{flchain}{flchain}{The Assay Of Serum Free Light Chain}
\newacronym{nwtco}{nwtco}{National Wilm’s Tumor Study}
\newacronym{crash2}{crash-2}{Clinical Randomization of an Antifibrinolyticin Significant Hemorrhage 2}
\newacronym{uw-dcal}{uw-dcal}{uniform-weighted d-cal}
\newacronym{ipcw-dcal}{ipcw-dcal}{\gls{ipcw} d-cal}
\newacronym{ipcw-xcal}{ipcw-xcal}{\gls{ipcw} x-cal}
\newacronym{ibs}{ibs}{integrated brier score}
\newacronym{ipcw-ibs}{ipcw-ibs}{\gls{ipcw}-\gls{ibs}}
\newacronym{crps}{crps}{continuous ranked probability score}
\newacronym{s-crps}{s-crps}{Survival-\acrshort{crps}}
\newacronym{ifd}{ifd}{individual failure distribution}
\newacronym{hl}{hl}{Hosmer-Lemeshow}
\newacronym{gb}{gb}{Grønnesby-Borgan}
\newacronym{dn}{dn}{D’Agostino-Nam}
\newacronym{ni}{ni}{Not-Interpolated}
\newacronym{i}{i}{Interpolated}
\newacronym{mimic-iii}{mimic-iii}{Medical Information Mart for Intensive Care}
\newacronym{mnist}{mnist}{Modified National Institute of Standards and Technology database}
\newacronym{tcga}{tcga}{The Cancer Genome Atlas}
\newacronym{mtlr}{mtlr}{Multi-Task Logistic Regression}
\newacronym{aft}{aft}{Accelerated Failure Times}
\renewcommand{\mid}{~\vert~}
\newcommand{\g}{\mid}
\newcommand{\gtight}{\vert}
\crefname{lemma}{lemma}{lemmas}
\crefname{prop}{proposition}{propositions}
\DeclareRobustCommand{\indicator}[1]{\ensuremath{\mathbbm{1}\left[#1\right]}}
\newcommand{\titrue}{\theta^\star_{Ti}}
\newcommand{\tihat}{\hat{\theta}_{Ti}}
\newcommand{\tkplusonetrue}{\theta^\star_{T(k+1)}}
\newcommand{\tkplusonehat}{\hat{\theta}_{T(k+1)}}
\newcommand{\citrue}{\theta^\star_{Ci}}
\newcommand{\cihat}{\hat{\theta}_{Ci}}
\newcommand{\ckplusonetrue}{\theta^\star_{C(k+1)}}
\newcommand{\ckplusonehat}{\hat{\theta}_{C(k+1)}}
\theoremstyle{plain}
\theoremstyle{definition}
\newtheorem*{assumption*}{Assumption}
\newtheorem{proposition}{Proposition}
\newtheorem*{proposition*}{Proposition}
\title{Inverse-Weighted Survival Games}
\author{%
Xintian Han
  \thanks{Equal Contribution.} \\
  NYU\\
  \texttt{xintian.han@nyu.edu} \\
 \And  
 Mark Goldstein
 \footnotemark[1] \\ 
 NYU \\ 
 \texttt{goldstein@nyu.edu} \\ 
 \And 
 Aahlad Puli \\ 
 NYU \\ 
 \texttt{aahlad@nyu.edu}
 \AND 
 Thomas Wies\\
 NYU \\ 
 \texttt{wies@cs.nyu.edu}\\ 
 \And 
  Adler J. Perotte\\
 Columbia University\\
  \texttt{adler.perotte@columbia.edu} \\
   \And 
    Rajesh Ranganath\\
  NYU\\
  \texttt{rajeshr@cims.nyu.edu} \\
}
\begin{document}

\maketitle

\begin{abstract}
Deep models trained through maximum likelihood have achieved state-of-the-art results for survival analysis.
Despite this training scheme,
practitioners evaluate models
under other criteria, such as
binary classification losses at a chosen set of time horizons,
e.g.  \gls{bs} and \gls{bll}.
Models trained with maximum likelihood may have poor \gls{bs} or \gls{bll} since maximum likelihood does not directly optimize these criteria.
Directly optimizing criteria like \gls{bs} 
requires inverse-weighting by the censoring distribution.
However, estimating the censoring model under these metrics
requires inverse-weighting by the failure distribution.
The objective for each model requires the other,
but neither are known.
To resolve this dilemma, we introduce \textit{Inverse-Weighted Survival Games}. 
In these games,
objectives for each model are built from re-weighted estimates
featuring the other model, where the 
latter
is held fixed during training.
When the loss is proper, we show that the games always have the
true failure and censoring 
distributions as a stationary point. This means models in the game do not leave the correct distributions once reached. 
We construct one case where this stationary point is unique.
We show that these games optimize 
\gls{bs} on simulations and then apply these principles on real world
cancer and critically-ill patient data.
\end{abstract}

\glsresetall

\section{Introduction}
Survival analysis is the modeling of time-to-event distributions and is widely used in healthcare to predict time from diagnosis to death, risk of disease recurrence, and 
changes in
level of care. In survival data, events, known as \textit{failures}, are often right-censored, i.e., only a lower bound on the time is observed, for instance, when a patient leaves a study before failing. Under certain assumptions, maximum likelihood estimators are consistent for survival modeling~\citep{kalbfleisch2002}.

Recently, deep survival models have obtained state-of-the-art results 
\citep{ranganath2016deep,alaa2017deep,katzman2018deepsurv,kvamme2019time,zhong2019survival}. 
Common among these are 
discrete-time models
\citep{yu2011learning,lee2018deephit,fotso2018deep,lee2019temporal,ren2019deep,kvamme2019continuous,kamran2021estimating,goldstein2020x,sloma2021empirical}
even when data are continuous
because they 
can borrow classification architectures and
flexibly
approximate continuous densities~\citep{miscouridou2018deep}.

Though training is often based on maximum likelihood, 
criteria such as 
\gls{bs} and \gls{bll} have been used to evaluate survival models
\citep{haider2020effective}.
The \gls{bs} and \gls{bll} are 
classification losses 
adapted for survival
by treating the model as a binary classifier at various time horizons (\textit{will the event occur before or after 5 years?})
 \citep{kvamme2019continuous,lee2019temporal,steingrimsson2020deep}.
\Gls{bs} can also be motivated by calibration (\cref{sec:timedependentloss})
which is valuable because survival probabilities are used 
to communicate risk \citep{sullivan2004presentation}. 
However \gls{bs} and \gls{bll}
are challenging to estimate
because they require
\gls{ipcw}, which depends on the true censoring distribution \citep{van2003unified}.

Though consistent, due to finite data, maximum likelihood may lead to models with poor  \gls{bs} and \gls{bll}. 
But directly optimizing these criteria
is challenging
because \gls{ipcw} estimation
requires solving an additional survival modeling problem to estimate the unknown censoring distribution. 
This poses a re-weighting dilemma: each model is required 
for training the other under these criteria but neither are known.

To resolve the dilemma,
we introduce  
\textit{Inverse-Weighted Survival Games}
for training with respect to criteria such as \gls{bs} and \gls{bll}. 
We pose survival analysis as a game
with the failure and censoring models as players. Each model's loss is built from \gls{ipcw} estimates featuring the other model.  Inspired by game theory \citep{neumann2007theory,letcher2019differentiable}, we ask:
should the censoring model's re-weighting role in the failure objective be considered part of the censoring objective? We find the answer to be no.
In each step of training, 
each model follows
gradients of its loss with the other model held fixed to compute weights.

When the loss is \textit{proper}
(e.g. \gls{bs}, \gls{bll}) \citep{gneiting2007strictly}, we show
that games
have the true failure and censoring distributions as a stationary point. This means the models in the game do not leave the correct distributions once reached.
We then describe one case
where this stationary point is unique.
Finally, we show 
that inverse-weighted game training 
achieves better \gls{bs} and \gls{bll} than maximum likelihood methods on simulations and  real world cancer and ill-patient data.\footnote{Code is available at \href{https://github.com/rajesh-lab/Inverse-Weighted-Survival-Games}{https://github.com/rajesh-lab/Inverse-Weighted-Survival-Games}}

\section{Notation and background on \acrshort{ipcw} \label{sec:notationassumptions}}

\paragraph{Notation.}  Let $T$ be a failure time with \acrshort{cdf} $F(t)=P(T \leq t)$ , density $f$, survival function $\overline{F}=1-F$,
and model $F_{\theta_T}$.  Let $C$
be a censoring time with \acrshort{cdf} $G$, density $g$, $\overline{G}=1-G$, and model $G_{\theta_C}$. This means $\overline{G}(t)=P(C >t)$.
Let $\overline{G}(t^-)$ denote $P(C \geq t)$. 
We observe features $X$, time $U=\min(T,C)$ and $\Delta=\indicator{T \leq C}$.
For discrete models over $K$ times, let
$\theta_{Tt}=P_\theta(T=t)$
and
$\theta_{Ct}=P_\theta(C=t)$.

\paragraph{Models.}
We focus on deep discrete models
like those studied in
\cite{lee2018deephit,kvamme2019continuous}.
The model maps inputs $X$ to a categorical distribution over times. When
the observations are continuous, a discretization scheme is necessary.
Following \cite{kvamme2019continuous,goldstein2020x}, we set bins to correspond to quantiles of observed times. We represent all times by the lower boundary of their respective interval.

\paragraph{Assumptions.} We assume i.i.d. data and random censoring: $T \indep C \g X$ \citep{kalbfleisch2002}.
We also require the censoring positivity assumption
\citep{gerds2013estimating}.
Let $f=dF$. Then:
\begin{align}
    \label{eq:positivity}
        \exists \epsilon 
    \quad \text{ s.t. } \quad 
   \forall x \,
   \forall t \in 
        \{t\leq t_{\text{max}} \mid  f(t|x) > 0\},
\quad
    \overline{G}(t^{-} \gtight x) \geq
    \epsilon > 0,
\end{align}
i.e.
it is possible that censoring events occur late-enough for us to observe failures up until a maximum time $t_{\text{max}}$.
Truncating at a maximum time is necessary in practice for continuous distributions because datasets may have no samples in the tails, leading to practical positivity violations \citep{gerds2013estimating}.
This truncation happens implicitly for categorical models by choosing the last bin.

In this work, we model the censoring distribution.
This task is dual to the original survival problem: the roles of censoring and failure times are reversed.
Therefore, to observe censoring events properly,
we also require a version of 
\cref{eq:positivity}
to hold with the roles of $F$ and $G$ reversed  (\cref{appsec:notationassumptionsappendix}).

\paragraph{\Acrshort{ipcw} estimators.} \glsreset{ipcw} \Gls{ipcw} is a method for estimation under censoring
\citep{van2003unified,bang2005doubly}. Consider the marginal mean $\E[T]$. \Gls{ipcw} reformulates such
expectations in terms of observed data. Using \gls{ipcw}, we can show that:
\begin{align*} 
     \E
     [T] 
       = 
       \E_X\E_{T|X}\Big[
       \frac{\E[\indicator{T \leq C|X}]}
       {\E[\indicator{T \leq C|X}]}
       T
       \Big]= 
       \E_X \E_{T|X} \E_{C|X}
    \Bigg[\frac{\indicator{T \leq C }}
    {\overline{G}(T^{-} \gtight X)} T \Bigg] =
    \E_{T,C,X}
    \Bigg[ 
    \frac{\Delta U}{\overline{G}(U^{-}\gtight X)} 
    \Bigg] 
\end{align*}
We derive this fully in \cref{appsec:ipcwprimer}.
The second equality holds because $\Delta= 1 \implies U=T$ and means we can identify
$\E[T|X]$
provided that we know $G$ and that
random censoring and positivity 
hold.

\section{Time-dependent survival evaluations
\label{sec:timedependentloss}}
\glsreset{bs} 
\glsreset{bll}
\begin{figure}
    \centering
    \includegraphics[width=90mm]{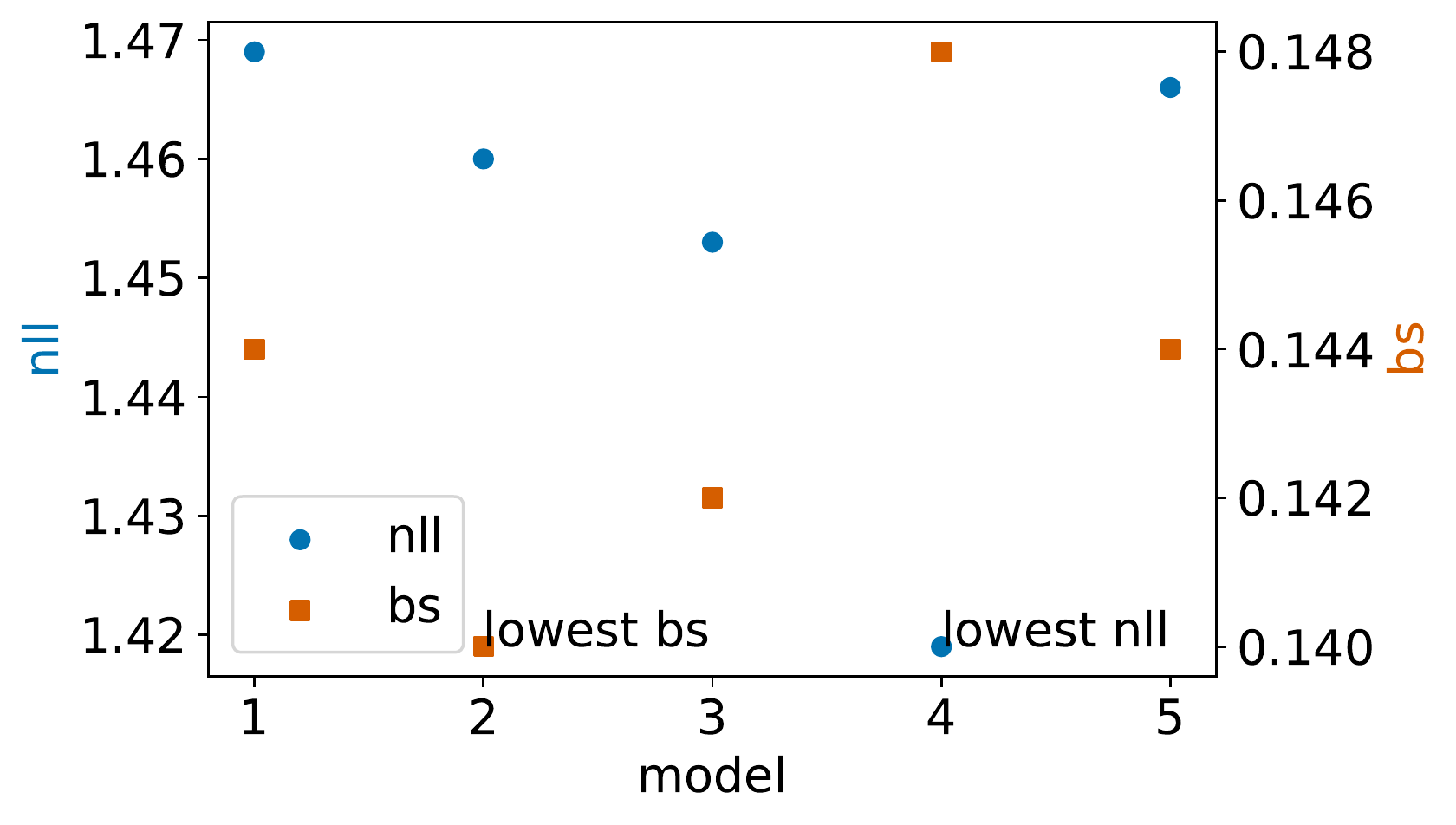}
    \caption{Test \acrshort{nll} and \acrshort{bs} for 5 different models, each trained with a different learning rate
    \label{fig:diff_plot}}
\end{figure}
\Gls{bs} \citep{brier1951verification} is 
\textit{proper} for classification,
meaning that it has a minimum at the true data distribution~\citep{gneiting2007strictly}. The \gls{bs} is often adapted for survival evaluations~\citep{lee2019temporal,kvamme2019time,haider2020effective}.
For time $t$, it computes 
differences
between the \acrshort{cdf} and true event status at $t$, turning survival analysis into a classification problem at a given time horizon:
\begin{align}
\label{eq:fbs}
 \text{BS}(t;\theta) &= \E
 \Big[
   \Big( F_{\theta_T}(t \mid  X) - 
    \indicator{T \leq t}\Big)^2
   \Big]
\end{align}
\Gls{bs} is often used as a proxy for marginal calibration error \citep{kumar2018trainable,lee2019temporal}, 
which
measures differences between
\acrshort{cdf} levels $\alpha \in [0,1]$ and observed proportions of datapoints with  $F_\theta(T|X) < \alpha$
\citep{demler2015tests}. This usage of \gls{bs} stems from its decomposition
into calibration plus a refinement (discriminative) term~\citep{degroot1983comparison}.

Unfortunately one cannot compute \gls{bs} unmodified 
since $\indicator{T \leq t}$
is unobserved for a point censored before $t$.
\Gls{ipcw} \gls{bs} \citep{graf1999assessment,gerds2006consistent} estimates
\gls{bs}$(t)$ under censoring:
\begin{align}
    \label{eq:fbscw}
    \text{BS}(t;\theta)
   &= \E
   \Big[\frac{\overline{F}_{\theta_T}(t \mid X)^2 \Delta  \indicator{U \leq t}}{\overline{G}(U^{-} \mid X)}
        + \frac{F_{\theta_T}(t \mid X)^2 \indicator{U > t}}{\overline{G}(t \mid X)}\Big].
\end{align}
\cref{eq:fbscw} is equivalent to \cref{eq:fbs} (\cref{appsec:deriveipcwbrier}).
Negative \gls{bll} 
is similar,
but with log loss 
(\cref{appsec:bll}).
\gls{bs} and \gls{bll} are proper for classification at each time $t$, so
their sum or integral over $t$
is still proper 
(\cref{appsec:sumgame}). 

\paragraph{Proper objectives differ.}
Though \gls{nll}, \gls{bs} and \gls{bll}  all have the same true distribution at optimum with infinite data, they may yield significantly different solutions in practice.
For example, \gls{nll}-trained models may not achieve good \gls{bs}~\citep{kvamme2019brier}. In \cref{fig:diff_plot}, we show test set \gls{nll} and \gls{bs} for 5 models trained with \gls{nll}
at different learning rates on Gamma-simulated data (described in \cref{sec:gamma}). \Gls{nll} does not align with \gls{bs}: models that have low \gls{nll} may not have low \gls{bs}. Model 4 has the lowest \gls{nll} but not the lowest \gls{bs}.
When a practitioner requires good performance under 
\gls{bs} or \gls{bll}, they should optimize directly for those metrics.

\paragraph{Re-weighting dilemma.} Censoring introduces challenges
because we must use \gls{ipcw} to estimate \gls{bs} and \gls{bll}. Crucially, the $G$ in \cref{eq:fbscw} is the true censoring distribution rather than a model, but during training, we only have access to models.
This poses a dilemma: can the models be used 
in re-weighting estimates during training 
to successfully optimize these criteria under censoring? 
\section{Inverse-Weighted Survival Games}

A reasonable attempt to solve the dilemma 
is to jointly optimize the sum of $F_\theta$ and $G_\theta$'s 
losses where each model re-weights the other's loss.
The expectation is that both models will improve over training and yield
reliable
\gls{ipcw} estimates for each other.  Concretely,
consider this for \cref{eq:fbscw} plus the same objective with the roles of
$F_\theta$ and $G_\theta$  reversed. 
Unfortunately, there exist solutions to this optimization problem with smaller loss than for the distributions from which the data was generated,
making this summed objective improper for the pair of distributions.
In \cref{fig:gradminplot}, we plot this for \gls{ipcw} \gls{bs}$(t=1)$ for models over two timesteps\footnote{$\gls{bs}(t=1)$ is proper for distributions with support over two timesteps because $\gls{bs}(t=K)$ for a model with support over $K$ timesteps is always $0$, so the summed \gls{bs} equals $\gls{bs}(1)$.} as a function of each model's single parameter.
 
\begin{figure}[t]
    \centering
    \subfigure[Contours of summed objective]{
    \label{fig:minplot}
        \includegraphics[height=41.5mm]{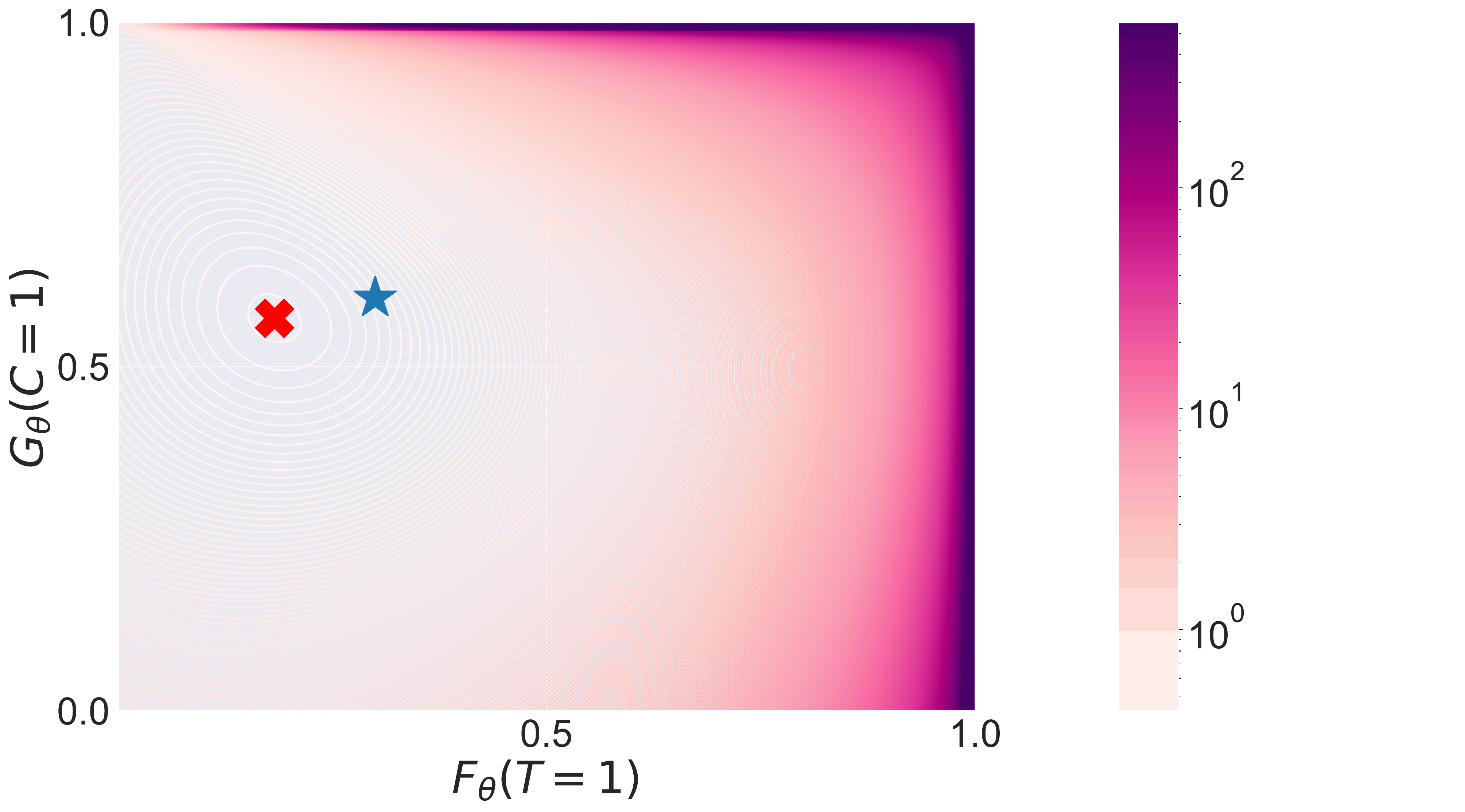}
    }
    \hspace{-4.0em}
    \subfigure[Gradients of the game]{
    \label{fig:gradplot}
        \includegraphics[height=41.5mm]{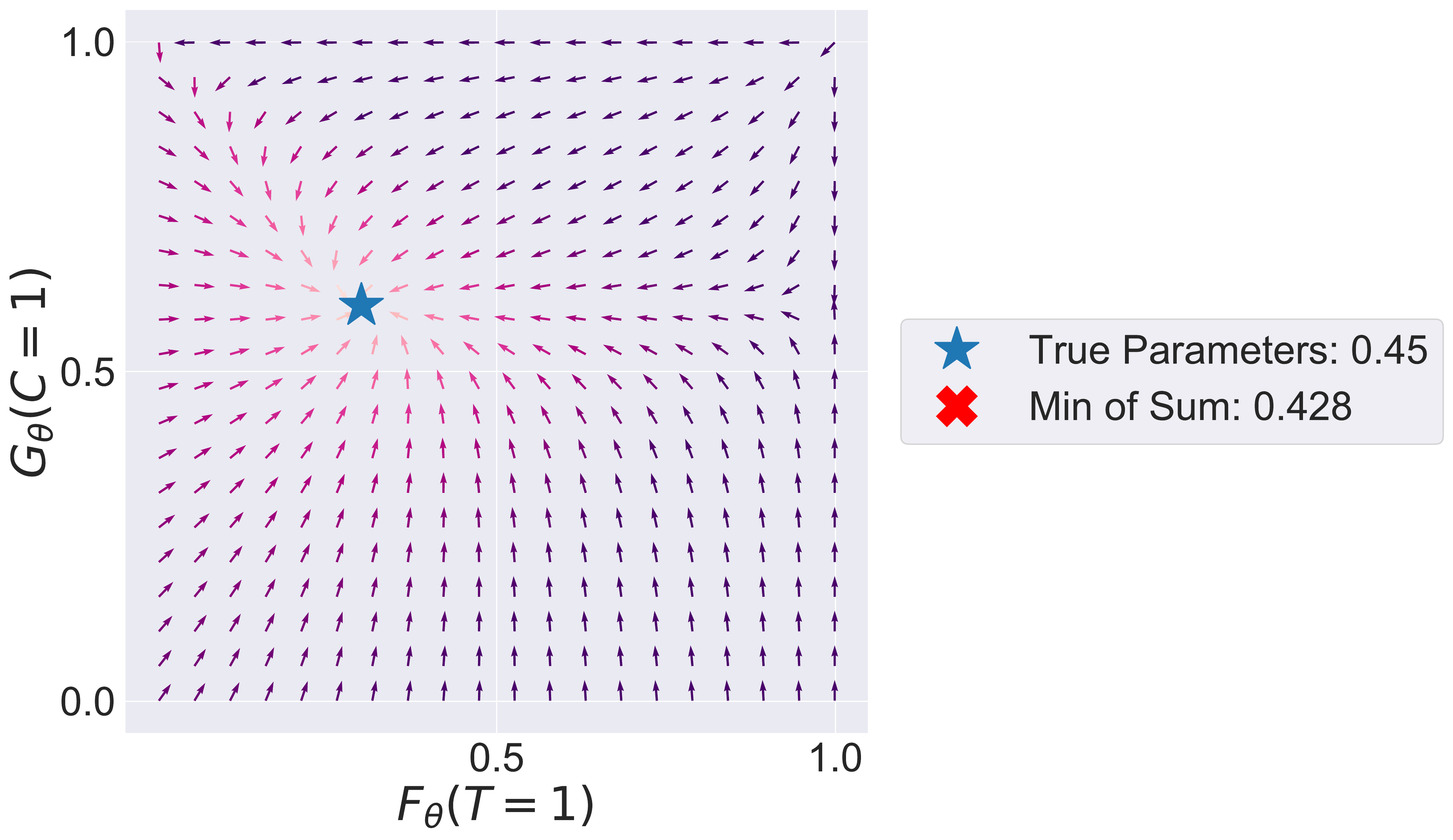}
    }
    \caption{
    \Cref{fig:minplot}: the sum of $F$ and $G$'s \gls{ipcw} \gls{bs}(1) scores is an improper joint objective for the failure and censoring models.
    \Cref{fig:gradplot}: in contrast, as shown in the gradient field,
    the one timestep game has a unique stationary point at the true data generating distributions.\label{fig:gradminplot}
 }
\end{figure}

To address this phenomenon, we introduce \textit{Inverse-Weighted Survival Games}.
In these games,
a  \textit{failure player} and \textit{censor player}
simultaneously minimize their own
loss function.
The failure and censoring model
are featured in both loss functions
and playing the game results
in a trained failure and censoring model. We show in experiments that these games produce models with good \gls{bs}, \gls{bll}, and concordance relative to those trained with maximum likelihood.

For simplicity, we present the games for marginal categorical models. The analysis can be extended to conditional parameterizations with the usual
caveats shared by maximum likelihood. Our experiments explore the conditional setting.
\subsection{Basic definition of game} 

We follow the setup in \cite{letcher2019differentiable}.
A differentiable $n$-player game consists of $n$ players each with loss $\ell_i$ and parameter (or state) $\theta_i$.
Player $i$ controls \textit{only} $\theta_i$ and aims
to minimize $\ell_i$. However, each $\ell_i$ is a function of the whole state $\theta=(\theta_i,\theta_{-i})$\footnote{For two players, when $\ell_1 = -\ell_2$ the game is called \textit{zero-sum} and can be written as a minimax game, sometimes referred to as \textit{adversarial} (e.g. as in \glspl{gan}
\citep{goodfellow2014generative}). Games with $\ell_1 \neq -\ell_2$ are called non-zero-sum. }.
The \textit{simultaneous gradient} is the gradient of the losses w.r.t. each players' respective parameters:
\begin{align*}
    \xi(\theta) = 
    [ 
    \nabla_{\theta_1} \ell_1
    ,\ldots,
    ,\nabla_{\theta_n} \ell_n
    ]
\end{align*}
The \textit{dynamics} of the game
refers to following $-\xi$.
The solution concepts in games are equilibria (the game analog of optima) and stationary points.
One necessary condition for 
equilibria is 
finding a \textit{stationary point}
$\theta^\star$ such that 
$\xi(\theta^\star)=0$.
 The
simplest algorithm follows the dynamics to find stationary points and is called simultaneous gradient descent. With learning rate $\eta$,
\begin{align*}
   \theta \leftarrow \theta - \eta \xi(\theta). 
\end{align*}
This can be interpreted as each player taking their best possible move at each instant.  
\subsection{Constructing survival games}
We specify an Inverse-Weighted Survival Game as follows.
First, choose a loss $L$
used to construct the
 losses for each player.
Next, derive the \gls{ipcw} form $L_I$ that can be
used to compute $L$ under censoring:
for true failure and censoring distributions $F^\star$ and $G^\star$, the losses $L$ and $L_I$ are related through $L_I(F_{\theta_T};G^\star) = L(F_{\theta_T})$ and $L_I(G_{\theta_C};F^\star) = L(G_{\theta_C})$.
The loss functions for the two players are defined as:
\begin{align}
\label{eq:iwgame}
\begin{split}
    \ell_{F}(\theta) \triangleq 
 L_{I}(F_{\theta_T};G_{\theta_C}), \quad 
\ell_{G}(\theta) \triangleq
    L_{I}(G_{\theta_C}; F_{\theta_T})
    \end{split}
\end{align}
Compared to \cref{eq:fbscw}, we have replaced the true re-weighting distributions with models. Finally, the failure player and censor player minimize their
respective loss functions w.r.t. only their own parameters:
\begin{align*}
\texttt{failure player:}
    \min_{\theta_T} \ell_F, 
    \quad \quad 
    \texttt{censor player:}
    \min_{\theta_C} \ell_G
\end{align*} 
One 
example
of these games is the \gls{ipcw} \gls{bs}$(t)$ game, derived in \cref{appsec:deriveipcwbrier}.
With $\overline{\Delta}=1-\Delta$,
\begin{align}
\label{eq:iwbs}
\begin{split}
\ell_{F}^t(\theta)
&=
 \E
 \Big[\frac{\overline{F}_{\theta_T}(t)^2 \Delta  \indicator{U \leq t}}{\overline{G}_{\theta_C}(U^{-} )}
        + \frac{F_{\theta_T}(t)^2 \indicator{U > t}}{\overline{G}_{\theta_C}(t )}\Big]\\
\ell_{G}^t(\theta)
 &=
   \E \Big[\frac{\overline{G}_{\theta_C}(t)^2
   \overline{\Delta} \indicator{U \leq t}}{\overline{F}_{\theta_T}(U)}
        + \frac{G_{\theta_C}(t )^2 \indicator{U > t}}{\overline{F}_{\theta_T}(t )}\Big].
  \end{split}
\end{align}
In \cref{sec:prop1section},
we show that
this formulation (\cref{fig:gradplot}) has formal advantages over the optimization
in \cref{fig:minplot} for particular choices of $L$.
\paragraph{Multiple Timesteps.} 
The example is specified for a given $t$,
but the games can be designed for multiple timesteps.
We use \gls{bs} for a $K$ timestep model
to demonstrate.
$\gls{bs}(K)$ is $0$ for any model: the left terms contain $\overline{F}_{\theta_T}$
and $\overline{G}_{\theta_C}$, which are are both $0$ when evaluated at $K$; in the right terms, $\indicator{U>K}$ is always $0$. One option is to 
define summed games with:
\begin{align*} \ell_F=\sum_{t=1}^{K-1} \ell_F^t,
 \quad 
 \ell_G=\sum_{t=1}^{K-1} \ell_G^t
\end{align*}
The summed game is shown in \cref{alg:sum}.
Alternatively, instead of the sum,
it is possible to find solutions
for all timesteps 
with respect to one pair of models $(F_\theta,G_\theta)$.
For $K\text{-}1$ timesteps this can be formalized as a $2(K\text{-}1)$-player game: there is a failure player and censor player for the loss at each $t$:
\begin{align*}
   t^{th}\text{-}\texttt{failure player:}
\min_{\theta_{Tt}}
\ell_F^t,
\quad 
\quad 
t^{th}\text{-}\texttt{censor player:}
\min_{\theta_{Ct}}
\ell_G^t
\end{align*}
We study theory that applies to both approaches in \cref{sec:prop1section},
namely that the true failure and censoring distribution are stationary points in both types of games. We prove additional properties 
about uniqueness of the stationary point
for a special case of the multi-player game
in \cref{sec:prop2section}.
Summed games are more stable to optimize in practice
because they optimize objectives at all time steps w.r.t all parameters, while multiplayer games can only improve each time step's loss
w.r.t. one parameter.
We study the summed games
empirically
in \cref{sec:experiments}.
\begin{algorithm}[h]
\begin{algorithmic}
 \STATE {\bfseries Input:}
    Choice of losses $\ell_F,\ell_G$,
    learning rate $\gamma$
 \STATE {\bfseries Initialize:} $T$ model parameters $\theta_{T}$ and $C$ model parameters $\theta_{C}$ randomly
 \REPEAT 
 \STATE{Set $g_T=0$ and $g_C=0$}
     \FOR{$t=1$ {\bfseries to} $K-1$ }
        \STATE $g_T = g_T + d\ell_F^t/d \theta_T $
     \STATE $g_C = g_C + d\ell_G^t/d \theta_C $
     \ENDFOR
   \STATE $\theta_T \leftarrow \theta_T - \gamma g_T$ and $\theta_C \leftarrow \theta_C - \gamma g_C$
 \UNTIL{convergence}
\STATE{ \bfseries Output: $\theta_T,\theta_C$}
\end{algorithmic}
\vskip -0.05in
\caption{\label{alg:sum}Following Gradients in Summed Games}
\end{algorithm}

\subsection{\Gls{ipcw} games have a stationary point at data distributions \label{sec:prop1section}} 
Among a game's stationary points
should be the true failure and censoring distributions.

\begin{proposition}
\label{prop: exist}
Assume $\exists \theta_T^\star \in \Theta_T,\exists \theta_C^\star \in \Theta_C$ such that $F^\star=F_{\theta_T^\star}$ and
$G^\star=G_{\theta_C^\star}$.
Assume the game losses $\ell_F,\ell_G$ are based on proper losses $L$
and that the games are only computed at times for which positivity holds.
Then $(\theta_T^\star,\theta_C^\star)$ is a stationary point of the game \cref{eq:iwgame}.
\end{proposition}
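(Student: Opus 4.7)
The plan is to exploit the IPCW equivalence stated just before equation (5): $L_I(F_{\theta_T};G^\star) = L(F_{\theta_T})$ and $L_I(G_{\theta_C};F^\star) = L(G_{\theta_C})$. Since a stationary point of the game requires only that $\nabla_{\theta_T}\ell_F=0$ and $\nabla_{\theta_C}\ell_G=0$ at $(\theta_T^\star,\theta_C^\star)$, I will verify each partial gradient separately by fixing the opposing player at its true parameter.

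First, I plug $\theta_C=\theta_C^\star$ into $\ell_F$. Because $G_{\theta_C^\star}=G^\star$, the IPCW equivalence gives
\begin{equation*}
\ell_F(\theta_T,\theta_C^\star) \;=\; L_I(F_{\theta_T};G^\star) \;=\; L(F_{\theta_T}).
\end{equation*}
Since $L$ is proper and $F^\star=F_{\theta_T^\star}$ lies in the model class, the map $\theta_T\mapsto L(F_{\theta_T})$ attains its minimum at $\theta_T^\star$. Under the standard differentiability and interior‑point assumptions that are implicit in the game setup, this first‑order optimality implies $\nabla_{\theta_T} L(F_{\theta_T})\big|_{\theta_T^\star}=0$, hence
\begin{equation*}
\nabla_{\theta_T}\ell_F(\theta_T,\theta_C^\star)\big|_{\theta_T=\theta_T^\star}=0.
\end{equation*}
The same argument, with the roles of $F$ and $G$ swapped, yields $\nabla_{\theta_C}\ell_G\big|_{(\theta_T^\star,\theta_C^\star)}=0$. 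Combining the two partial gradients gives $\xi(\theta_T^\star,\theta_C^\star)=0$.

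The only nontrivial step is confirming that the IPCW equivalence applies at $(\theta_T^\star,\theta_C^\star)$ without boundary issues. The proposition restricts attention to times at which positivity holds, and this is exactly the hypothesis needed for the identity $L_I(F_{\theta_T};G^\star)=L(F_{\theta_T})$ to be valid (the denominators $\overline{G}(U^-\mid X)$ and $\overline{G}(t\mid X)$ are bounded away from zero, so the IPCW rewriting used in \S\ref{sec:timedependentloss} goes through); the symmetric requirement on $\overline{F}$ comes from the dual positivity assumption noted in the paragraph on assumptions. Thus the propriety of $L$ transfers cleanly from $L$ to $\ell_F(\cdot,\theta_C^\star)$ and from $L$ to $\ell_G(\theta_T^\star,\cdot)$.

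The main potential obstacle is subtle: propriety of $L$ ensures $F^\star$ minimizes $L$, but to conclude the gradient vanishes I also need that this minimum is achieved at an interior point of $\Theta_T$ (and analogously $\Theta_C$). For the summed/multi‑timestep versions this follows because the same argument applies term‑by‑term, so I would state the proof for a single $t$ and then note that summation or component‑wise application extends it to the summed game (\cref{alg:sum}) and to the $2(K{-}1)$‑player game, since each player's partial gradient equals the partial gradient of a single proper loss evaluated at the truth.
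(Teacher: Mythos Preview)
Your proposal is correct and follows essentially the same approach as the paper's proof: fix the opposing player at its true parameter, invoke the IPCW identity $L_I(\cdot;G^\star)=L(\cdot)$ (respectively $L_I(\cdot;F^\star)=L(\cdot)$), use propriety of $L$ to conclude the true parameter is a minimizer, and then apply first-order optimality to obtain vanishing partial gradients. Your additional remarks on positivity, the interior-point caveat, and the extension to summed and multi-player games are helpful elaborations but do not change the argument's structure.
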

The proof is in \cref{appsec:ipcwgames}. 
The result holds for summed and multi-player games
using
\gls{bs}, \gls{bll},
or other proper scoring rules such as \acrshort{auc}.\footnote{Though often reported, the time-dependent concordance$(t)$ is not proper \citep{blanche2019c}.}
When games are built from such objectives, 
the set of solutions includes $(\theta_T^\star,\theta_C^\star)$ and models do not leave this correct
solution when reached.
Under the stated assumptions,
this result holds for discrete and continuous distributions.
However, as mentioned in \cref{sec:notationassumptions},
in practice a truncation time must be picked to ensure the assumptions are met for continuous distributions.

\subsection{Uniqueness for Discrete Brier Games \label{sec:prop2section}} 
We provide a stronger result for the
\gls{bs} game in \cref{eq:iwbs} when solving all timesteps with multi-player games: its only stationary point is located at the true failure and censoring distributions. 

\begin{proposition}
\label{prop: uniq}
For discrete models over $K$ timesteps,
assuming that $\theta^\star_{Tt}>0$ and $\theta^\star_{Ct}>0$,
the solution
 $(\theta_{T}^\star, \theta_{C}^\star)$ is the only stationary point for the multi-player \gls{bs} game
 shown in \cref{alg:mul-step}
 for times $t \in \{1,\ldots,K-1\}$ 
\end{proposition}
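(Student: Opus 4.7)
My plan is to derive the first-order stationarity condition for each of the $2(K-1)$ players and then induct on $t$ from $1$ up to $K-1$, showing that any stationary point must satisfy $\theta_{Tt} = \theta^\star_{Tt}$ and $\theta_{Ct} = \theta^\star_{Ct}$. The key structural observation is that $\ell_F^t$ in \cref{eq:iwbs} depends on $\theta_{Tt}$ only through $F_{\theta_T}(t)$, so the failure player's stationarity condition reduces to a scalar equation $(1 - F_{\theta_T}(t))\, A_t(\theta_C) = F_{\theta_T}(t)\, B_t(\theta_C)$. Expanding the expectation using $T \indep C$ and $U = \min(T, C)$ yields
\begin{equation*}
A_t(\theta_C) = \sum_{s \leq t} \theta^\star_{Ts}\,\frac{\overline{G^\star}(s^-)}{\overline{G}_{\theta_C}(s^-)}, \qquad B_t(\theta_C) = \frac{\overline{F^\star}(t)\,\overline{G^\star}(t)}{\overline{G}_{\theta_C}(t)}.
\end{equation*}
A mirror derivation, with the roles of $F$ and $G$ swapped, gives the censor player's condition $(1 - G_{\theta_C}(t))\, C_t(\theta_T) = G_{\theta_C}(t)\, D_t(\theta_T)$.

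For the base case $t = 1$, the sum in $C_1$ has a single term and the ratio $\overline{F^\star}(1)/\overline{F}_{\theta_T}(1)$ is common to both sides of the censor equation and cancels, decoupling it from $\theta_T$. It reduces to $(1 - \theta_{C1})\theta^\star_{C1} = \theta_{C1}(1 - \theta^\star_{C1})$, whose unique root is $\theta_{C1} = \theta^\star_{C1}$. Substituting this into the failure equation at $t = 1$ then immediately forces $\theta_{T1} = \theta^\star_{T1}$.

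For the inductive step, assume $\theta_{Ts} = \theta^\star_{Ts}$ and $\theta_{Cs} = \theta^\star_{Cs}$ for every $s < t$, so that $\overline{F}_{\theta_T}(s) = \overline{F^\star}(s)$ and $\overline{G}_{\theta_C}(s) = \overline{G^\star}(s)$ for $s \leq t - 1$. Every ratio in $A_t$ and $C_t$ indexed by $s < t$ then collapses to $1$, and the identity $\theta^\star_{Ct} + \overline{G^\star}(t) = \overline{G^\star}(t-1)$ (plus its $F$ analogue) lets us fold the remaining $s = t$ term into the rest. Introducing the normalized variables $\alpha := \overline{F}_{\theta_T}(t)/\overline{F^\star}(t)$ and $\beta := \overline{G}_{\theta_C}(t)/\overline{G^\star}(t)$, both player equations reduce to the compact coupled system
\begin{equation*}
\alpha\,[\,1 + (\beta - 1)\, F^\star(t)\,] = 1, \qquad \beta\,[\,1 + (\alpha - 1)\, G^\star(t-1)\,] = 1.
\end{equation*}
Eliminating $\alpha$ factors the residual equation as $(\beta - 1)\bigl[\,\overline{F^\star}(t) + \beta\, F^\star(t)\, \overline{G^\star}(t-1)\,\bigr] = 0$. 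Positivity together with $\theta^\star_{Tt}, \theta^\star_{Ct} > 0$ imply $\overline{F^\star}(t) > 0$ and $\overline{G^\star}(t-1) > 0$, so the bracket is strictly positive for every $\beta > 0$. Hence $\beta = 1$, and then $\alpha = 1$, closing the induction.

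The main obstacle is the algebraic uniqueness at each induction level: the two player equations at time $t$ are coupled through $\alpha$ and $\beta$, and one must rule out a spurious non-unit solution of the resulting system. This is exactly where the positivity assumption in \cref{eq:positivity}, its dual on $F^\star$, and the non-degeneracy $\theta^\star_{Tt}, \theta^\star_{Ct} > 0$ enter, ensuring that both the IPCW denominators and the bracketed factor are bounded away from zero so that no negative or degenerate root can sneak in.
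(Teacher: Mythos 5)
Your proposal is correct, and it follows the same overall skeleton as the paper's proof (per-player first-order conditions, induction over timesteps, with the censor equation at $t=1$ decoupling first), but the execution of the crucial uniqueness step is genuinely different and cleaner. The paper expands $\text{F-BS-CW}(k+1)$ and $\text{G-BS-CW}(k+1)$ into explicit polynomials in the raw parameters $x=\hat{\theta}_{T(k+1)}, y=\hat{\theta}_{C(k+1)}$, hands the resulting coupled quadratic system to \emph{Mathematica}, obtains two roots, and then rules out the spurious one by showing it forces $\sum_{i\le k}\theta^\star_{Ci}+\hat{\theta}_{C(k+1)}>1$. You instead reparametrize by the survival ratios $\alpha=\overline{F}_{\theta_T}(t)/\overline{F^\star}(t)$, $\beta=\overline{G}_{\theta_C}(t)/\overline{G^\star}(t)$, which collapses the system to $\alpha[1+(\beta-1)F^\star(t)]=1$ and $\beta[1+(\alpha-1)G^\star(t-1)]=1$ (I verified both reductions, including the asymmetry $F^\star(t)$ versus $G^\star(t-1)$ coming from $\overline{G}(s^-)=P(C\ge s)$ versus $\overline{F}(s)=P(T>s)$), and the elimination factors by hand as $(\beta-1)[\overline{F^\star}(t)+\beta F^\star(t)\overline{G^\star}(t-1)]=0$ with a manifestly positive bracket. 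This buys a human-checkable derivation, makes explicit exactly where positivity and $\theta^\star_{Tt},\theta^\star_{Ct}>0$ are used, and even subsumes the base case (set $G^\star(0)=0$ so $\beta=1$ immediately). The only caveats, shared with the paper, are that you implicitly need $\overline{F}_{\theta_T}(t)>0$ and $\overline{G}_{\theta_C}(t)>0$ (i.e., $\beta>0$) so that the IPCW denominators are valid and the bracket argument excludes a negative root; you flag this at the end, and the paper makes the analogous assumption when it discards the root with $q+y>1$.
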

The proof is in \cref{appsec:brierstationary}. To illustrate this,
\cref{fig:gradplot} shows that,
unlike the minimization in 
\cref{fig:minplot},
the \gls{ipcw} \gls{bs} game moves to the correct solution at its unique stationary point.

\section{Experiments \label{sec:experiments}}

We run experiments on a simulation with conditionally Gamma 
times, a semi-simulated survival dataset based on \acrshort{mnist}, several sources of cancer data, and data on critically-ill hospital patients.

\paragraph{Losses.}
We build categorical models in 3 ways: the standard \gls{nll} method
(\cref{eq:failurecensornll}), the \gls{ipcw} \gls{bs} game and the negative \gls{ipcw} \gls{bll} game.

\paragraph{Metrics.}
For these models we report \gls{bs} (uncensored for simulations and \gls{km}-weighted for real data), \gls{bll} (also uncensored or weighted), concordance which measures the proportion of pairs whose predicted risks are ordered correctly \citep{harrell1996multivariable},
and \acrshort{nll}. We report mean and standard deviation of the metrics over 5 different seeds. In all plots, the middle solid line represents the mean and the shaded band represents the standard deviation.

\paragraph{Model Description.} In all experiments except for \acrshort{mnist}, we use
a 3-hidden-layer ReLU network that outputs $20$ categorical bins
(more bin choices in \cref{appsec:numbins}).
For \acrshort{mnist} we first use a small convolutional network and follow with the same fully-connected network.

Model and training details including learning rate and model selection can be found in \cref{appsec:experiments}.
\subsection{Simulation Studies \label{sec:gamma}} 

\paragraph{Data.} We draw $X \in \mathbb{R}^{32} \sim \mathcal{N}(0,10I)$
and $T \sim \text{Gamma}(\text{mean}=\mu_t)$
where $\mu_t$ a log-linear function of $X$.
The censoring times are also gamma
with mean $0.9*\mu_t$. Both distributions have constant variance $0.05$. It holds that $T \indep C \g X$. Each random seed draws a new dataset.

\paragraph{Results.} 
\cref{fig:gamma}
demonstrates that the games optimize the true uncensored \gls{bs}, and, though more slowly with respect to training size, log-likelihood does too. The games have better test-set performance on all metrics for small training size. All methods converge on similar performance when there is enough data (though \textit{enough} is highly-dependent on dimensionality and model class).
\begin{figure}
    \centering
    \subfigure[Uncensored \acrshort{bs}]{
        \includegraphics[height=25mm]{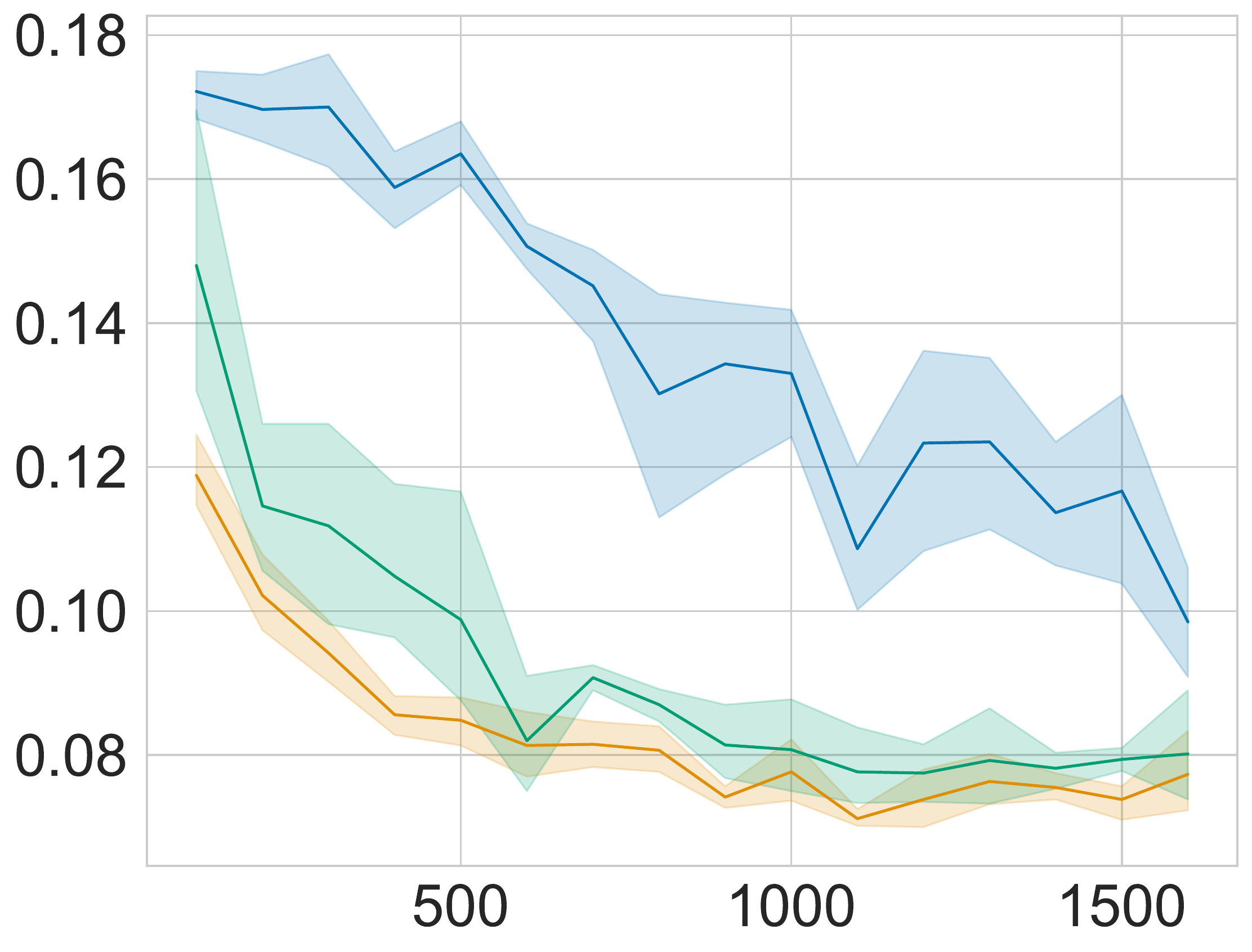}
    }
    \subfigure[Uncensored Neg \acrshort{bll}]{
        \includegraphics[height=25mm]{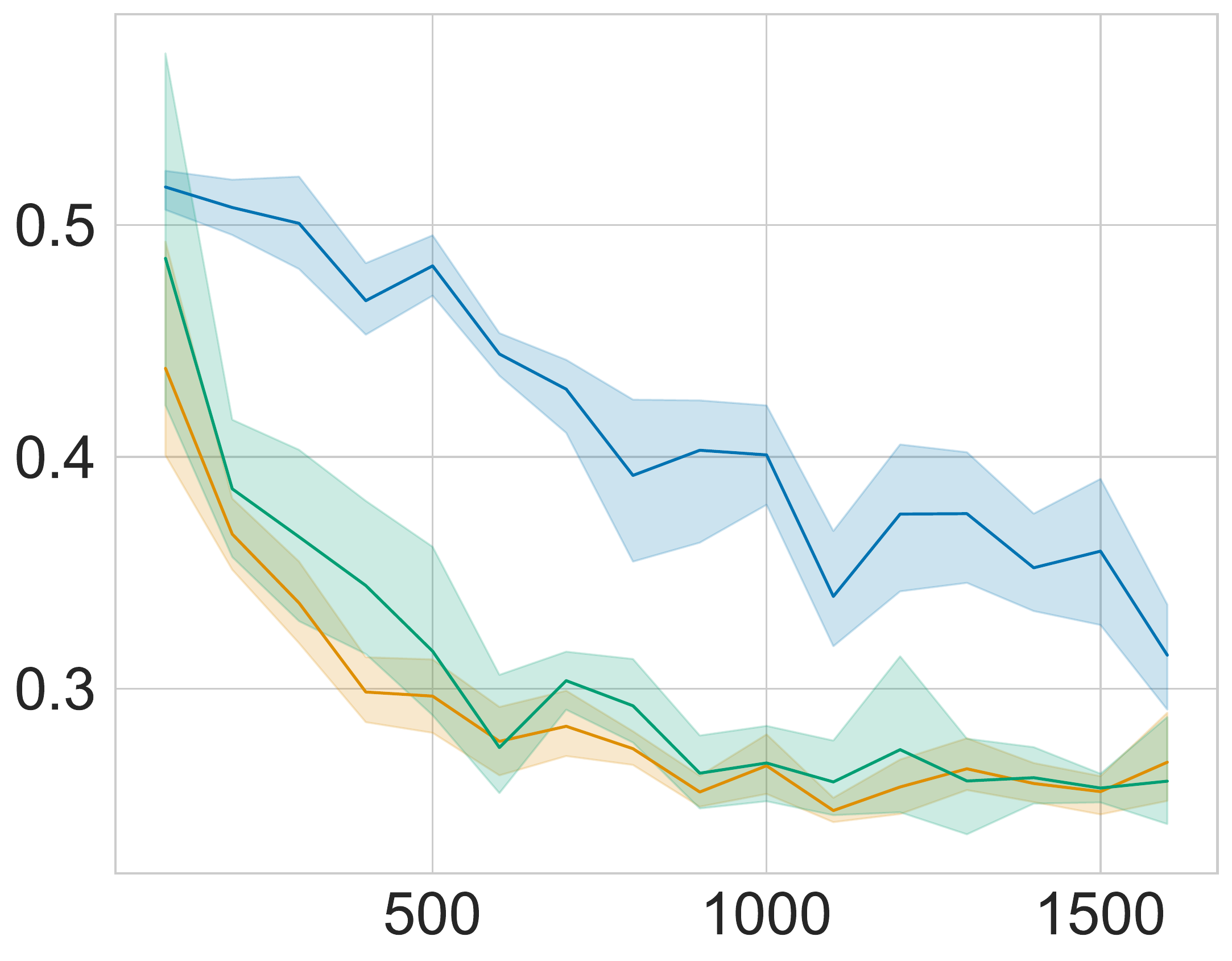}
    }
    \subfigure[Concordance]{
        \includegraphics[height=25mm]{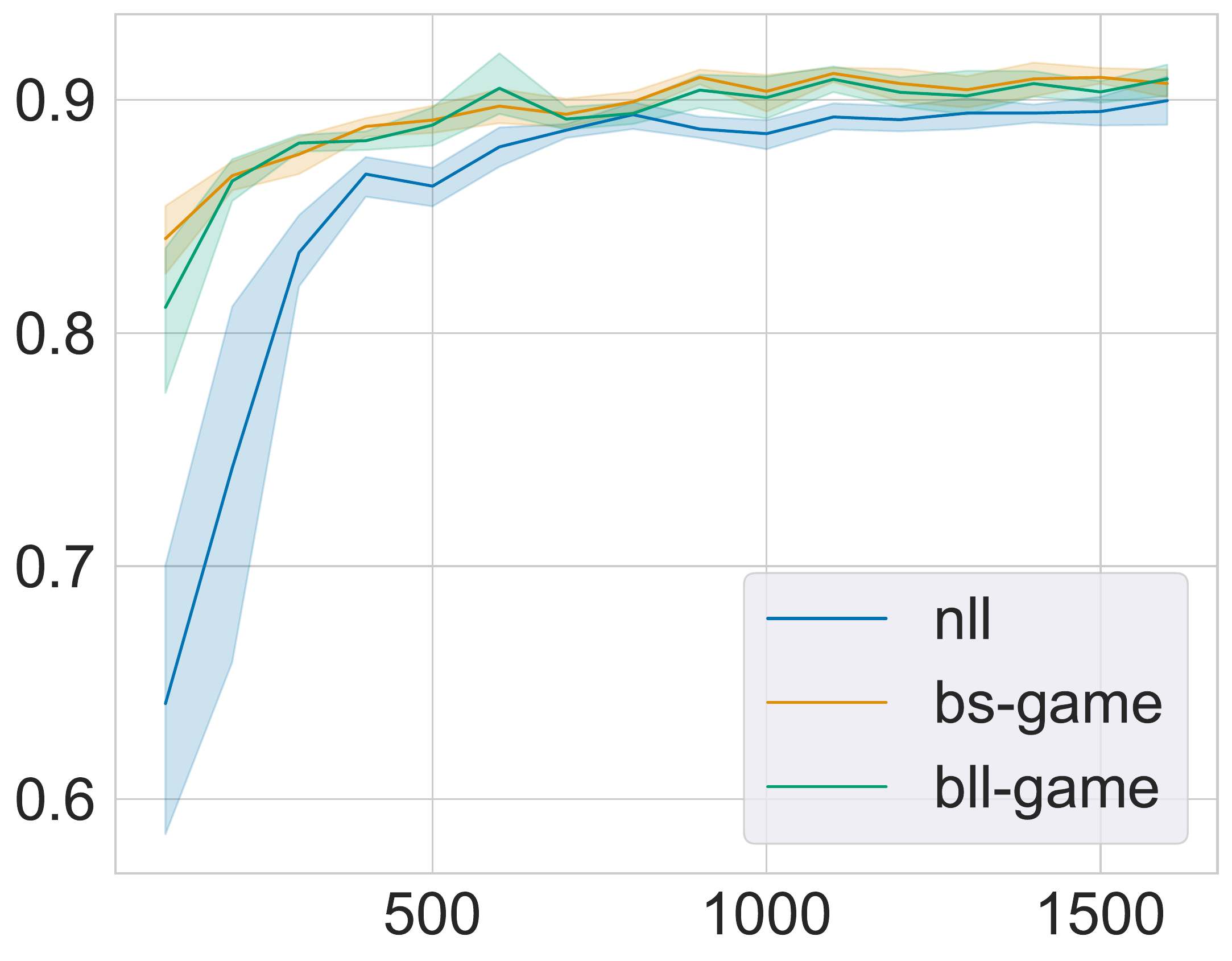}
    }
   \subfigure[Categorical \acrshort{nll}]{
        \includegraphics[height=25mm]{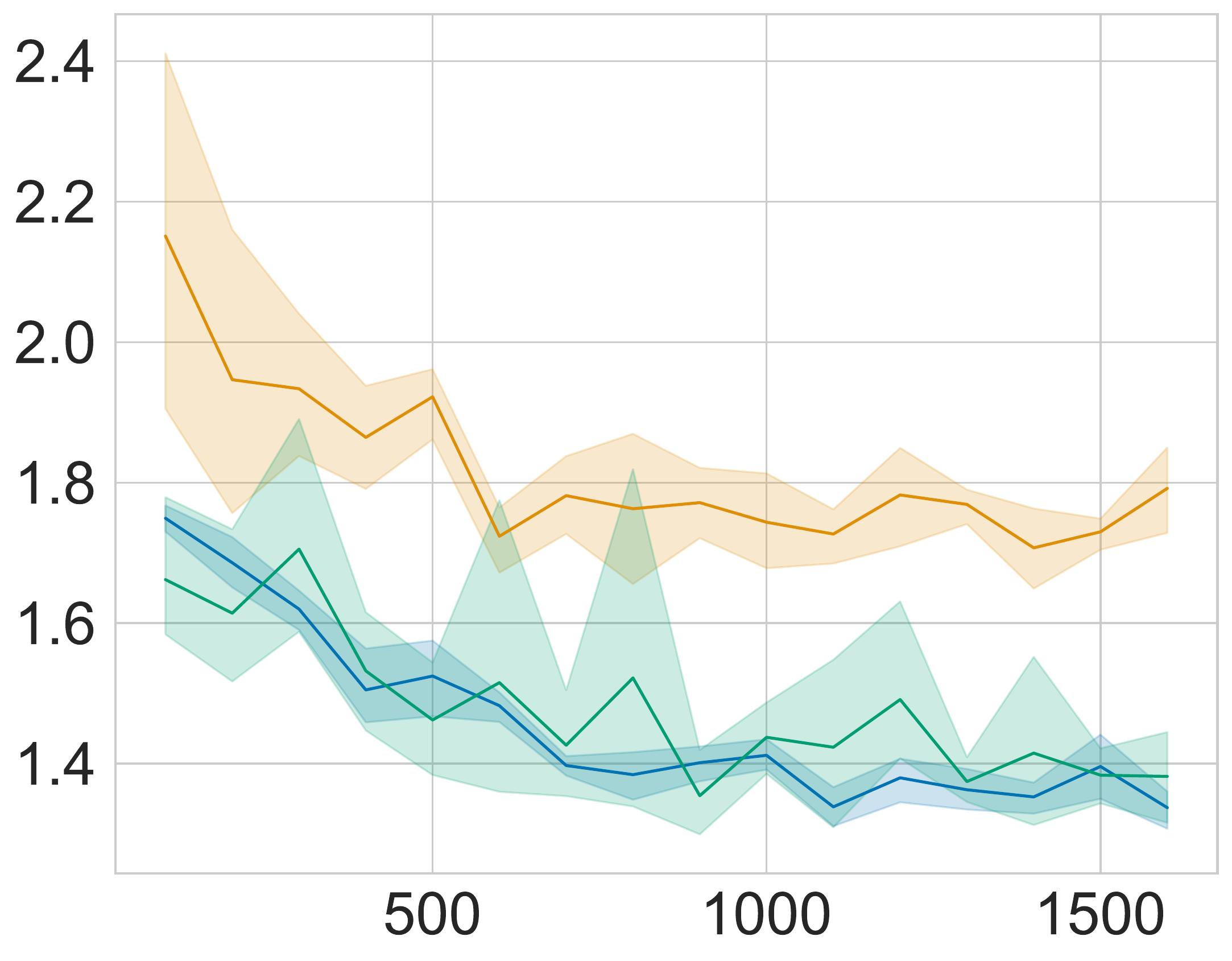}
    }
    \caption{Test set evaluation metrics (y-axis) on the Gamma simulation versus number of training points (x-axis) for three methods. 
    Each point in the plot represents the evaluation metric value of a fully trained model with that number of training points. Lower is better for all the metrics except for concordance. \label{fig:gamma}
 }
\end{figure}

\paragraph{Calibration.} We include a qualitative 
investigation of 
model calibration on the gamma simulation trained with 2000 datapoints. \cref{fig:calibrationplot} shows that the \gls{bs} game achieves near-perfect calibration while the two likelihood-based methods suffer some error. This is expected since likelihood-based methods do not directly optimize calibration while
\gls{bs} does (\cref{sec:timedependentloss}).
 \begin{figure}
    \centering
    \includegraphics[width=70mm]{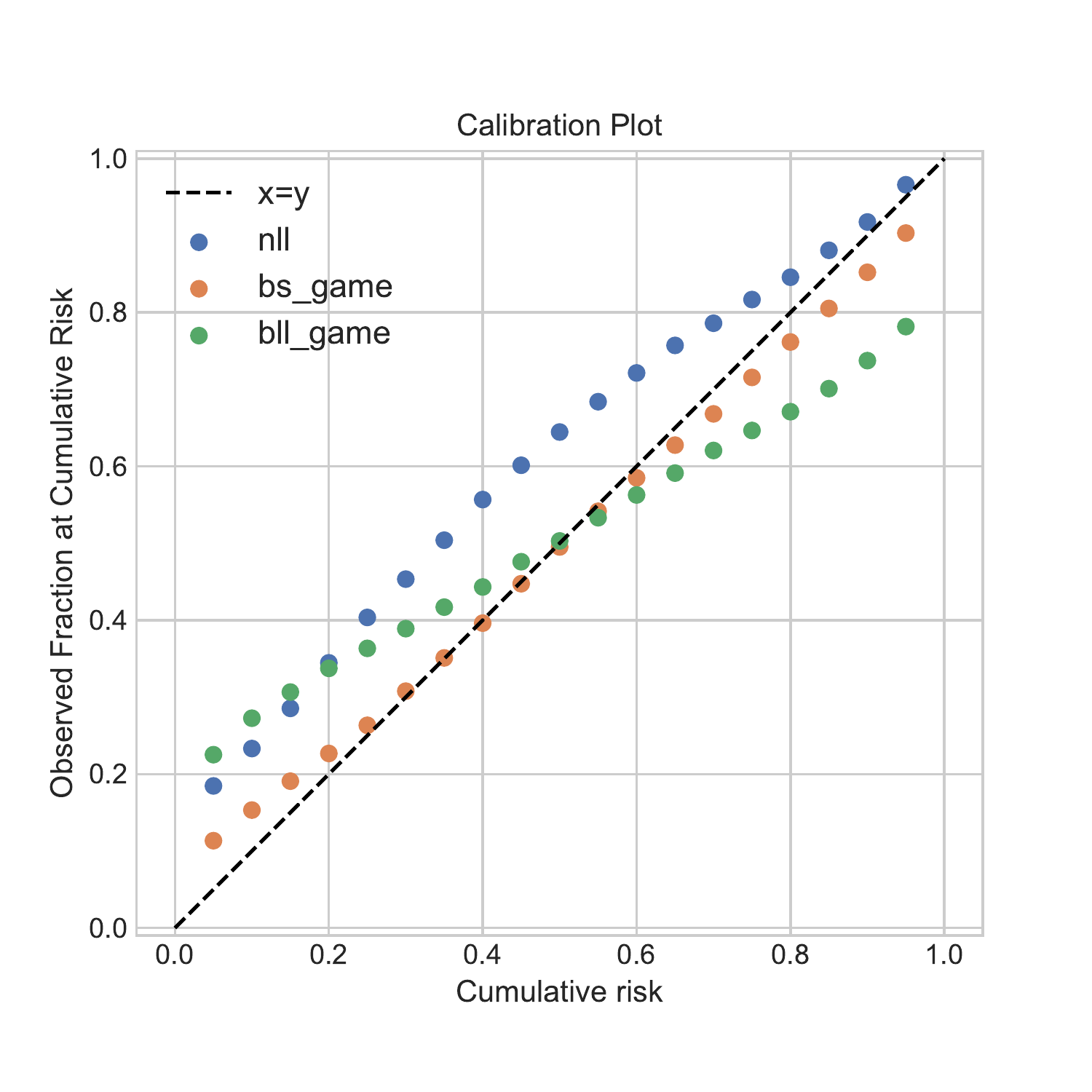}
    \caption{Calibration curves \citep{avati87countdown} comparing game-training and \gls{nll}-training.  \label{fig:calibrationplot}.}
\end{figure}

\subsection{Semi-simulated studies}

\paragraph{Data.} Survival-\acrshort{mnist}
\citep{gensheimer2019scalable,polsterl2019survivalmnist}
draws times  conditionally on \acrshort{mnist} label $Y$. This means digits define risk groups and $T \indep X \mid Y$. Times within a digit are i.i.d. The model only sees the image pixels $X$ as covariates so it must learn to classify digits (risk groups) to model times. 
We follow \cite{goldstein2020x} and use Gamma times. $T \sim \text{Gamma}(\text{mean}=10*(Y+1))$. We set the variance constant to $0.05$. Lower digit labels $Y$ yield earlier event times. $C$ is drawn similarly but with mean $9.9*(Y+1)$. 
Each random seed draws a new dataset.

\paragraph{Results.} This experiment demonstrates
that better \gls{nll} does not correspond to better performance on \gls{bs}, \gls{bll}, and concordance. Similarly to the previous experiment,
\cref{fig:mnist} shows that game methods attain better uncensored test-set \gls{bs} and \gls{bll} on survival-\acrshort{mnist} than likelihood-based training does. The games likewise attain higher concordance. \Gls{nll} training performs better at the metric it directly optimizes. This experiment also establishes that it is possible to optimize through deep convolutional models with batch norm and pooling using the game training methods.
\begin{figure}
    \centering
    \subfigure[Uncensored \acrshort{bs}]{
        \includegraphics[height=24mm]{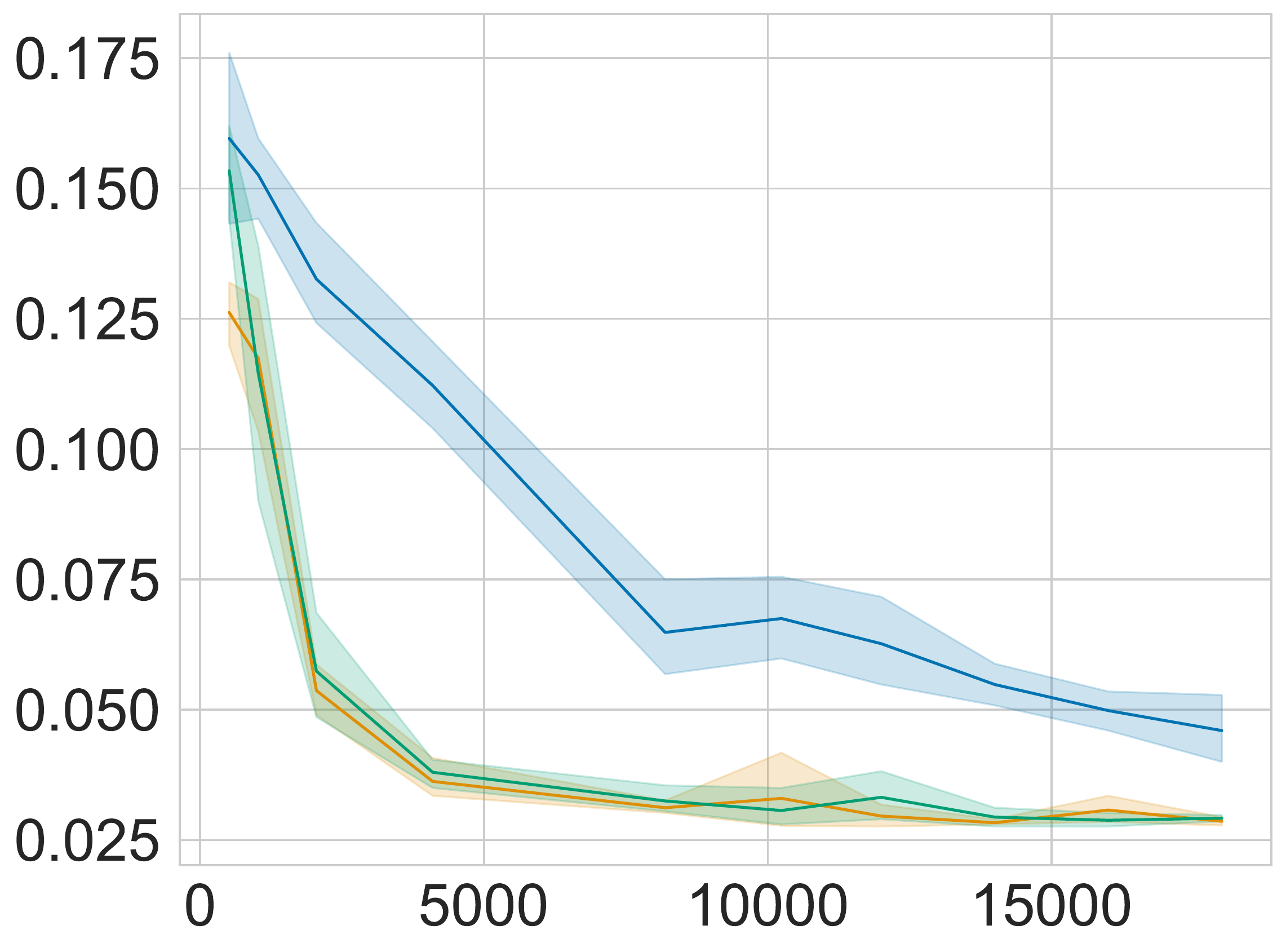}
    }
    \subfigure[Uncensored Neg \acrshort{bll}]{
        \includegraphics[height=24mm]{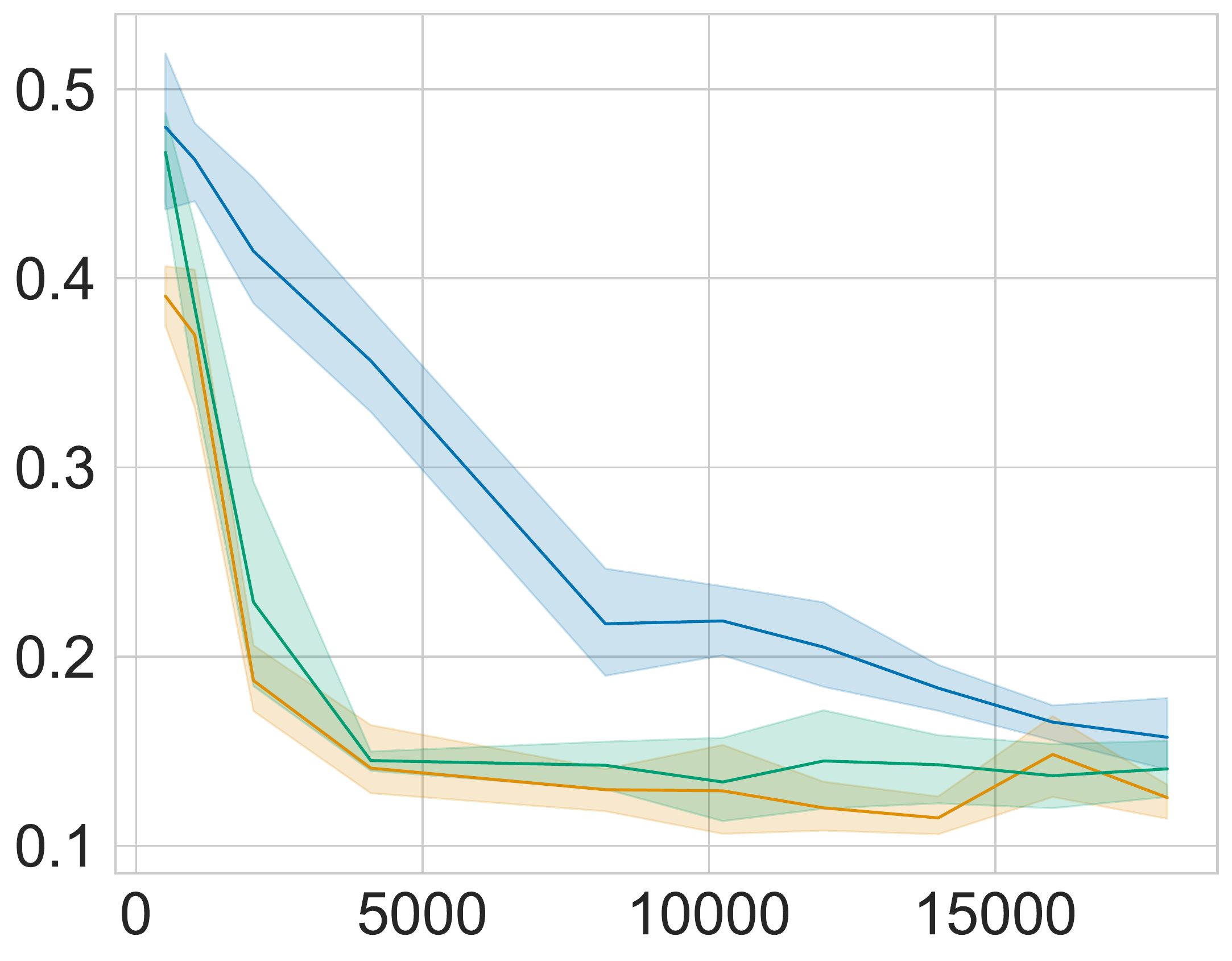}
    }
        \subfigure[Concordance]{
        \includegraphics[height=24mm]{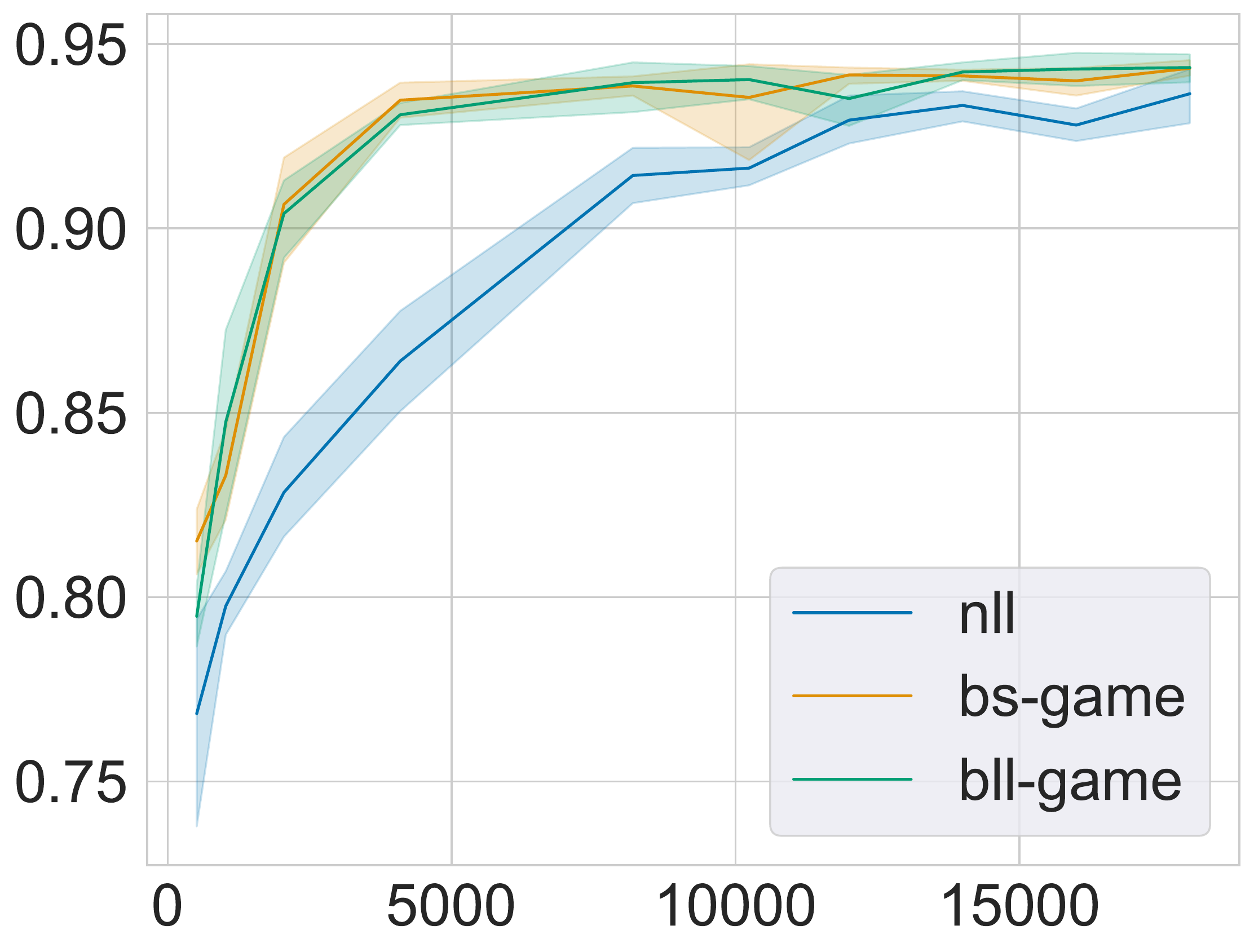}
    }
       \subfigure[Categorical \acrshort{nll}]{
        \includegraphics[height=24mm]{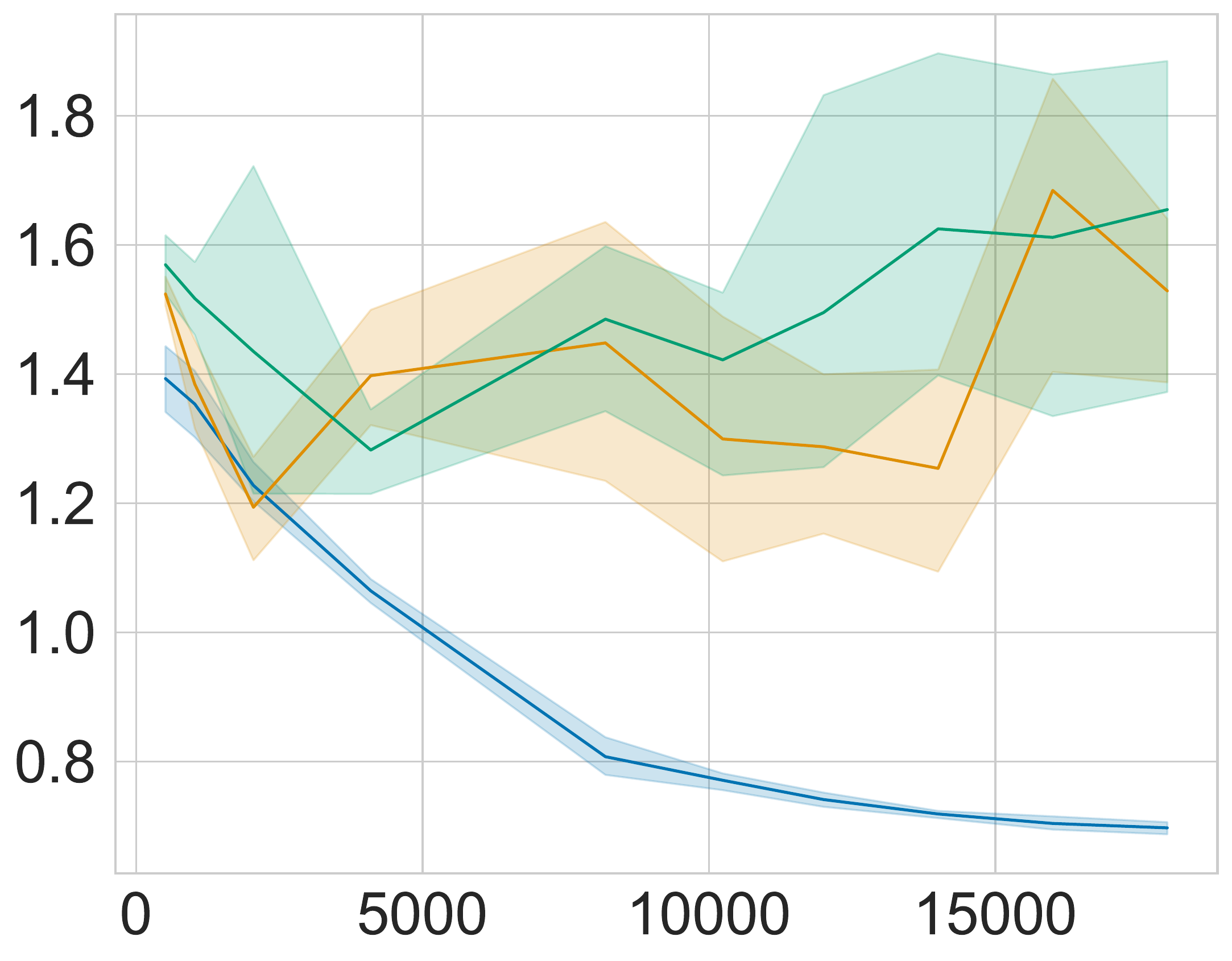}
    }
    \caption{Test set evaluation metrics (y-axis) on survival-\acrshort{mnist} versus number of training points (x-axis) for three methods. Each point in the plot represents the evaluation metric value of a fully trained model with that number of training points. Lower is better for all the metrics except for concordance. 
    \label{fig:mnist}}
\end{figure}

\subsection{Real Datasets}

\paragraph{Data.} We use
several datasets 
used in recent papers \citep{chen2020deep,kvamme2019time} and available in the python packages DeepSurv \citep{katzman2018deepsurv} and PyCox \citep{kvamme2019time}, and the R Survival \citep{therneau2021survival}.
The datasets are:
\begin{itemize}
    \item \gls{metabric} \citep{curtis2012genomic}
        \item  \gls{rott} \citep{foekens2000urokinase}
    and \gls{gbsg} \citep{schumacher1994randomized} combined into one dataset (\acrshort{rott-gbsg})
    \item \gls{support} \citep{knaus1995support} which includes severely ill hospital patients
\end{itemize}
For more description see \cref{appsec:experiments}.
For real data, there is no known ground truth for the censoring distribution,
which means evaluation requires assumptions. Following the experiments in \cite{kvamme2019time}, we assume that censoring is marginal estimate with \gls{km} to evaluate models.\footnote{This is also the route taken in the R packages Survival~\citep{therneau2021survival},
PEC~\citep{gerds2012pec}, and riskRegression~\citep{gerds2020package}.}

\paragraph{Results.}
On \gls{metabric}, games
attain lower (better) \gls{km}-weighted \gls{bs} and \gls{bll}
than \gls{nll}-training when the number of datapoints is small, and have better concordance and \gls{nll} though they do not directly optimize them. 
As data size increases,
all methods converge to similar performance. On \acrshort{rott-gbsg}, the trend is similar: games optimize the \gls{bs} and \gls{bll}
more rapidly as a function of training set size than \gls{nll}-training does.
Again, all methods converge to similar performance in all metrics when the number of datapoints is large enough. All methods perform similarly on \gls{support}.

\paragraph{Caveats.} First, though popular, these survival datasets are low-dimensional, so any of the objectives can perform well on the metrics with just several hundred points. We see that this is distinct from \acrshort{mnist},  where thousands of points were required to improve performance. Second, 
though possible, it may not be true that censoring is marginal on these datasets, 
which would mean that the \gls{bs} and \gls{bll} results only have their interpretation conditional on a particular set of covariates. Our method is also correct when the censoring is marginal though.
Lastly, no method is stable for all metrics, for all training sizes, on all seeds for all datasets.

\begin{figure}[t]
    \centering
    \subfigure[\gls{km}-weighted \gls{bs}]{
        \includegraphics[height=25mm]{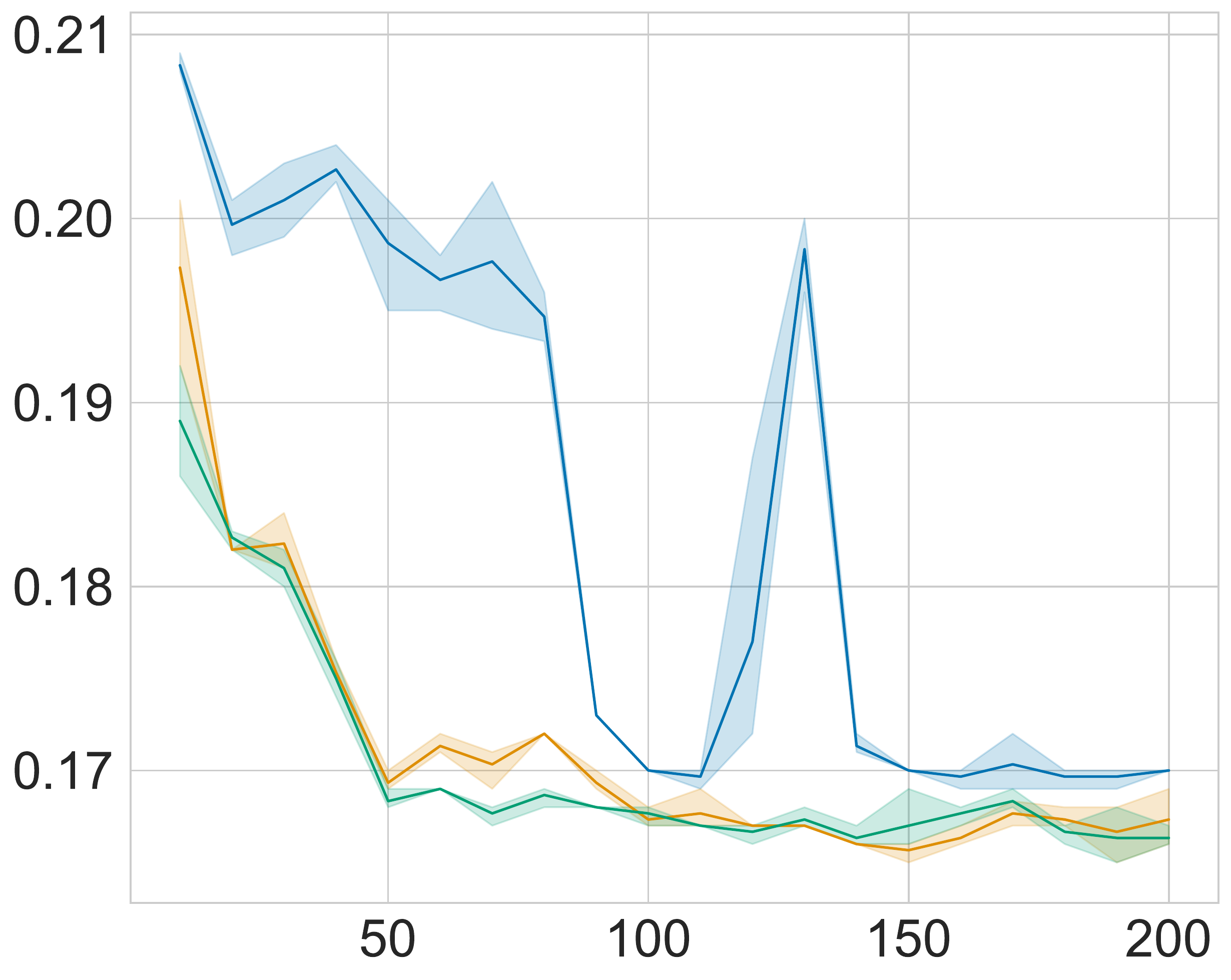}
    }
    \subfigure[\gls{km}-weighted Neg \gls{bll}]{
        \includegraphics[height=25mm]{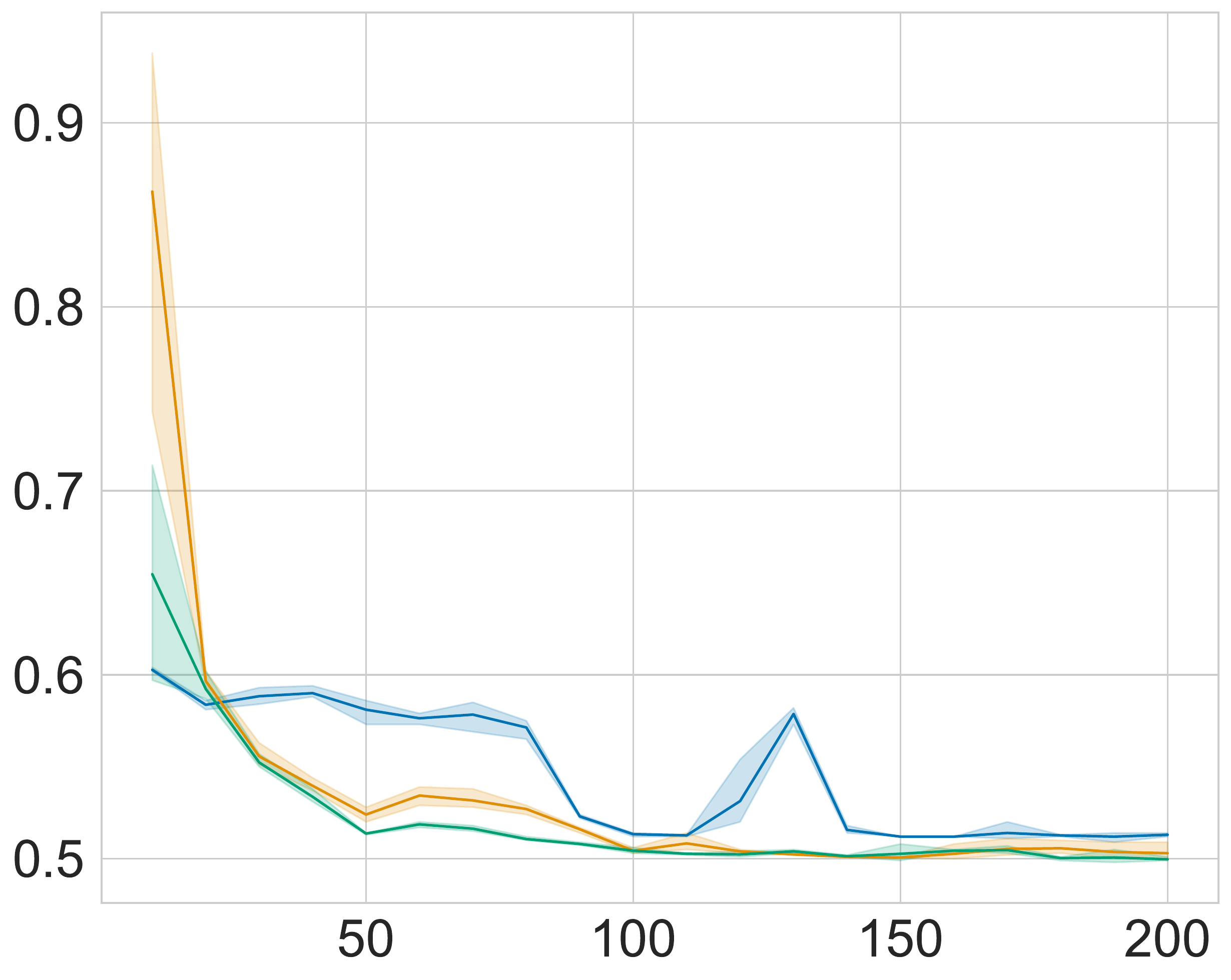}
    }
        \subfigure[concordance]{
        \includegraphics[height=25mm]{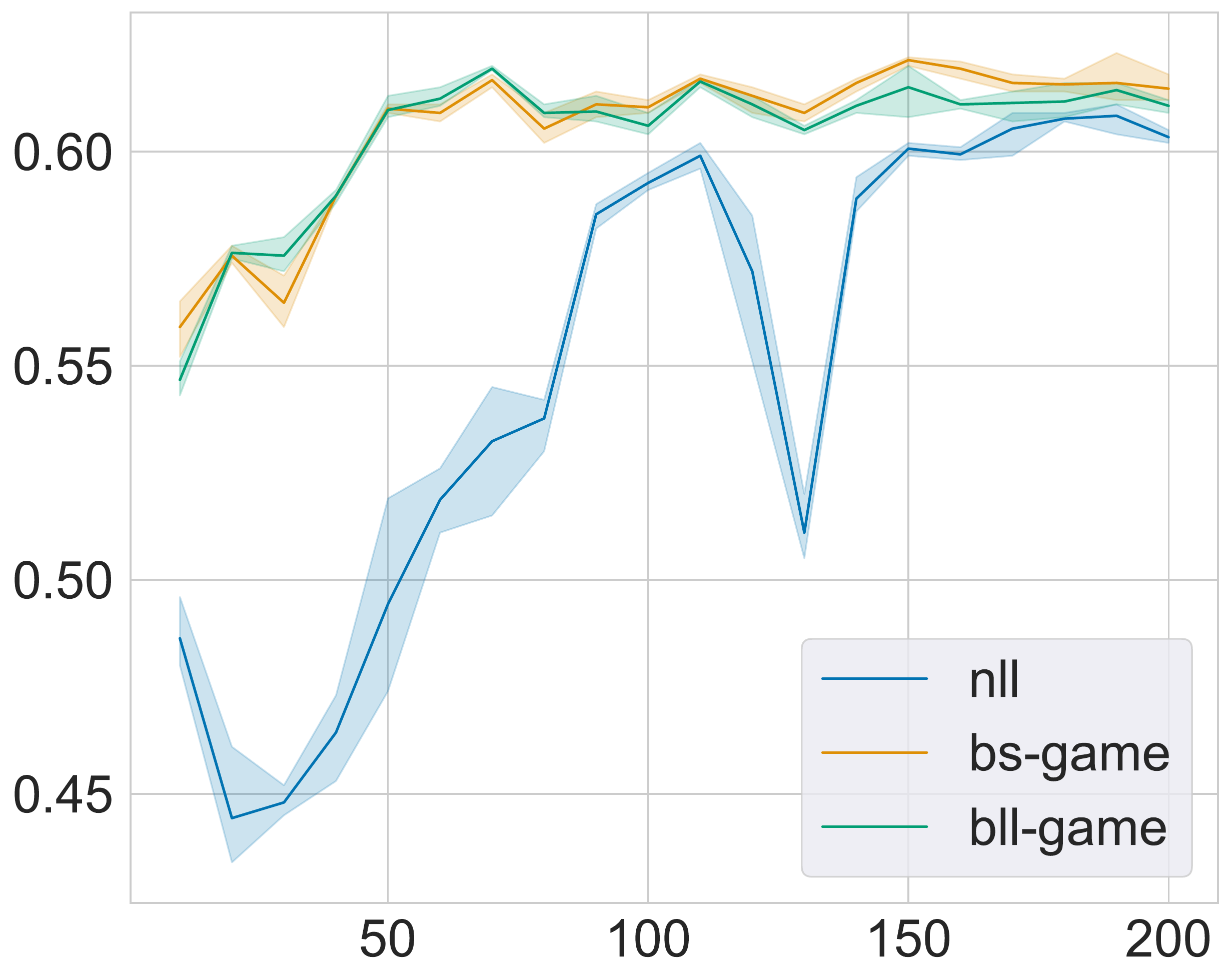}
    }
       \subfigure[categorical \acrshort{nll}]{
        \includegraphics[height=25mm]{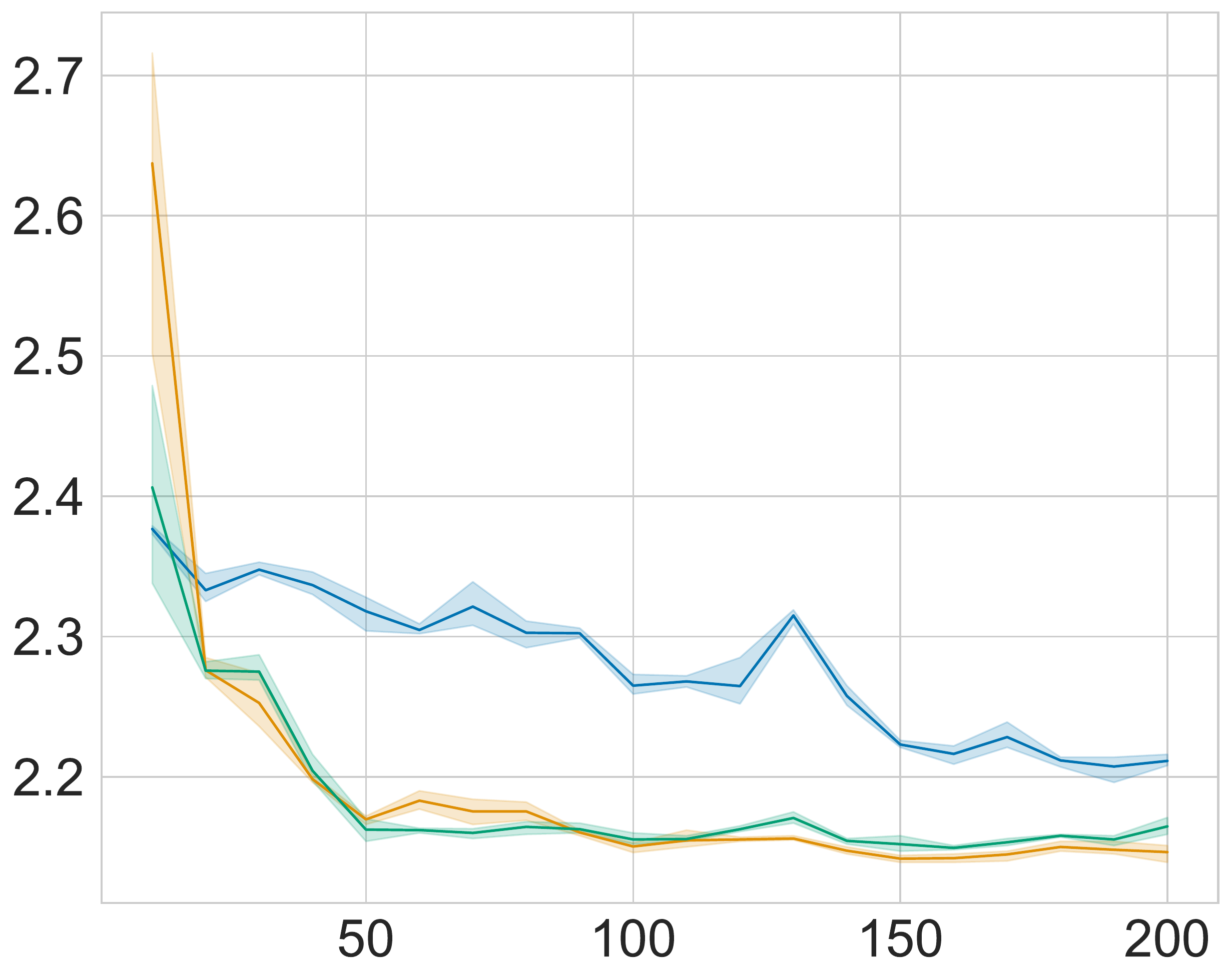}
    }
    \caption{
    Test set evaluation metrics (y-axis) on \acrshort{metabric} versus number of training points (x-axis) for three methods. Each point in the plot represents the evaluation metric value of a fully trained model with that number of training points. Lower is better for all the metrics except for concordance. 
    \label{fig:metabric}}
\end{figure}

\begin{figure}[h!]
    \centering
    \subfigure[\gls{km}-weighted \gls{bs}]{
        \includegraphics[height=25mm]{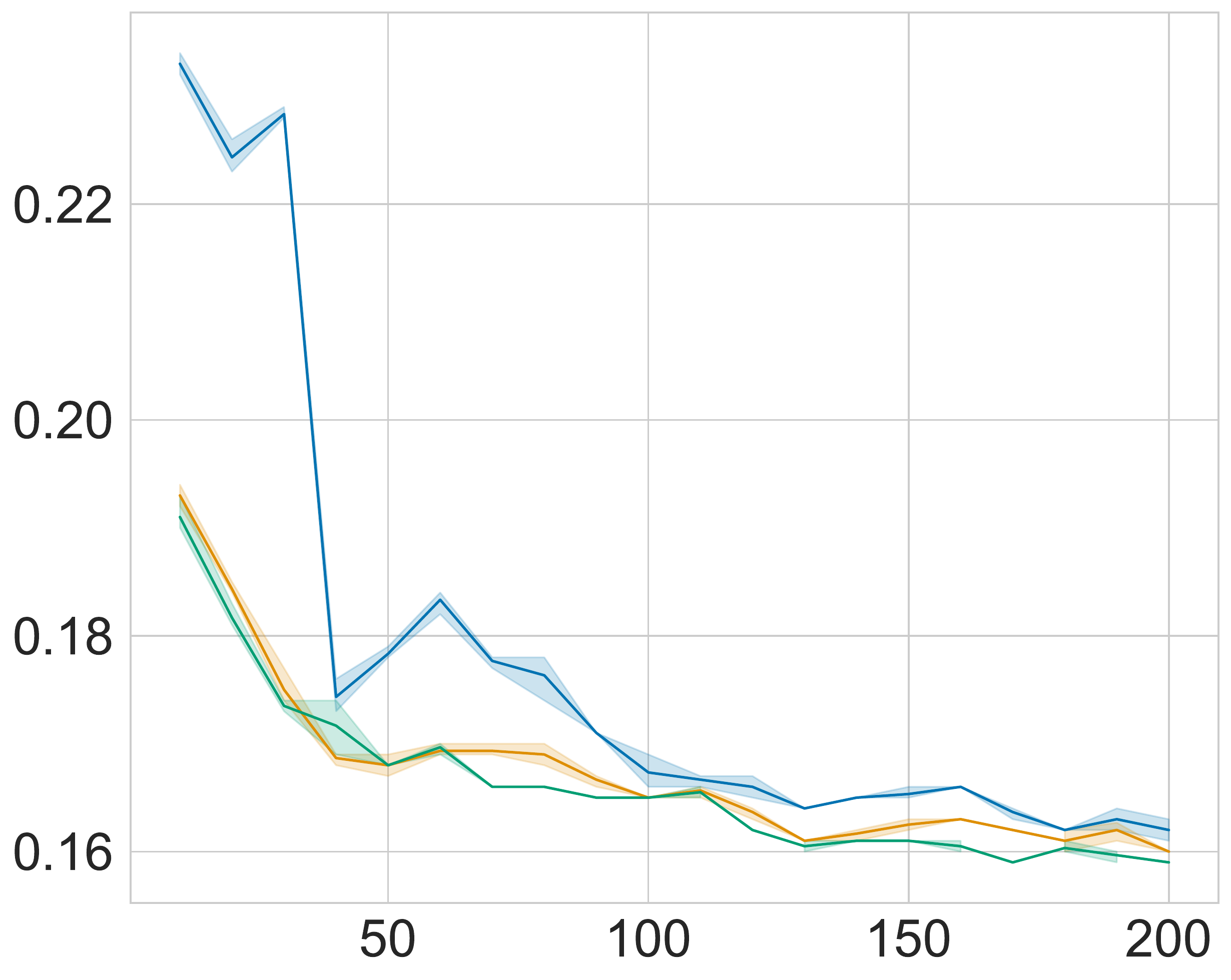}
    }
    \subfigure[\gls{km}-weighted Neg \gls{bll}]{
        \includegraphics[height=25mm]{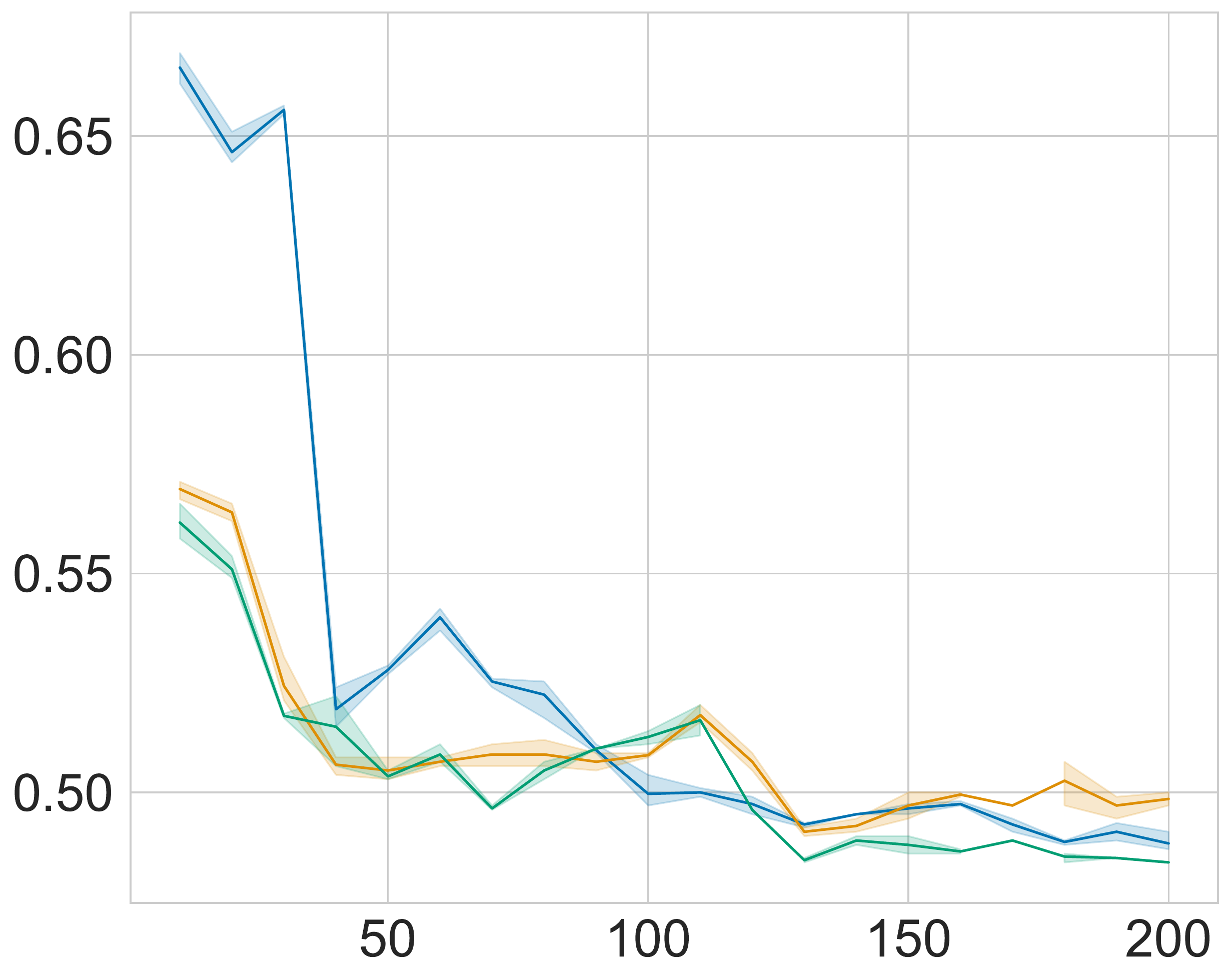}
    }
        \subfigure[concordance]{
        \includegraphics[height=25mm]{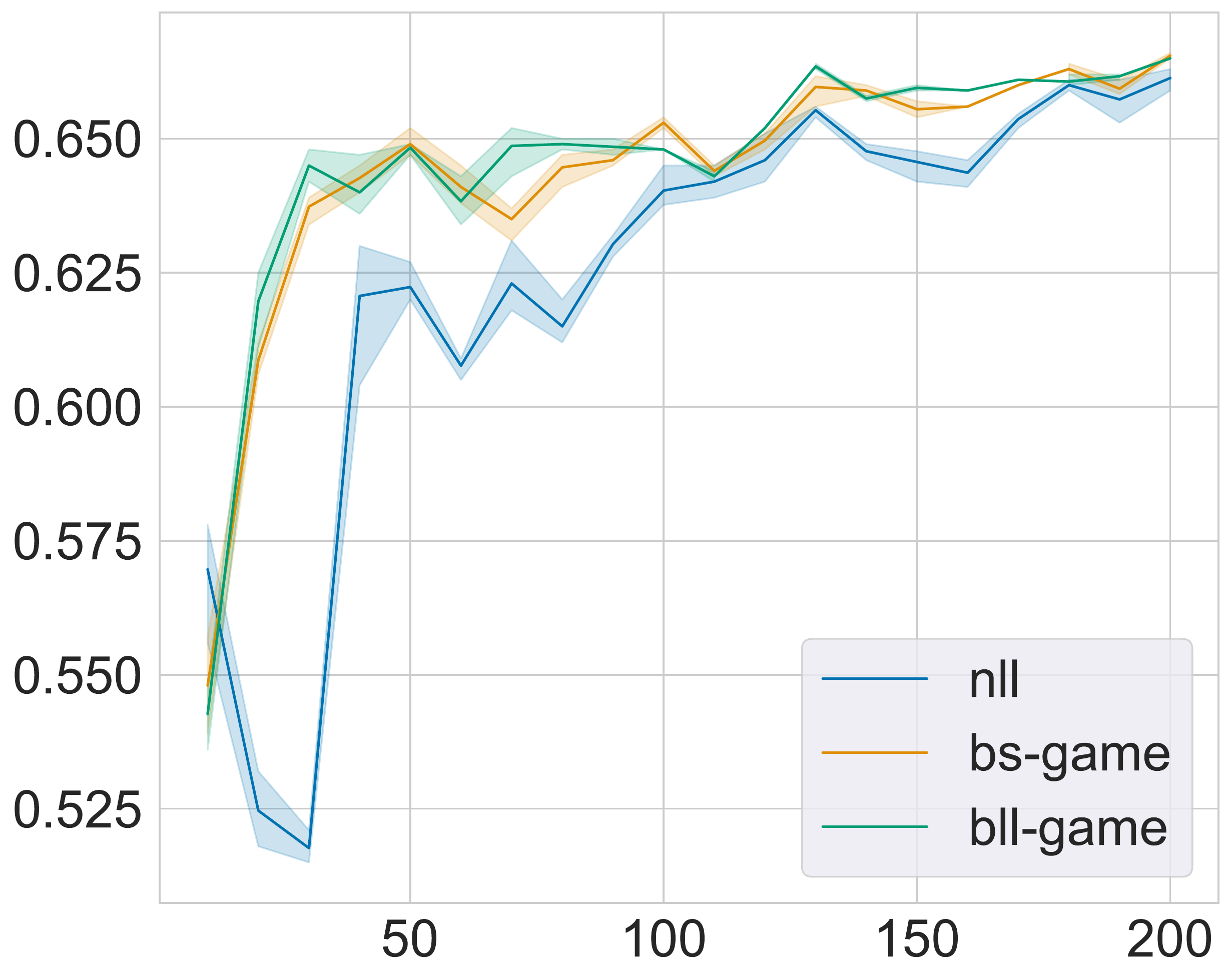}
    }
       \subfigure[categorical \acrshort{nll}]{
        \includegraphics[height=25mm]{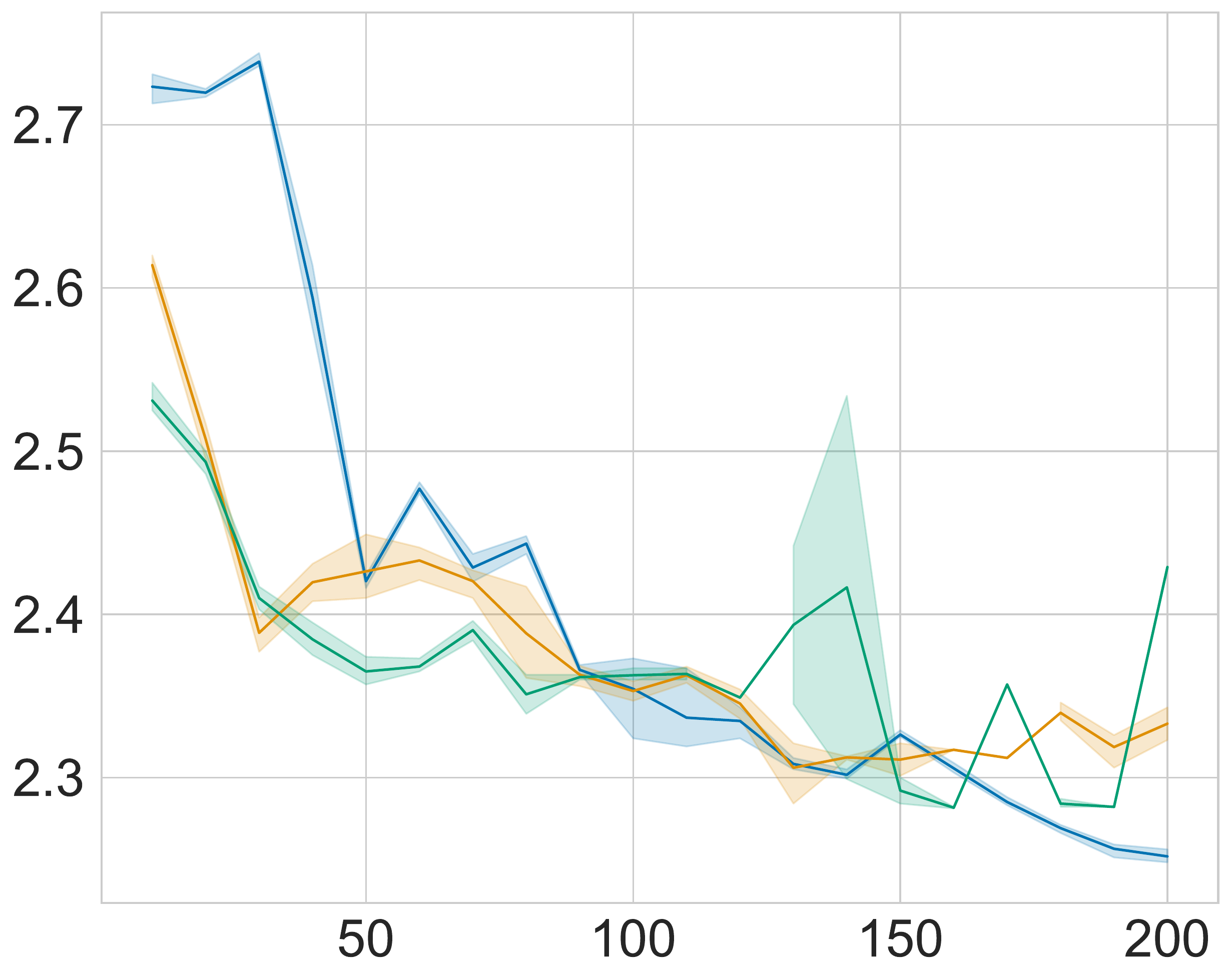}
    }
    \caption{
    Test set evaluation metrics (y-axis) on \acrshort{rott-gbsg} versus number of training points (x-axis) for three methods. Each point in the plot represents the evaluation metric value of a fully trained model with that number of training points. Lower is better for all the metrics except for concordance. 
    \label{fig:gbsg}}
\end{figure}

\begin{figure}[h!]
    \centering
    \subfigure[\gls{km}-weighted \gls{bs}]{
        \includegraphics[height=25mm]{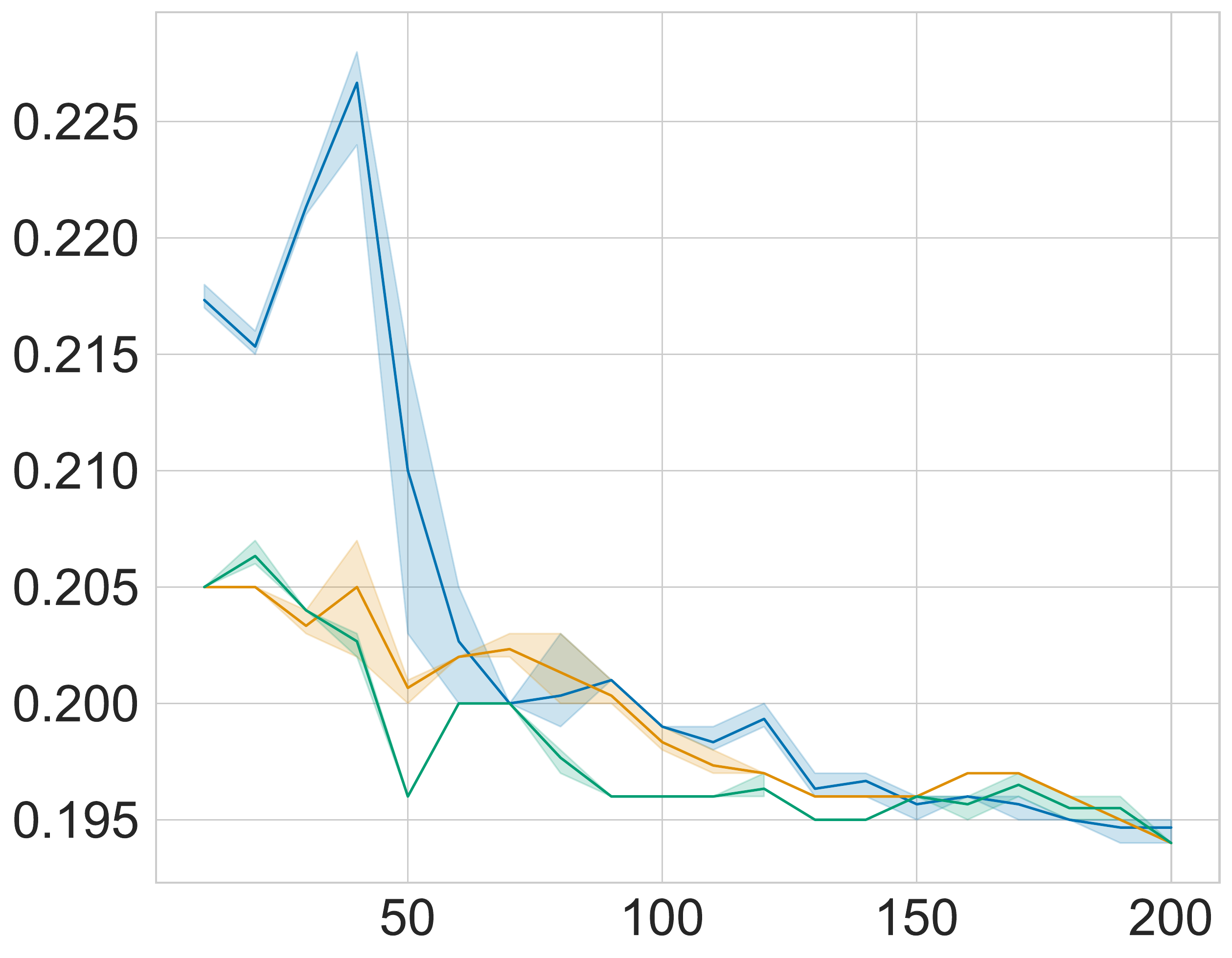}
    }
    \subfigure[\gls{km}-weighted Neg \gls{bll}]{
        \includegraphics[height=25mm]{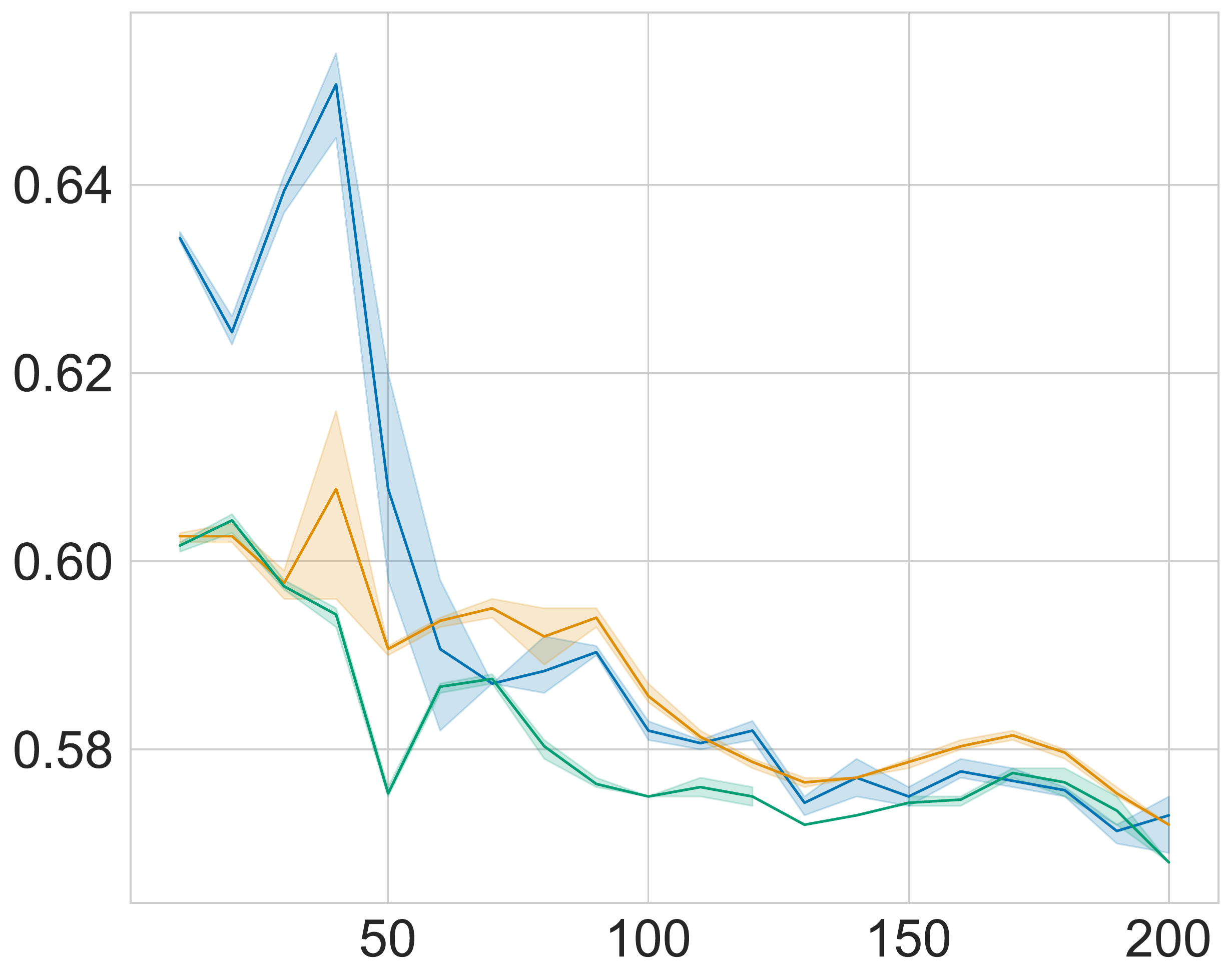}
    }
        \subfigure[concordance]{
        \includegraphics[height=25mm]{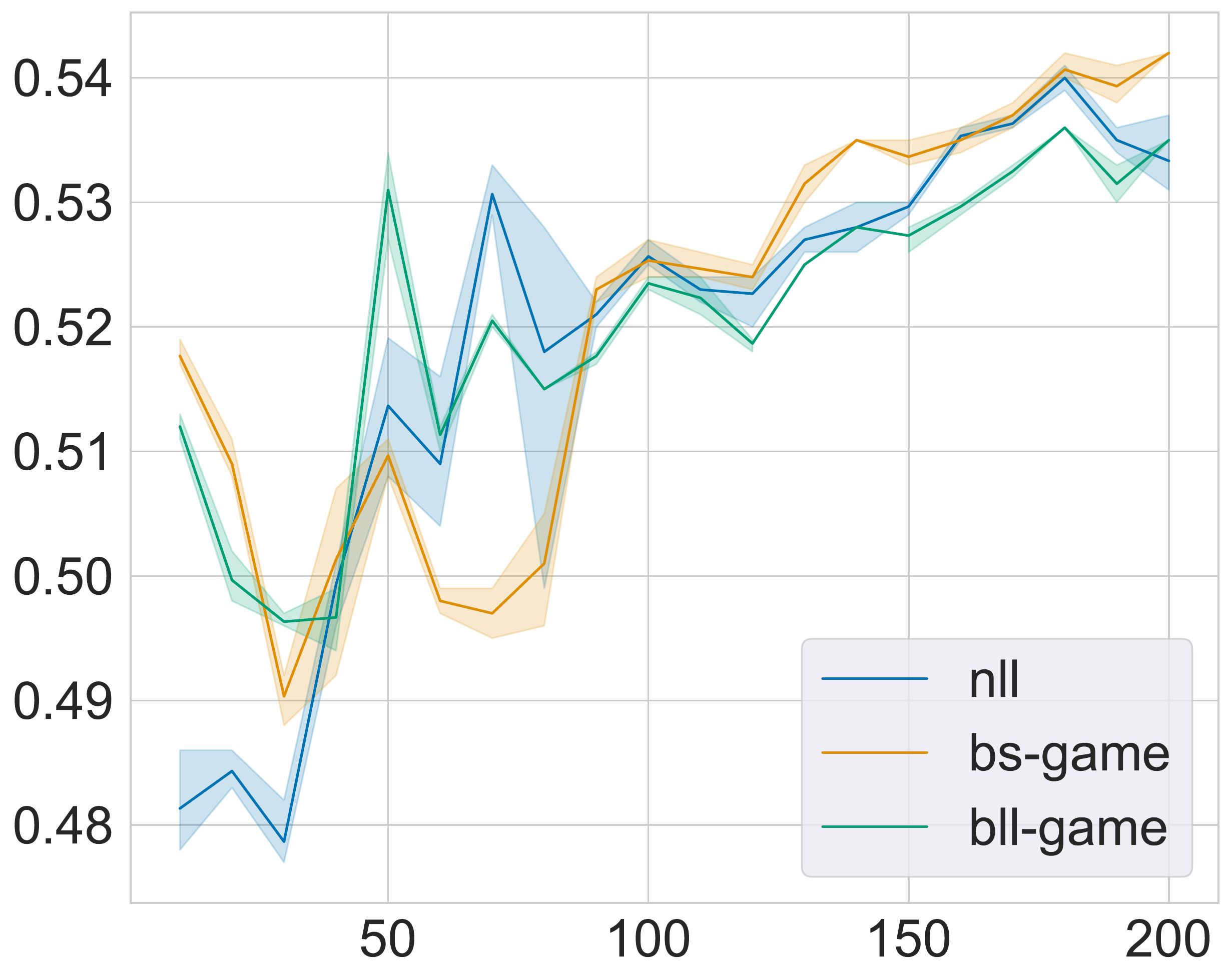}
    }
       \subfigure[categorical \acrshort{nll}]{
        \includegraphics[height=25mm]{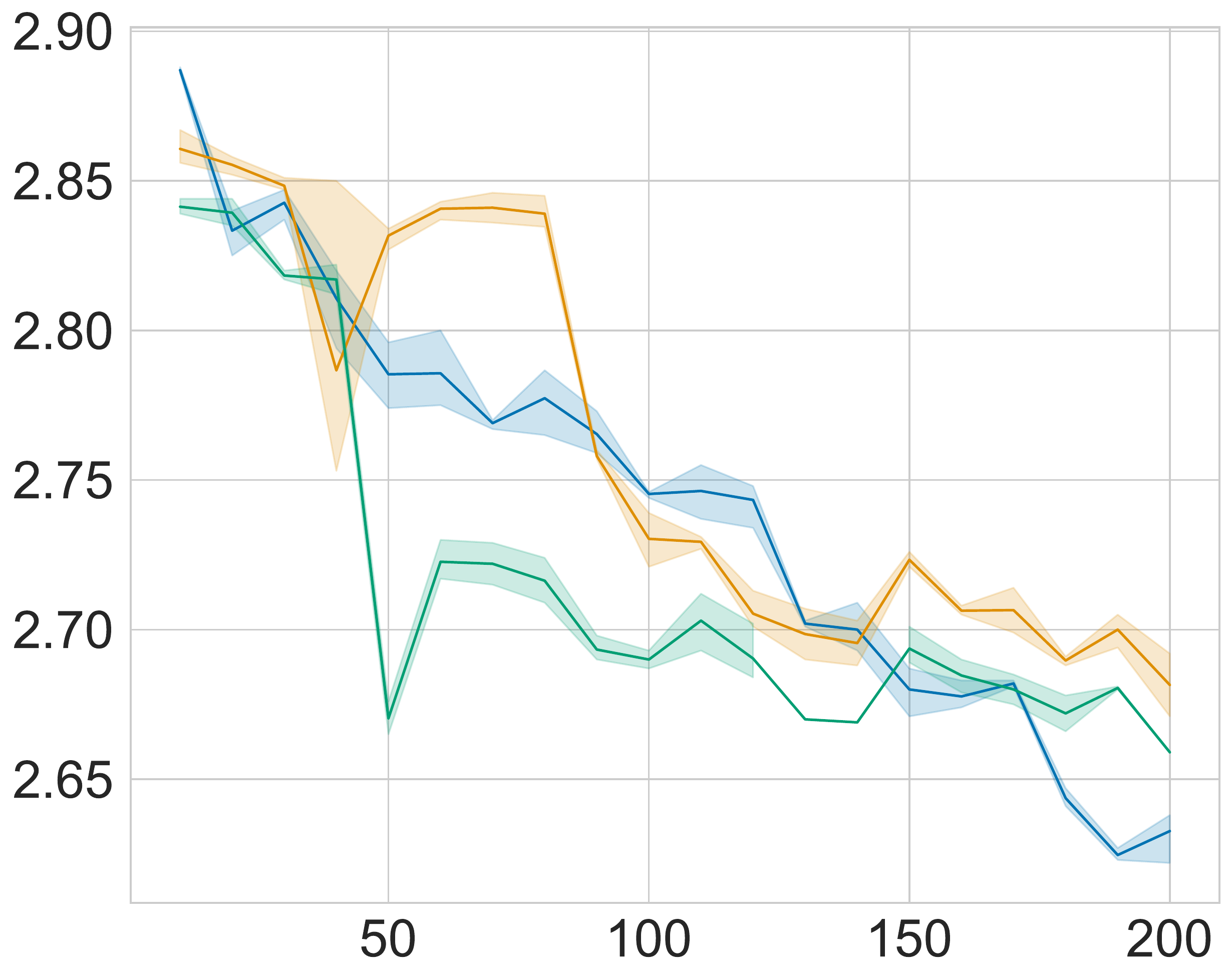}
    }
    \caption{
    Test set evaluation metrics (y-axis) on \acrshort{support} versus number of training points (x-axis) for three methods. Each point in the plot represents the evaluation metric value of a fully trained model with that number of training points. Lower is better for all the metrics except for concordance. 
    \label{fig:support}}
\end{figure}

\section{Related Work} 

\paragraph{Nuisance parameters.}
Under non-informative censoring, the censoring distribution is unrelated to the failure distribution, but estimating it can help improve learning the failure distribution; here, the censoring distribution is a \textit{nuisance parameter}.
Existing causal estimation methods propose two-stage procedures where the first stage estimates the nuisance-parameter (e.g. propensity score) and the second stage uses the learned nuisance-parameter as-is to define an estimator or loss function for the target parameter (causal effect). \citep{van2006targeted,van2011targeted,chernozhukov2018double, foster2019orthogonal}.
In this work, we instead show that estimating the target (failure model objective or failure model itself)  
can benefit from a \textit{coupled} estimation procedure where the nuisance parameter (censoring model) is also trained simultaneously.
The failure model needs the censoring distribution to compute \gls{bs} but censoring estimation needs the failure model, and despite this circular dependence, 
we characterize a case where the game training leads to the true data generating distributions.

\paragraph{Double Robust Censoring Unbiased Transformations.}
For functions $h$,
\cite{rubin2007doubly}
estimate conditional mean
$\E[h(T,X)|X]$ under censoring
using a double-robust estimator: given estimates of the conditional failure and censoring \glspl{cdf} $\hat{F}(t|X)$ and $\hat{G}(c|X)$,
the estimator of $\E[h(T,X)|X]$ is unbiased when either nuisance \acrshort{cdf} is correct.
However, here we are concerned with estimating a quantity to be used as a loss for learning $\hat{F}$.
We therefore presumably do not already have an estimate of $\hat{F}$ to be used in a doubly-robust estimator.

\paragraph{Censoring Unbiased Losses for Deep Learning.}
\cite{steingrimsson2020deep} build failure model loss functions
based on the estimators from \cite{rubin2007doubly}.
Their \gls{bs} loss extends \gls{ipcw} \gls{bs} estimation
to the doubly-robust case and to our knowledge is the first instance of \gls{ipcw}-based estimation procedures
being used in a general purpose way to define loss functions for deep survival analysis.

However, their censoring distribution is estimated 
once before training and held fixed rather than incorporated into a joint training procedure as in the games in this work. The fixed censoring estimate is implemented by
\gls{km}, which assumes a marginal censoring distribution. Making the marginal assumption when censoring is truly conditional should not yield a performant model under the \gls{bs} criteria since the training objective does not directly estimate or optimize the true \gls{bs} that would be measured under no censoring.  When marginal censoring does hold, \gls{km} estimation, which is non-parametric, may be a simpler and stable choice versus the game, depending on sample size, data variance, and conditional parameterization assumptions. But since it is in general unknown if censoring is marginal, we use conditional models which are also correct under marginal censoring.

\section{Discussion} 
In this work, we propose a new training method for survival models under censored data.
We argue that on finite data, it is important to close the gap between
training methodology and the desired evaluation criteria. We showed in the experiments that better \gls{nll} does not correspond to better performance on \gls{bs}, \gls{bll}, and concordance, all evaluations of interest in survival analysis. 

The main trend in our experimental results was that data size matters: smaller meant the game methods performed better than \gls{nll} and enough data meant that they perform similarly, which is expected since all objectives are proper. However \textit{enough} data is hard to define: it depends on dimensionality and on the data generating distribution and model class. It is a great direction to build more precise understanding on how objectives behave differently even when they have the same optimum on infinite data: 
though likelihood is known to be asymptotically efficient for survival analysis, more analysis is necessary for comparing likelihood and Brier score's trade-offs on small sample sizes.

In the experiments, we focus on categorical models. On the other hand, \cref{prop: exist} applies to continuous
distributions as well, provided that positivity can be satisfied. However, this is rare
in practice because most survival data has a final follow-up time, and even before this time there may be very few samples with late times \citep{gerds2013estimating}. For this reason, working with continuous distributions requires picking a truncation time and playing games only up to that time.

Evaluation on real data under censoring requires assumptions. It is important to further consider how to better assess test-set performance on metrics such as \gls{bs}, \gls{bll}, and concordance. 
Because concordance is not proper~\citep{blanche2019c}, we do not build objectives from it here, but it too is not invariant to censoring. 
Regarding games, we showed properties about stationary points. More analysis is necessary to describe important convergence properties of optimizing these games.

\paragraph{Social Impact.} Survival models are deployed in hospital settings and have high impact on public health. In this work, we saw benefits of a new training approach for these models, but no training method is a panacea. Practitioners of survival analysis must take great care to consider various training and validation approaches,
as well as consider possible test distribution shifts, prior to deployment.

\begin{ack}
This work was supported by:
\begin{itemize}
    \item NIH/NHLBI Award R01HL148248 
    \item NSF Award 1922658 NRT-HDR: FUTURE Foundations, Translation, and Responsibility for Data Science.
    \item NSF Award 1514422 TWC: Medium: Scaling proof-based verifiable computation
    \item NSF Award 1815633 SHF
\end{itemize}
\end{ack}

\clearpage
\bibliography{biblio}
\bibliographystyle{abbrvnat}

\clearpage
\appendix

\section{Notation, Assumptions, and Likelihoods in More Detail \label{appsec:notationassumptionsappendix}}
\subsection{Notation}
Let $T$ be a failure time with \acrshort{cdf} $F$.
$T$'s \textit{survival function} is defined by $\overline{F}=1-F$.
We denote failure models by $F_{\theta_T}$.  Let $C$
be a censoring time with \acrshort{cdf} $G$, survival function $\overline{G}$, and model $G_{\theta_C}$. 
Under right-censoring, define $U=\min(T,C)$, $\Delta=\indicator{T \leq C}$ and we observe $(X_i,U_i,\Delta_i)$. 
We use $\overline{G}(t^-)$ to denote $P(C \geq t)$.
 
\subsection{Assumptions} 
We assume i.i.d. data and random censoring: $T \indep C \g X$ \citep{kalbfleisch2002}.
Derivations in this work also require
the censoring positivity assumption \citep{gerds2013estimating}. 
Let $f=dF$ (a failure density) and $g=dG$ (a censoring density). Then we assume
\begin{align}
    \exists \epsilon 
    \quad \text{ s.t. } \quad 
   \forall x \,
   \forall t \in 
        \{t\leq t_{\text{max}} \mid  f(t|x) > 0\},
\quad
    \overline{G}(t^{-} \gtight x) \geq
    \epsilon > 0,
\end{align}
for some truncation time $t_{\text{max}}$.
Truncating at a maximum time is necessary in practice for continuous distributions because datasets may have no samples in the tails, leading to practical positivity violations \citep{gerds2013estimating}.
This truncation happens implicitly for categorical models by choosing the bins.

To observe censoring events properly,
we also require a version of  \cref{eq:positivity}
to hold with the roles of $F$ and $G$ reversed:
\begin{align}
   \exists \epsilon 
    \quad \text{ s.t. } \quad 
   \forall x \,
  \forall t \in 
    \{t\leq t_{\text{max}} \mid  g(t|x) > 0\},
\quad
    \overline{F}(t \gtight x)  \geq
    \epsilon > 0.
\end{align}
$t_{\text{max}}$ should be chosen so that these two conditions hold.

\subsection{Likelihoods}
As mentioned, we assume data are i.i.d. and censoring is random $T \indep C \g X$.
Under these assumptions, the likelihood, by definition \citep{andersen2012statistical}, is:
   \begin{align}
     \label{eq:failurecensornll}
        L(\theta_T,\theta_C) &= \prod_i
        \Big[ f_{\theta_T}(U_i)\overline{G}_{\theta_C}(U_i^{-})\Big]^{\Delta_i}\Big[g_{\theta_C}(U_i)\overline{F}_{\theta_T}(U_i)\Big]^{1-\Delta_i},
    \end{align}
When a failure is observed, $\Delta_i=\indicator{T_i \leq C_i} = 1$ so we compute the failure density or mass $f$ at the observed time $U_i=T_i$. In this case, the only thing we know about the censoring time is $C_i \geq T_i = U_i$. We therefore compute $P(C_i \geq T_i) = P(C_i \geq U_i) = 1 - G_{\theta_C}(U_i^{-}) = \overline{G}_{\theta_C}(U_i^{-})$. Likewise, when a censoring time is observed, $\Delta_i=0$ so we compute the censoring density or mass $g$ at the observed
censoring time $U_i=C_i$. In this case, the only thing we know about the failure time is that
$T_i > C_i$. We therefore compute $P(T_i > C_i) = P(T_i > U_i) = 1 - F(U_i) = \overline{F}(U_i)$.

Under the additional assumption of non-informativeness -that $F$ and $G$ don't share parameters and therefore $\theta_T,\theta_C$ are distinct- the $g/G$ terms are constant wrt $\theta_T$ and the $f/F$ terms are constant wrt $\theta_C$.
In this case, when one is modeling failures, they can use the partial failure likelihood:
  \begin{align*}
        L(\theta_T)^{\text{partial}} &= \prod_i
        \Big[ f_{\theta_T}(U_i)\Big]^{\Delta_i}
       \Big[ \overline{F}_{\theta_T}(U_i)\Big]^{1-\Delta_i}
    \end{align*}
And when one is modeling censoring they can use the partial censoring likelihood:
  \begin{align*}
        L(\theta_C)^{\text{partial}} &= \prod_i 
        \Big[
        \overline{G}_{\theta_C}(U_i^{-})
        \Big]^{\Delta_i}
       \Big[ 
        g_{\theta_C}(U_i)
       \Big]^{1-\Delta_i}
    \end{align*}

\subsection{Failure partial likelihood depends on true censoring distribution \label{appsec:censoringaffectslikelihood} }

We now show that the failure partial likelihood's scale depends on the true sampling distribution of censoring times, even if the censoring model has dropped as a constant in the objective.
The expected likelihood is:
\begin{align*} 
\E_{\substack{T \sim F_{\theta_T^*}, C \sim G_{\theta_C^*} \\ U=\min(T,C),\Delta=\indicator{T \leq C}}} \Big[f_{\theta_T}(U)^{\indicator{\Delta=1}}  \overline{F}_{\theta_T}(U)^{\indicator{\Delta=0}} \Big]
\end{align*} 

The reason is that $\Delta$ and $U$ depend on T and C (therefore on $F_{\theta_T^*}$ and $G_{\theta_C^*}$). We now constructively show that the failure model’s \gls{nll} can vary with the true censoring distribution. Let us consider a marginal survival analysis problem (no features) and random censoring. The log \gls{nll} is:
\begin{align*}
  \E_{F_{\theta_T^*},G_{\theta_C^*}}[ \Delta \log f_{\theta_T}(U)] + \E_{F_{\theta_T^*},G_{\theta_C^*}}[(1-\Delta)  \log \overline{F}_{\theta_T}(U)]   
\end{align*}

Now consider an $F_{\theta_T^*}$ whose support starts at time $1$ (e.g. uniform over 1,2,3) and $G_{\theta_C^*}$ such that there is probability $\rho$ that $C=0$ and probability $1-\rho$ that $C$ take a value above the support of $T$ (e.g. >3). Points are therefore only censored at time 0 or uncensored.

\begin{align*}
    &\E_{F_{\theta_T^*},G_{\theta_C^*}}[ \Delta \log f_{\theta_T^*}(U)] + \E_{F_{\theta_T^*},G_{\theta_C^*}}[(1-\Delta)  \log \overline{F}_{\theta_T^*}(U)]\\
    &= (1-\rho) \E_{F_{\theta_T^*}}[\log f_{\theta_T^*}(T)] + \rho \E_{G_{\theta_C^*}}[\log \overline{F}_{\theta_T^*}(C)]\\
    &= (1-\rho)  \E_{F_{\theta_T^*}}[\log f_{\theta_T^*}(T)] + \rho \E_{G_{\theta_C^*}}[\log \overline{F}_{\theta_T^*}(0)]\\
    &= (1-\rho) \E_{F_{\theta_T^*}}[\log f_{\theta_T^*}(T)] + \rho \E_{G_{\theta_C^*}}[\log 1]\\
    &= (1-\rho) \E_{F_{\theta_T^*}}[\log f_{\theta_T^*}(T)] + \rho \E_{G_{\theta_C^*}}[0]\\
    &= (1-\rho) \E_{F_{\theta_T^*}}[\log f_{\theta_T^*}(T)]
\end{align*}

This quantity depends on $\rho$. This shows that the failure model's \gls{nll} depends on the true sampling distribution of censoring times.

\section{\Gls{ipcw} Primer \label{appsec:ipcwprimer}}
\Gls{ipcw} is a technique for estimation under censoring \citep{gerds2006consistent}. 
Consider estimating the marginal mean of $T: \E[T] = \E_X \E_{T\gtight X}[T]$. $T$ is not observed for all datapoints. Instead, we observe $U=\min(T,C)$ and $\Delta=\indicator{T \leq C}$.   \Gls{ipcw} reformulates such
expectations in terms of observed data. Using this method, we can show that:
\begin{align*} 
\begin{split}
     \E_X \E_{T\gtight X}[T]
    &=
    \E_{X} \E_{T \gtight X} 
    \Bigg[ \frac{\E_{C \gtight X}\indicator{T\leq C}}
    {\E_{C' \gtight X}\indicator{T \leq C'}} T \Bigg]\\
    &=
    \E_{X} \E_{T \gtight X}  \E_{C \gtight X}
    \Bigg[\frac{\indicator{T \leq C}}
    {\E_{C^\prime \gtight X}\indicator{T \leq C^\prime}} T \Bigg]\\
    &=
    \E_{T,C,X}
    \Bigg[\frac{\indicator{T \leq C}}
    {\E_{C^\prime \gtight X}\indicator{T \leq C^\prime}} T \Bigg]\\
    &=
    \E_{T,C,X}
    \Bigg[\frac{\indicator{T \leq C}}
    {\mathbb{P}(C^\prime  \geq T \gtight X)} T \Bigg]\\
    &=
    \E_{T,C,X}
    \Bigg[\frac{\indicator{T \leq C}}
    {\overline{G}(T^{-} \gtight X)} T \Bigg]\\
     &=
    \E_{U,\Delta,X}
    \Bigg[ 
    \frac{\Delta U}{\overline{G}(U^{-} \gtight X)}
    \Bigg] 
\end{split}
\end{align*}
We have used $C^\prime$ in the denominator to emphasize that it is not a function of $C$ in the integral over the numerator indicator once that expectation is moved out.
We have used random censoring to go from $\E_{T\gtight X} \E_{C \gtight X}$ to the joint $\E_{T,C \gtight X}$.
The last equality changes from the complete data distribution to the observed distribution and
holds because $\Delta=1 \implies U=T$. This means we can estimate the expectation, provided that we know $G$ and that
random censoring and positivity (\cref{eq:positivity}) hold. In practice, we must learn the censoring distribution, a challenging task as it is also censored.

\cite{graf1999assessment} develop the \gls{ipcw} \gls{bs}. \cite{gerds2006consistent} extend it to conditional censoring and \cite{kvamme2019brier} specialize to administrative censoring.
\cite{gerds2013estimating,wolbers2014concordance} develop the \gls{ipcw} concordance. 
\gls{ipcw} estimators for several forms of \gls{auc} have been
studied in 
\cite{hung2010estimation,hung2010optimal,blanche2013review,blanche2019c,uno2007evaluating}.
\cite{yadlowsky2019calibration} derive an \gls{ipcw} estimator for binary survival calibration. 

\section{Deriving \gls{ipcw} Brier Scores \label{appsec:deriveipcwbrier}} 

We derive the \gls{ipcw} \gls{bs}
introduced by \cite{graf1999assessment,gerds2006consistent}. In the below let F-BS be the F model's BS and let  F-BS-CW be its censor-weighted version.
The censor-weighted failure \gls{bs}:
\begin{align*}
    \text{F-BS-CW}(t)= \E_{T,C} \Big[\frac{(1 - F_\theta(t))^2 \indicator{T \leq C} \indicator{U \leq t}}{P_\theta(C^\prime \geq U)}
        + \frac{F_\theta(t)^2 \indicator{U > t}}{P_\theta(C^\prime > t)}\Big]\\
\end{align*}
where $U=\min(T,C)$ and $F_\theta=P_\theta(T \leq \cdot)$, It's relationship to the regular \gls{bs} is:
\begin{align*}
   \text{F-BS}(t) &= \E_{T}\Big[
   \Big( F_\theta(t) - 
    \indicator{T \leq t}\Big)^2
   \Big]\\
   &=
   \E_{T}\Big[
   (1-F_\theta(t))^2\indicator{T \leq t}
   +
   F_\theta(t)^2
   \indicator{T > t}
   \Big]\\
    &=
      \E_{T}\Big[
      \frac{\E_{C}\indicator{T \leq C}}{\E_{C'}\indicator{T \leq C'}}
      (1-F_\theta(t))^2
   \indicator{T \leq t}
   +
   \frac{\E_{C}\indicator{C>t}}{\E_{C'}\indicator{C'>t}}
   F_\theta(t)^2
   \indicator{T > t}
   \Big]\\
       &=
      \E_{T,C}\Big[
      \frac{ (1-F_\theta(t))^2\indicator{T \leq C}\indicator{T \leq t}}{\E_{C'}\indicator{T \leq C'}}
   +
   \frac{F_\theta(t)^2
   \indicator{T > t}\indicator{C>t}}{\E_{C'}\indicator{C'>t}}
   \Big]\\
         &=
      \E_{T,C}\Big[
      \frac{(1-F_\theta(t))^2\indicator{T \leq C}\indicator{T \leq t}}{P_\theta(C^\prime \geq T)}
   +
   \frac{F_\theta(t)^2
   \indicator{T > t}\indicator{C>t}}{P_\theta(C^\prime > t)}
   \Big]\\
           &=
      \E_{T,C}\Big[
      \frac{(1-F_\theta(t))^2\indicator{T \leq C}\indicator{U \leq t}}{P_\theta(C^\prime \geq U)}
   +
   \frac{F_\theta(t)^2
   \indicator{U > t}}{P_\theta(C^\prime > t)}
   \Big]\\
   &= \text{F-BS-CW}(t)
\end{align*} 
The expectation comes out due to $T \indep C$. The last line follows from $T \leq C \implies U=T$ (in the left term) and 
$\indicator{T>t}\indicator{C>t} = \indicator{U>t}$ (in the right term).
Define likewise the failure-weighted censor \gls{bs}
\begin{align*}
    \text{G-BS-CW}(t)= \E_{T,C} \Big[\frac{(1 - G_\theta(t))^2 \indicator{C < T} \indicator{U \leq t}}{P_\theta(T^\prime > U)}
        + \frac{G_\theta(t)^2 \indicator{U > t}}{P_\theta(T^\prime > t)}\Big]\\
\end{align*}
where $G_\theta=P_\theta(C \leq \cdot)$. 
The relationship to the censoring distribution's \gls{bs} is:
\begin{align*}
   \text{G-BS}(t) &= \E_{C}\Big[
   \Big( G_\theta(t) - 
    \indicator{C \leq t}\Big)^2
   \Big]\\
   &=
   \E_{C}\Big[
   (1-G_\theta(t))^2\indicator{C \leq t}
   +
   G_\theta(t)^2
   \indicator{C > t}
   \Big]\\
    &=
      \E_{C}\Big[
      \frac{\E_{T}\indicator{C < T}}{\E_{T'}\indicator{C < T'}}
      (1-G_\theta(t))^2
   \indicator{C \leq t}
   +
   \frac{\E_{T}\indicator{T>t}}{\E_{T'}\indicator{T'>t}}
   G_\theta(t)^2
   \indicator{C > t}
   \Big]\\
      &=
      \E_{T,C}\Big[
      \frac{ (1-G_\theta(t))^2\indicator{C < T} \indicator{C \leq t}}{\E_{T'}\indicator{C < T'}}
   +
   \frac{G_\theta(t)^2\indicator{T>t} \indicator{C > t}}{\E_{T'}\indicator{T'>t}}
   \Big]\\
        &=
      \E_{T,C}\Big[
      \frac{ (1-G_\theta(t))^2\indicator{C < T} \indicator{C \leq t}}{P_\theta(T^\prime > C)}
   +
   \frac{G_\theta(t)^2\indicator{T>t} \indicator{C > t}}{P_\theta(T^\prime > t)}
   \Big]\\
        &=
      \E_{T,C}\Big[
      \frac{ (1-G_\theta(t))^2\indicator{C < T} \indicator{U \leq t}}{P_\theta(T^\prime > U)}
   +
   \frac{G_\theta(t)^2  \indicator{U>t}}{P_\theta(T^\prime > t)}
   \Big]\\
   &= \text{G-BS-CW}(t)
\end{align*}
The expectation comes out due to $T \indep C$. The last line follows from $C < T \implies U=C$ (in the left term)
and  $\indicator{T>t}\indicator{C>t} = \indicator{U>t}$ (in the right term). 
\section{Negative Bernoulli Log Likelihood \label{appsec:bll}} 
Negative \gls{bll} is similar to \gls{bs},
but replaces the squared error with negated log loss:
\begin{align*}
 \text{NBLL}(t;\theta)
   &= \E_{T,C,X}
   	 \Big
	 	[-\log({F_{\theta_T}}(t \mid X)) \indicator{T \leq t}
       - \log(\overline{F}_{\theta_T}(t \mid X)) \indicator{T > t}
       \Big]
\end{align*}
\Gls{ipcw} \gls{bll} can likewise  be written as \citep{kvamme2019time}:
\begin{align*}
    \text{F-NBLL-CW}(t;\theta)
   &= \E_{T,C,X} \Big[\frac{-\log({F_{\theta_T}}(t \mid X)) \Delta  \indicator{U \leq t}}{G(U^{-} \mid X)}
       + \frac{-\log(\overline{F}_{\theta_T}(t \mid X)) \indicator{U > t}}{G(t \mid X)}\Big]
\end{align*}

\section{Game Algorithm \label{appsec:gamealg}}

\begin{algorithm}[h]
\begin{algorithmic}
 \STATE {\bfseries Input:}
    Choice of losses $\ell_F,\ell_G$,
    learning rate $\gamma$
 \STATE {\bfseries Initialize} $\theta_{Tt}$ and $\theta_{Ct}$ randomly for $t=1, \dots, K-1$
         \REPEAT 
             \STATE{\textbf{// } for each parameter of each player}
     \FOR{$t=1$ {\bfseries to} $K-1$ }

        \STATE $g_{Tt} \leftarrow  d 
        \ell_F^t/d \theta_{Tt}$
     \STATE $g_{Ct} \leftarrow d \ell_G^t/d \theta_{Ct}$
     \ENDFOR
        \STATE{\textbf{// } for each parameter of each player}
         \FOR{$t=1$ {\bfseries to} $K-1$ }

        \STATE $\theta_{Tt} \leftarrow  \theta_{Tt} - \gamma g_{Tt}$
     \STATE $\theta_{Ct} \leftarrow  \theta_{Ct} - \gamma g_{Ct}$
     \ENDFOR
     \UNTIL convergence
\end{algorithmic}
\vskip -0.05in
\caption{\label{alg:mul-step} Following Gradients in Multi-Player Games}
\end{algorithm}

\section{Experiments \label{appsec:experiments}}

\subsection{Data}

\paragraph{Gamma Simulation}
We draw $x$ from a 32 dimensional multivariate normal $\mathcal{N}(0,10I)$. We simulate conditionally gamma failure times
with mean $\mu_t$ a log-linear function of $x$
with coefficients for each feature drawn $\text{Unif}(0,0.1)$. The censoring times are also conditionally gamma
with mean $0.9*\mu_t$. Both distributions have constant variance $0.05$. $\alpha,\beta$ parameterization of the gamma is recovered from mean, variance by
$\alpha=\mu^2/\sigma^2$ and $\beta=\mu/\sigma^2$.
$T$ and $C$ are conditionally independent given $X$. Each random seed draws a new dataset.

We report metrics as a function of training size.
We use training sizes [200,400,600,800,1000].
We use validation size 1024 and testing size 2048.

\paragraph{Survival \acrshort{mnist}} Survival-\acrshort{mnist}
\citep{gensheimer2019scalable,polsterl2019survivalmnist}
draws times  conditionally on \acrshort{mnist} label $Y$. This means digits define risk groups and $T \indep X \mid Y$. Times within a digit are i.i.d. The model only sees the image pixels $X$ as covariates so it must learn to classify digits (risk groups) to model times. The PyCox package \citep{kvamme2019time} uses Exponential times. We follow \cite{goldstein2020x} and use Gamma times. $T$'s mean
is $10*(Y+1)$ so that lower labels $Y$ mean sooner event times. We set the variance constant to $0.05$. $C$ is drawn similarly but with $9.9*(Y+1)$. Each random seed draws a new dataset.

We report metrics as a function of training size.
We use training sizes [512, 1024, 2048, 4096, 8192, 10240].
We use validation size 1024 and testing size 2048.

\paragraph{Real Data}

We report results on
\begin{itemize}
    \item \gls{support}
    \citep{knaus1995support}
    which includes severely ill hospital patients.
    There are 14 features. we split into
    5,323 for training, 
    1774 for validation, and 
    1776 for testing.

    \item \gls{metabric} \citep{curtis2012genomic}.
    There are 9 features.
    We split into 1,142 for training, 380 for validation,
    and 382 for testing.
    
    \item \gls{rott} \citep{foekens2000urokinase}
    and \gls{gbsg} \citep{schumacher1994randomized} combined into one dataset (\acrshort{rott-gbsg}).
    There are 7 features. We split into 1,339 for training,
    446 for validation, and 447 for testing.

\end{itemize}
    For more description see
\cite{therneau2021survival,katzman2018deepsurv,chen2020deep}.

In the main text, we report results on a subset of these datasets with metrics as a function of training size.
We use training sizes
[10, 20, 30, 40, 50, 60, 70, 80, 90, 100, 110, 120, 130, 140, 150, 175, 200].
We use validation size 300 and always use the entire testing set. We standardize all real data with the training set mean and standard deviation.

\subsection{Models}
In all experiments except for \acrshort{mnist}, we use
a 3-hidden-layer ReLU network. The hidden sizes are 
[128, 64, 64] for the Gamma simulation and
[128,256,64] for the real data. We output $20$ categorical bins.
See \cref{appsec:numbins} for different choices of number of bins,
which did not show any significant differences in results.
For \acrshort{mnist} we first use a small convolutional network and follow with the same fully-connected network, but
using hidden sizes [512,256,64]. 

\subsection{Training}

We use learning rate $0.001$ in all experiments for all losses using the Adam optimizer. We train for 300 epochs for the simulated data and 200 for the real data. For all data and all losses, this was enough to overfit on the training data. We use no weight decay or dropout.

\subsection{Model Selection}
We select the best model on the validation set using the following approach:
\begin{enumerate}
    \item Save the $F$ and $G$ models from all the epochs in $F$-set and $G$-set.
    \item Randomly choose a model $\tilde{F}$ in the $F$-set.
    \item Use $\tilde{F}$ as the weight for $\ell_G$. Find the model $\tilde{G}$ from $G$-set to minimize $\ell_G$ weighted by $\tilde{F}$.
        \item Use $\tilde{G}$ as the weight for $\ell_F$. Find the model $\tilde{F}$ from $F$-set to minimize $\ell_F$ weighted by $\tilde{G}$.
        \item Repeat steps 3 and 4 until convergence.
\end{enumerate}
Once converged, we use $\tilde{F}$ and $\tilde{G}$ as our best model to evaluate at the test set. The above approach plays as similar role as the game. Instead of gradient descent, this time we select a model from a set to play the game. We first fix $F$ to find the best $G$ based on $\ell_G$ and then fix $G$ to find the best $F$ based on $\ell_F$.

\newpage 

\section{Ablations} 

\subsection{Changing number of bins on MNIST \label{appsec:numbins}} 

\textbf{Changing number of categorical bins (K) in [10,20,30,40,50]. Cannot directly compare between two choices of K due to changing meaning of likelihood/BS/Concordance but can compare \gls{nll} and \gls{bs}-Game at each K. Trends similar across all choices of K.}
\begin{figure}[h]
    \centering
    \subfigure[Uncensored \acrshort{bs}]{
        \includegraphics[width=32mm]{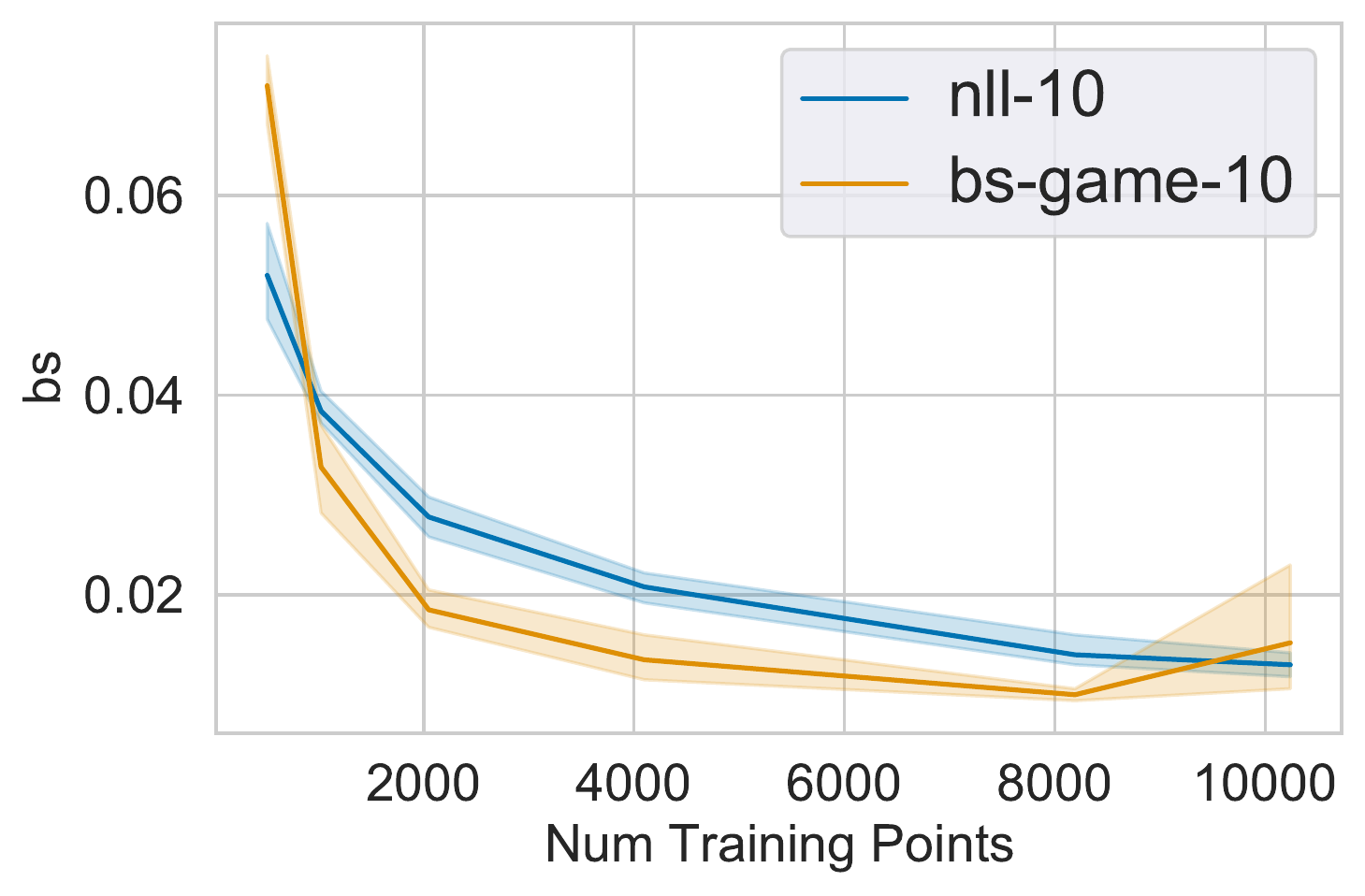}
    }
    \subfigure[Uncensored Neg \acrshort{bll}]{
        \includegraphics[width=32mm]{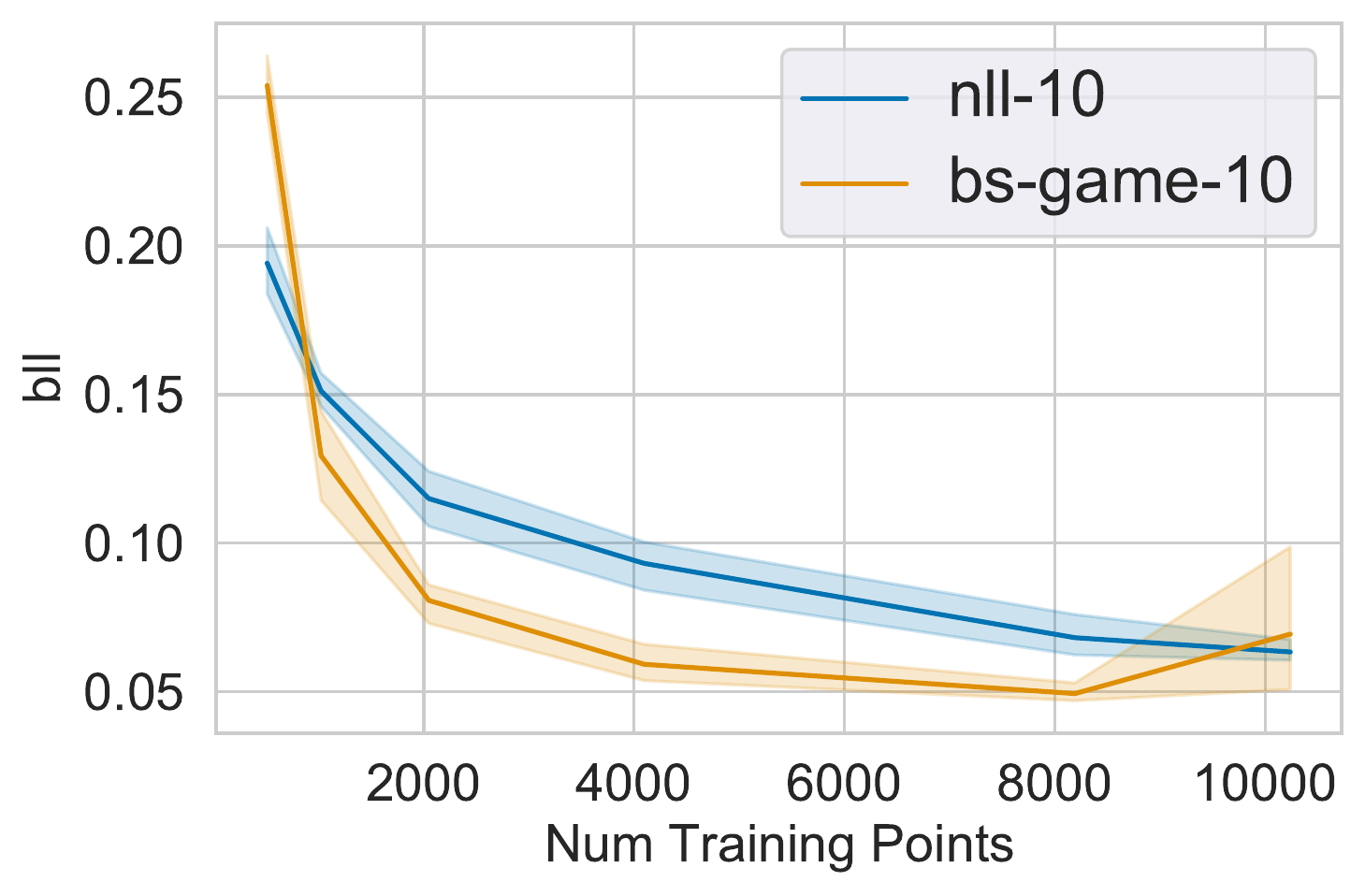}
    }
    \subfigure[Concordance]{
        \includegraphics[width=32mm]{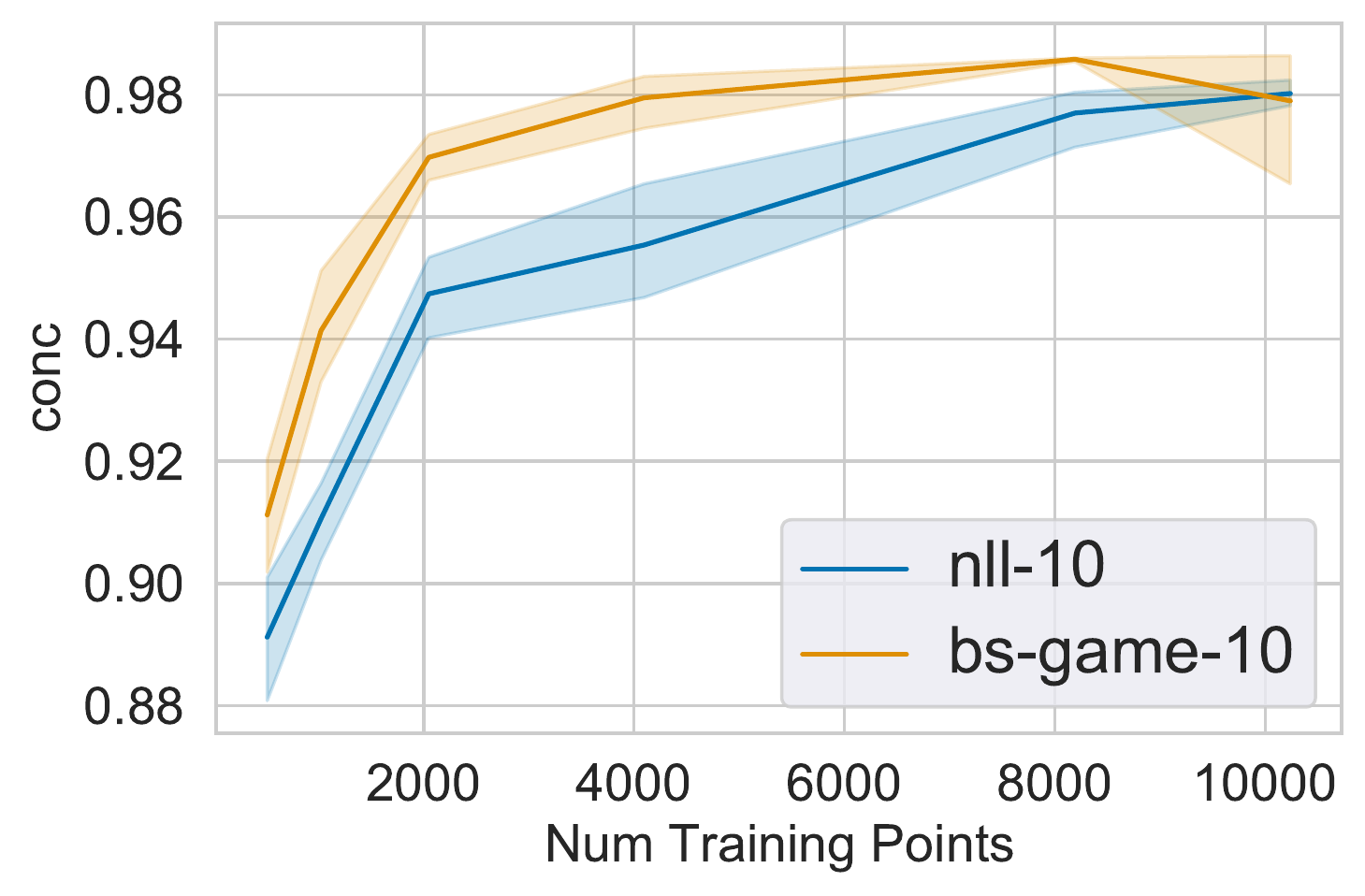}
    }
   \subfigure[Categorical \acrshort{nll}]{
        \includegraphics[width=32mm]{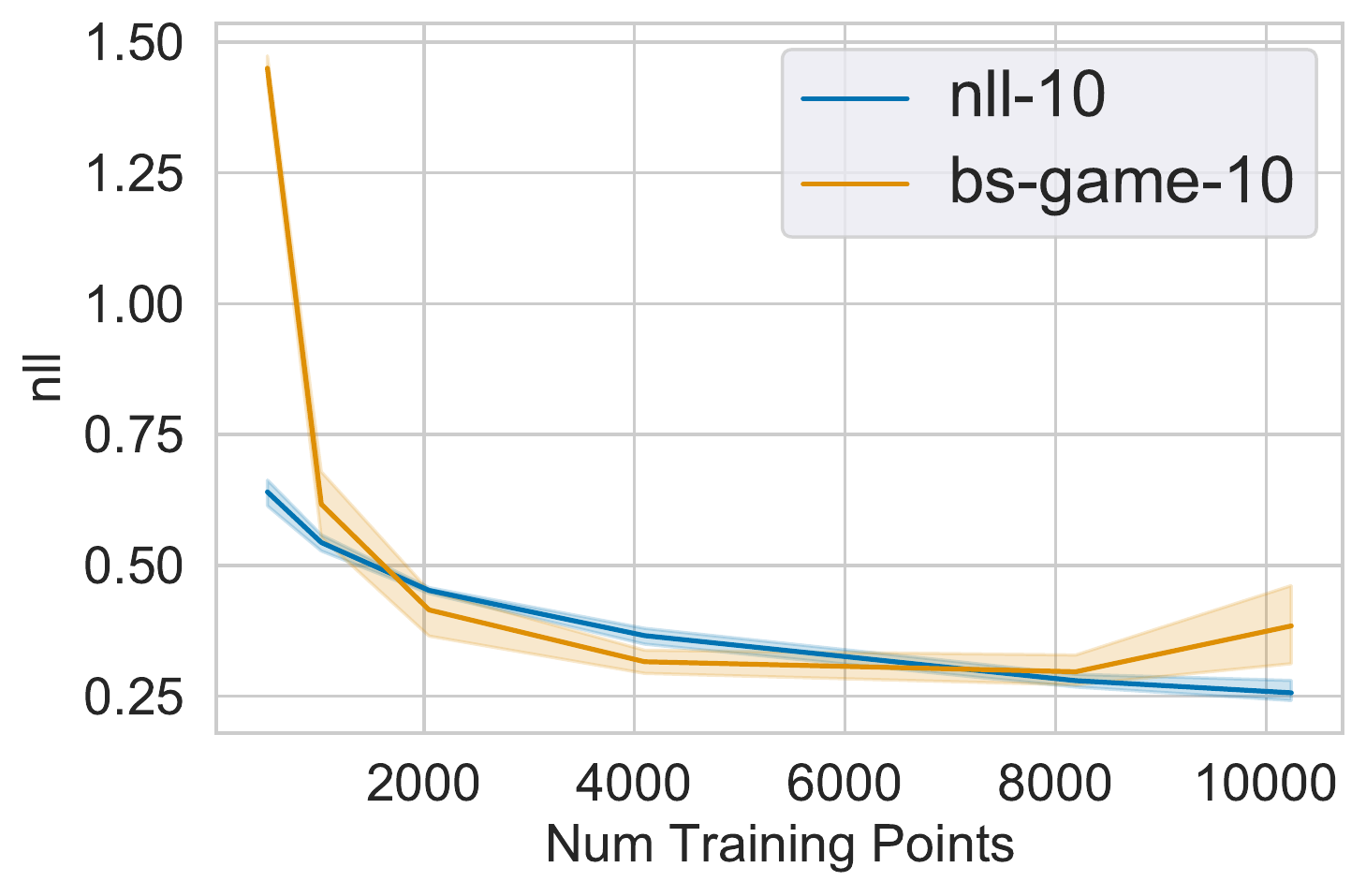}
    }
    \caption{10 bins.  \gls{nll} (Blue). \gls{bs}-Game (Orange).}
\end{figure}

\begin{figure}[h]
    \centering
    \subfigure[Uncensored \acrshort{bs}]{
        \includegraphics[width=32mm]{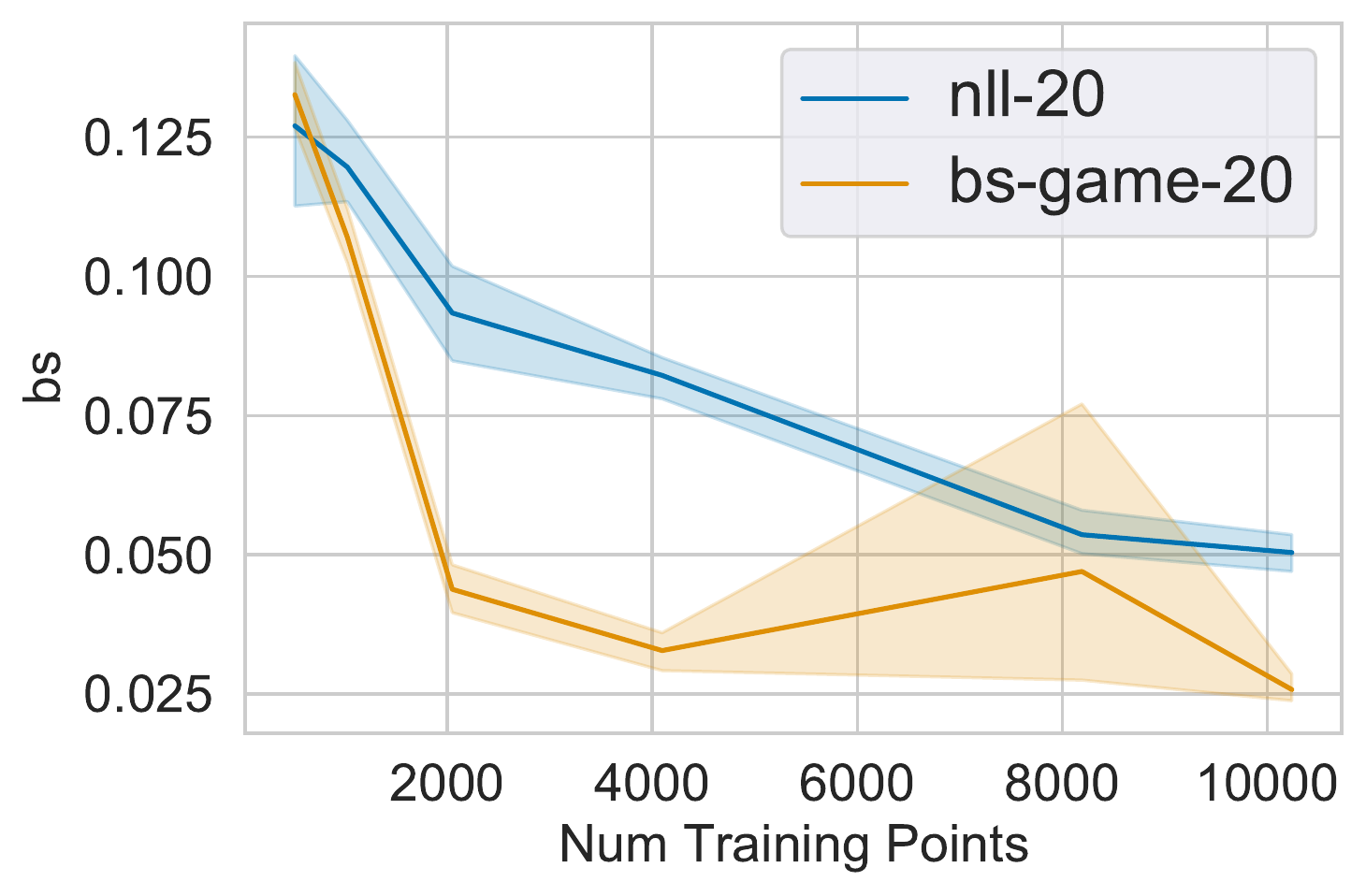}
    }
    \subfigure[Uncensored Neg \acrshort{bll}]{
        \includegraphics[width=32mm]{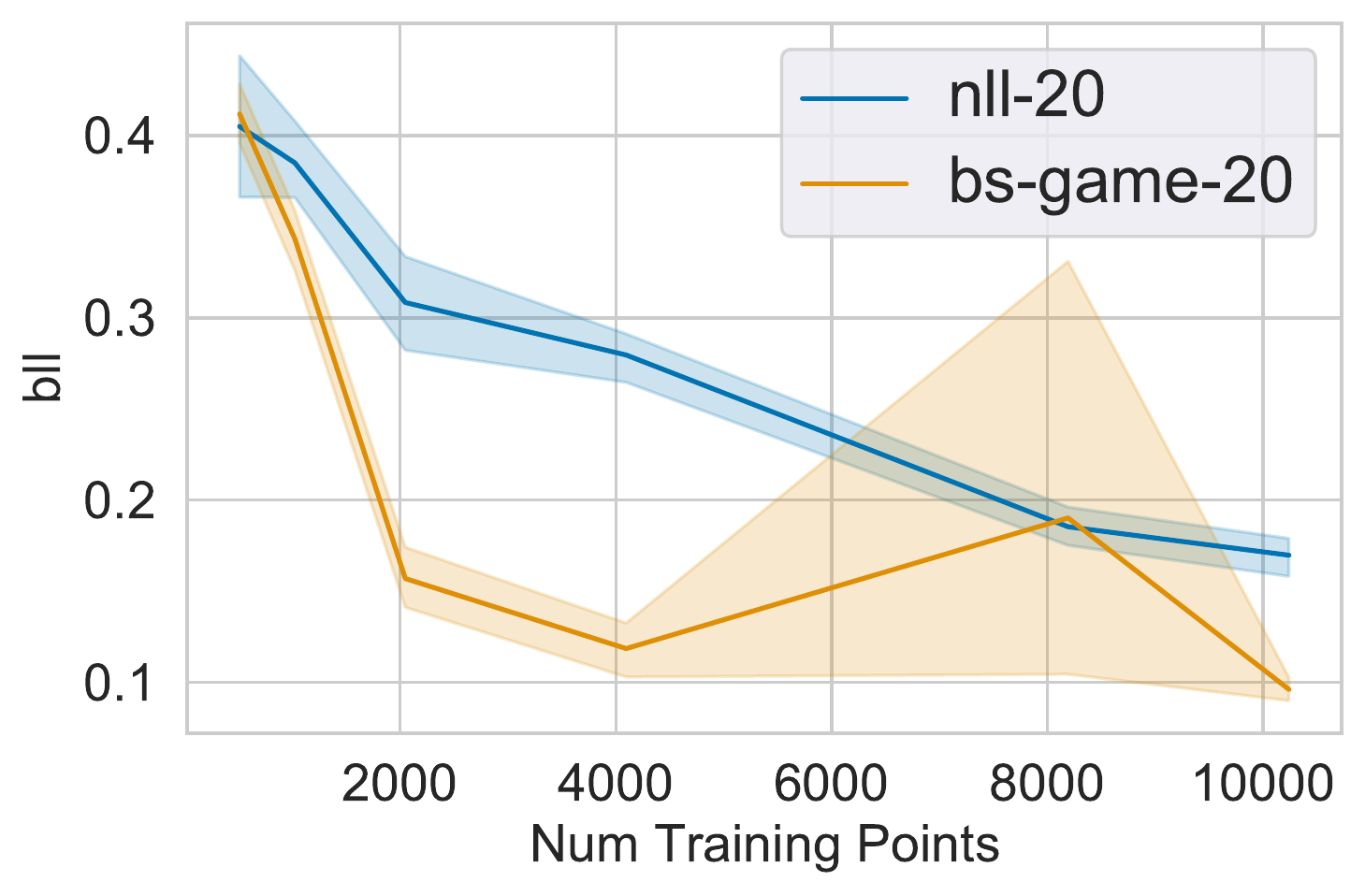}
    }
    \subfigure[Concordance]{
        \includegraphics[width=32mm]{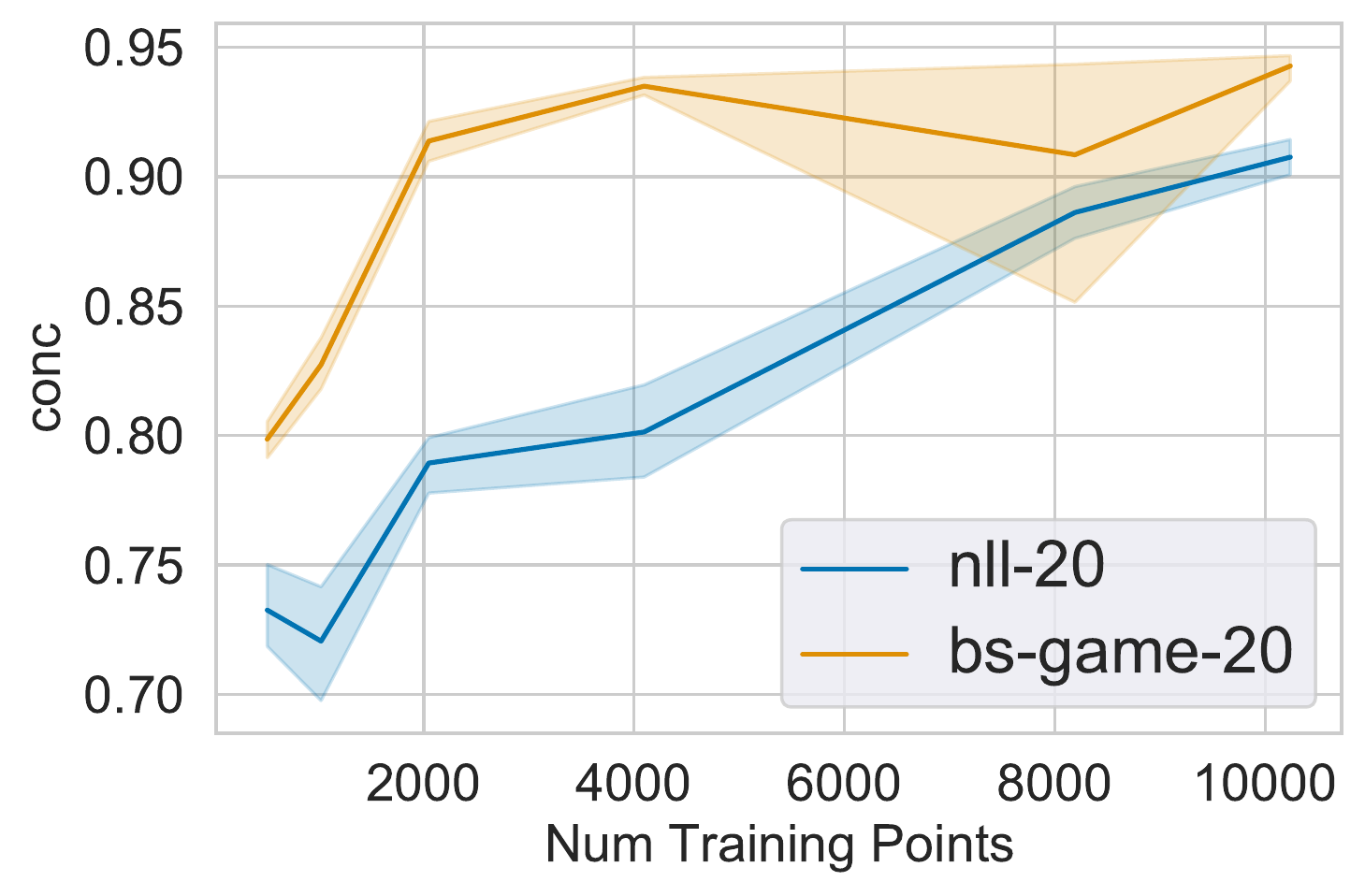}
    }
   \subfigure[Categorical \acrshort{nll}]{
        \includegraphics[width=32mm]{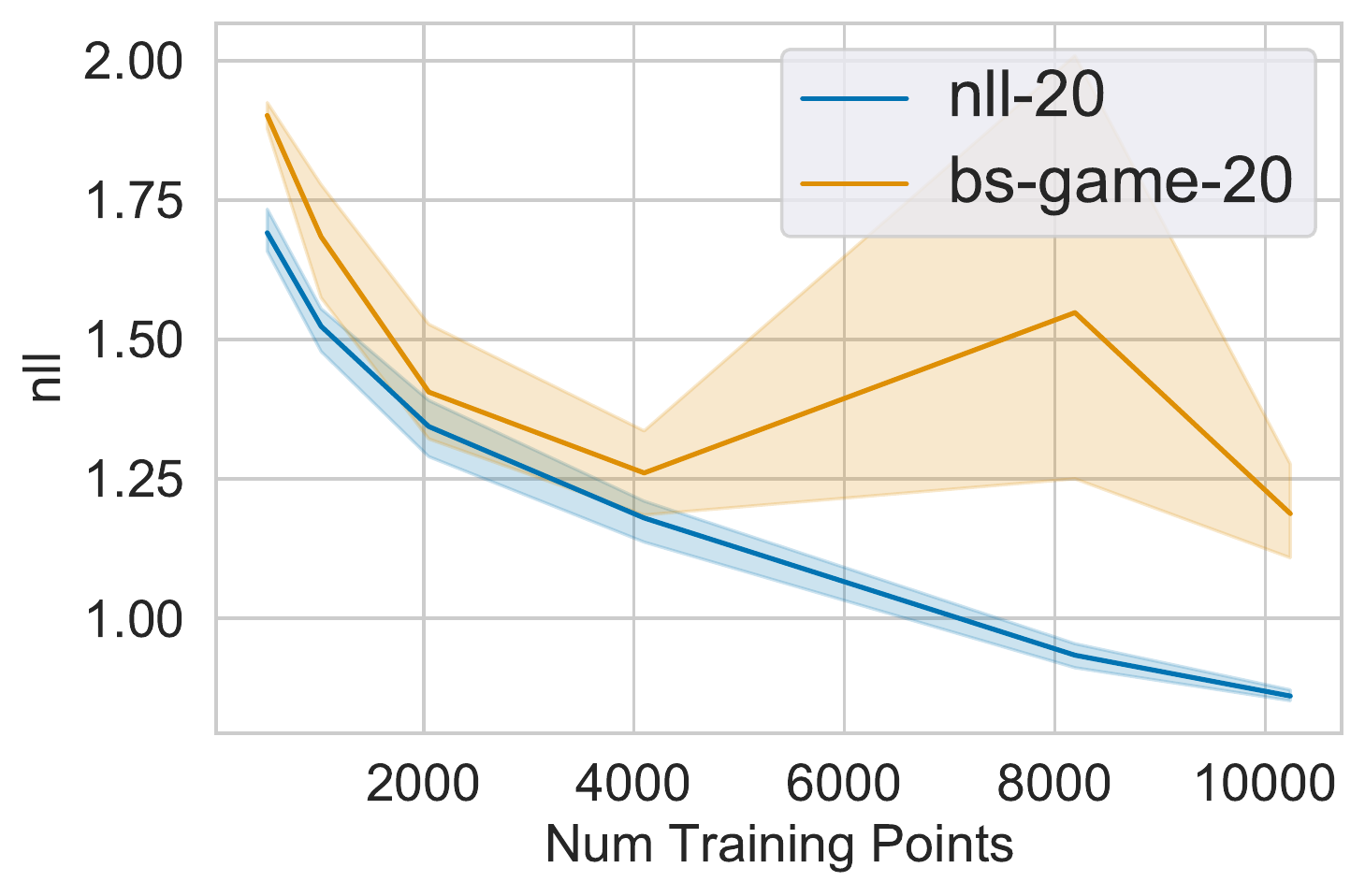}
    }
    \caption{20 bins.  \gls{nll} (Blue). \gls{bs}-Game (Orange).}
\end{figure}

\begin{figure}[h]
    \centering
    \subfigure[Uncensored \acrshort{bs}]{
        \includegraphics[width=32mm]{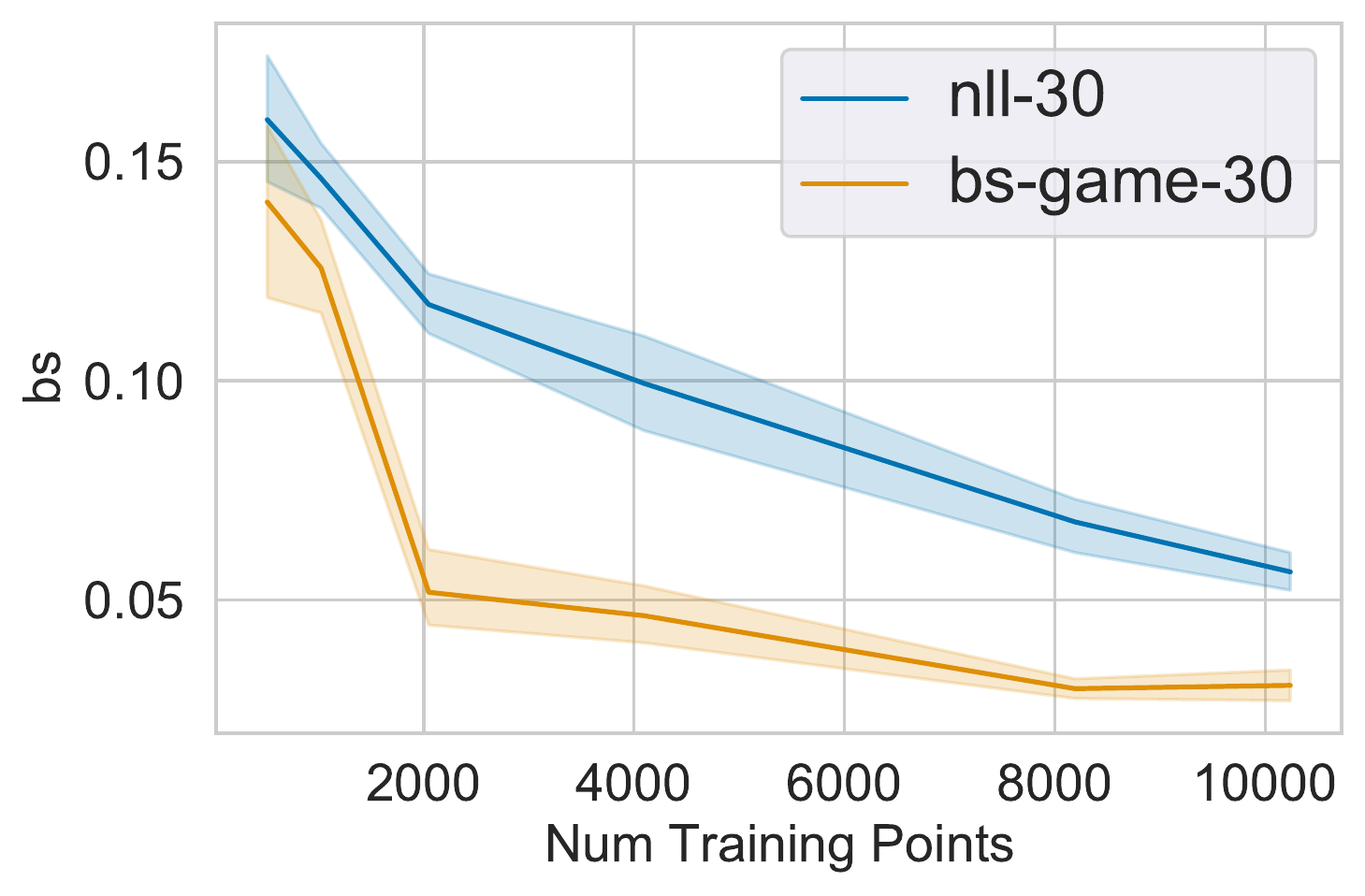}
    }
    \subfigure[Uncensored Neg \acrshort{bll}]{
        \includegraphics[width=32mm]{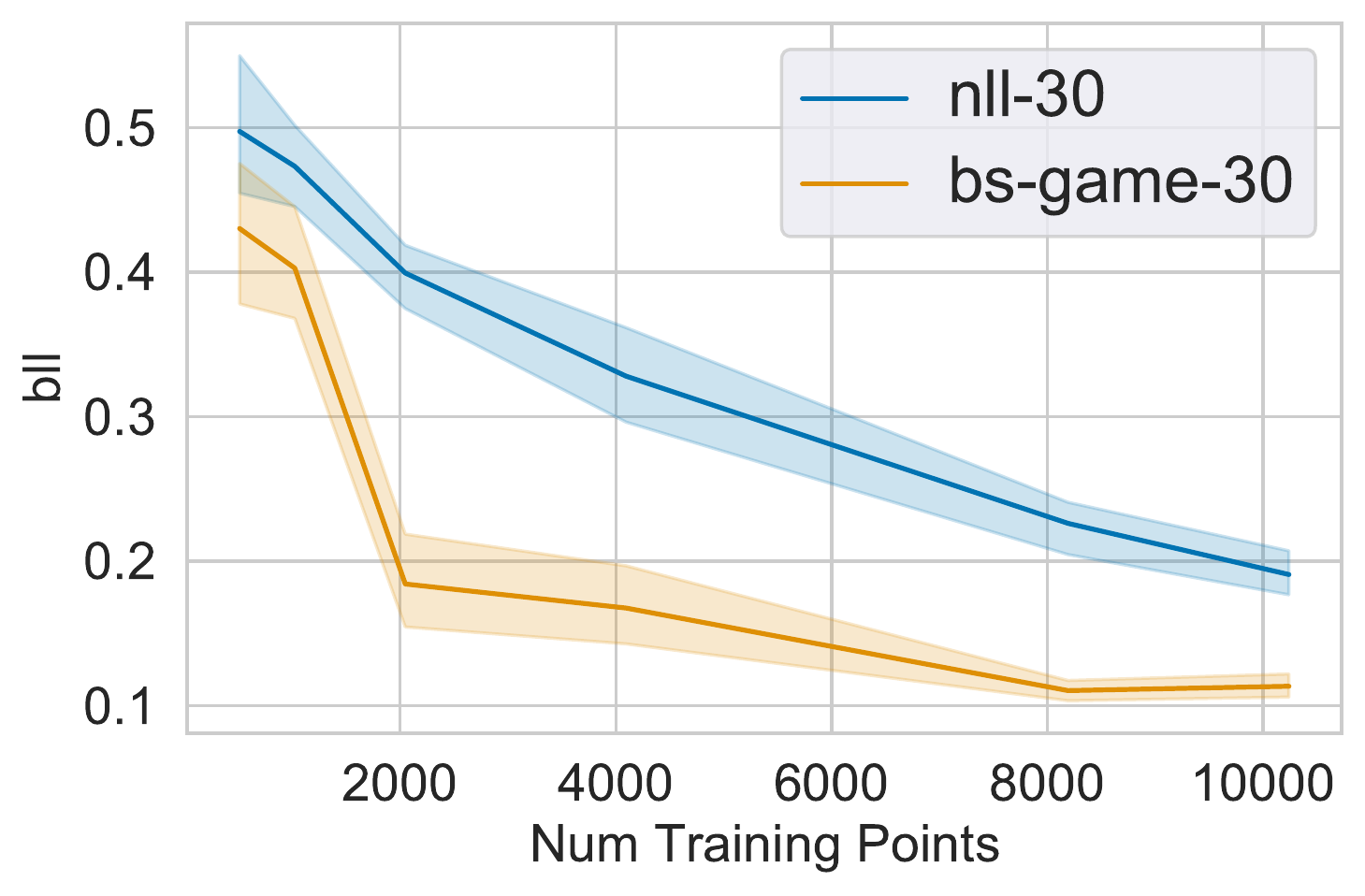}
    }
    \subfigure[Concordance]{
        \includegraphics[width=32mm]{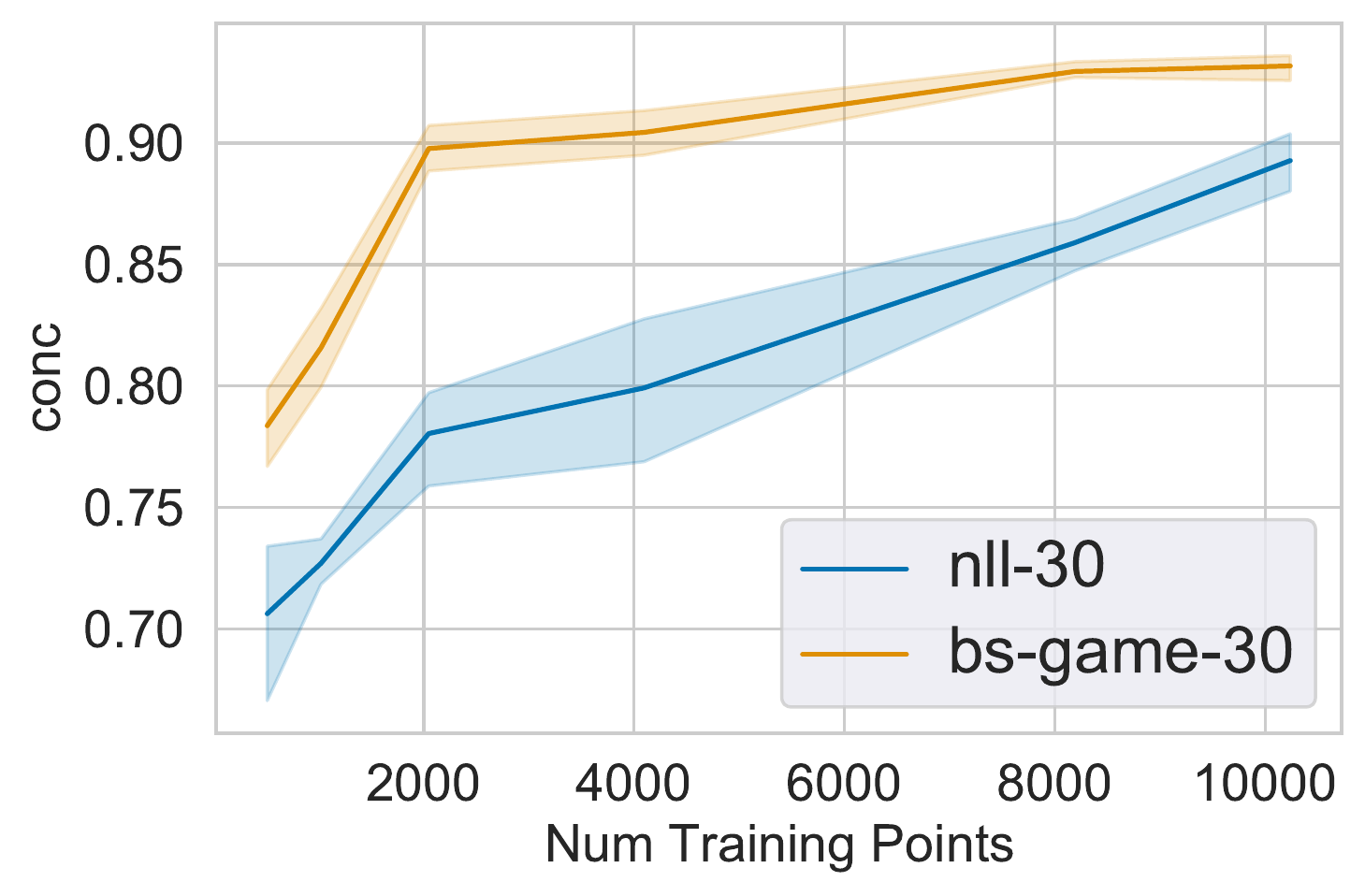}
    }
   \subfigure[Categorical \acrshort{nll}]{
        \includegraphics[width=32mm]{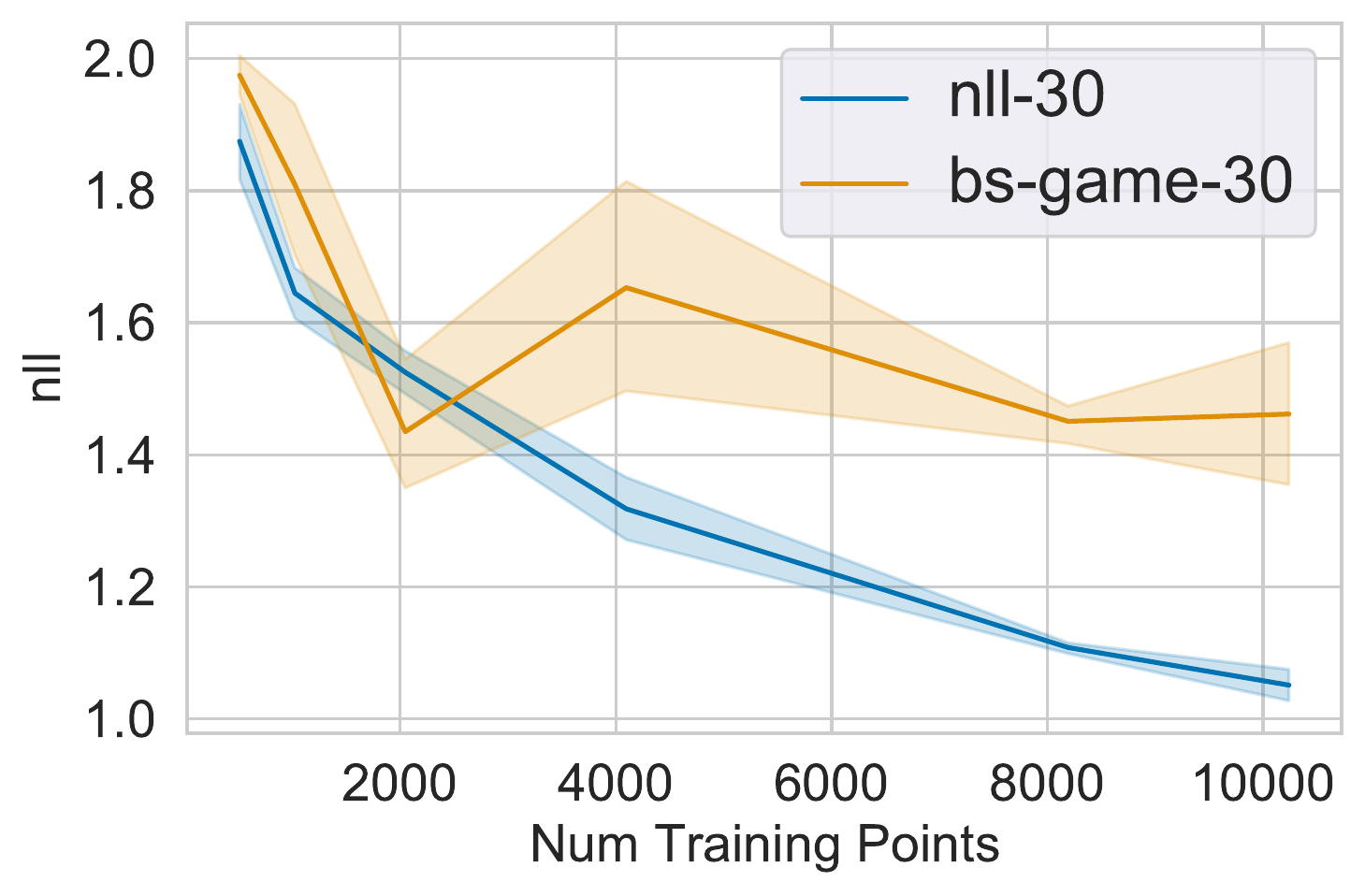}
    }
    \caption{30 bins.  \gls{nll} (Blue). \gls{bs}-Game (Orange).}
\end{figure}

\begin{figure}[h!]
    \centering
    \subfigure[Uncensored \acrshort{bs}]{
        \includegraphics[width=32mm]{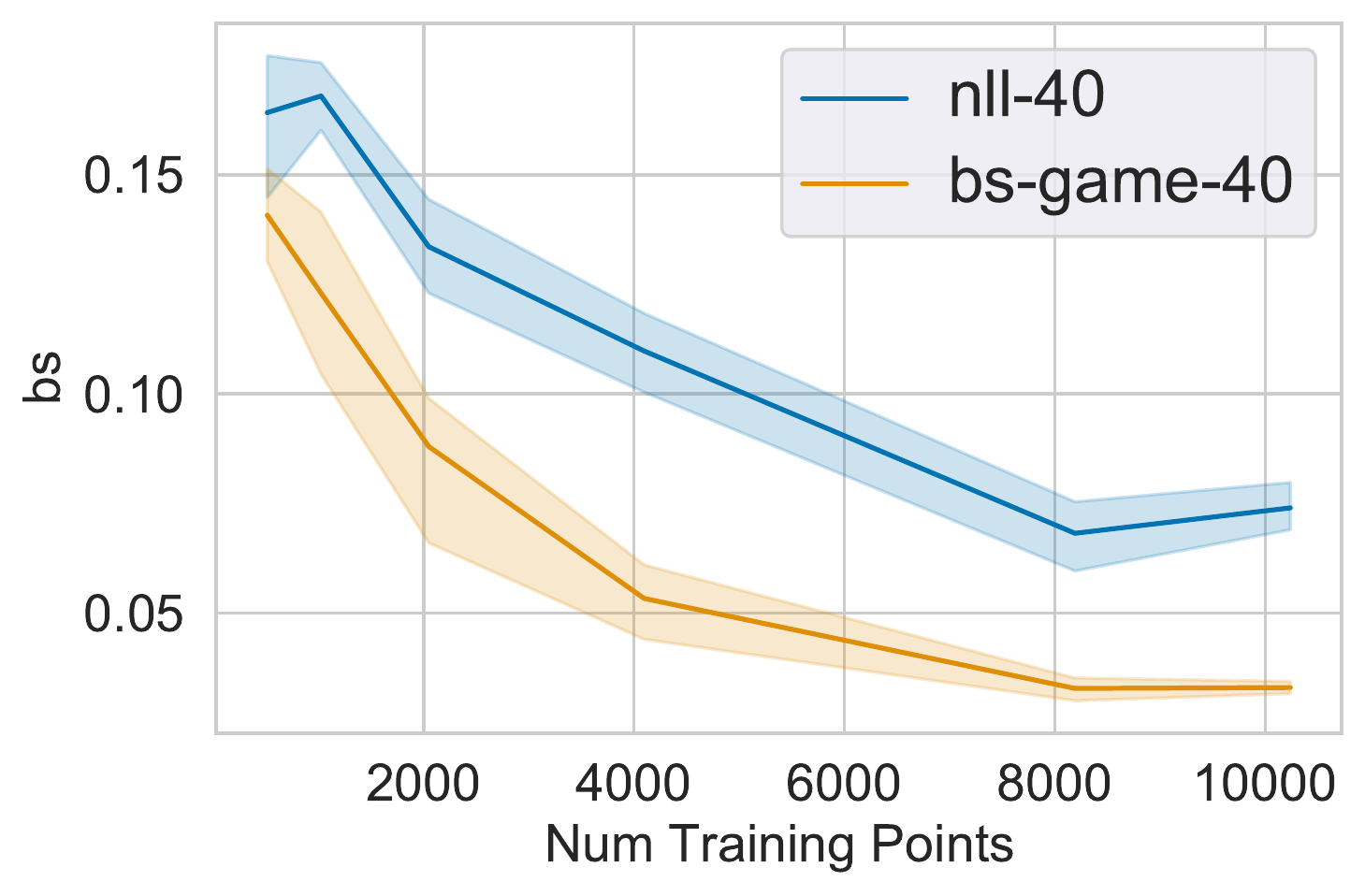}
    }
    \subfigure[Uncensored Neg \acrshort{bll}]{
        \includegraphics[width=32mm]{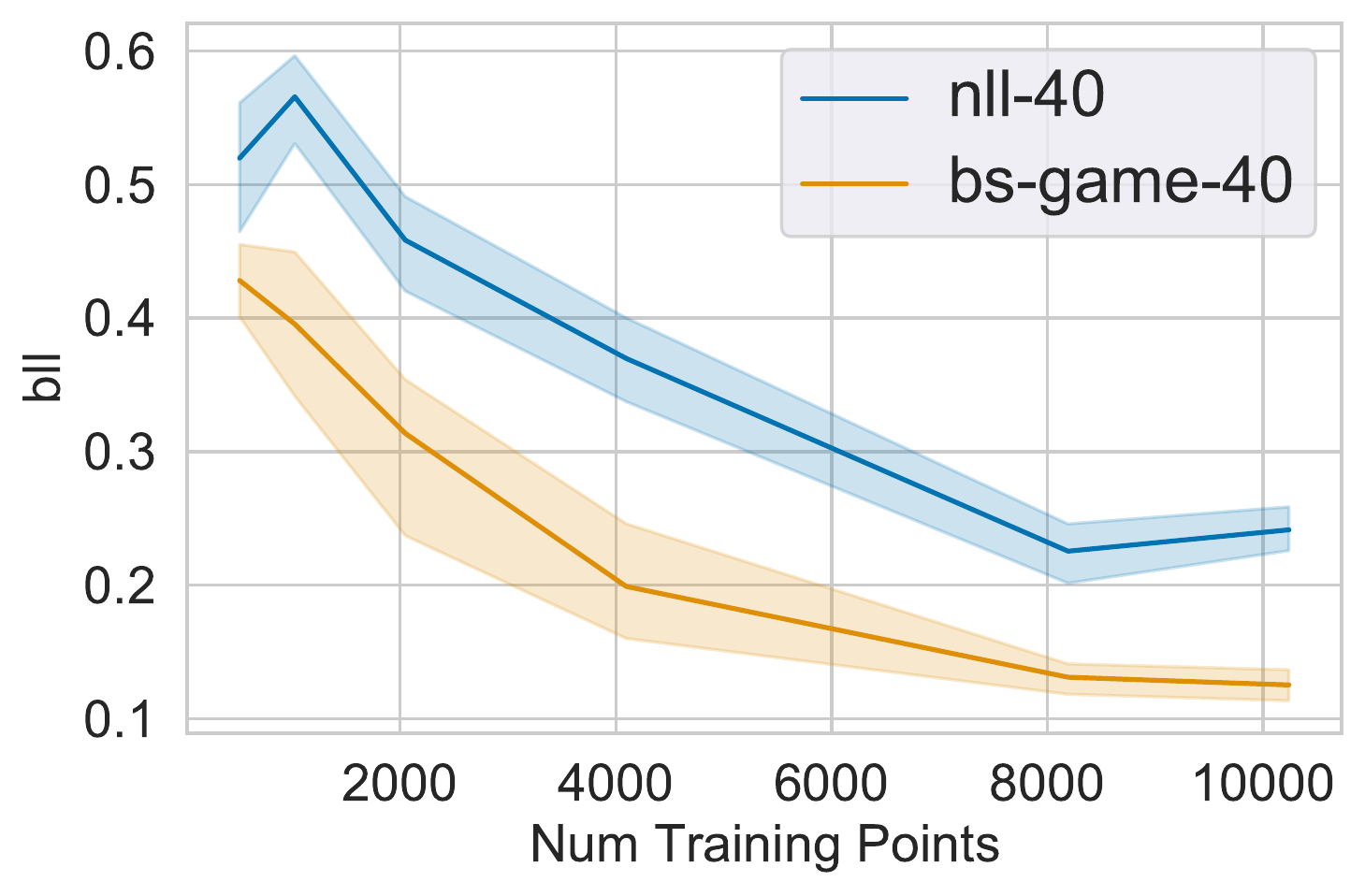}
    }
    \subfigure[Concordance]{
        \includegraphics[width=32mm]{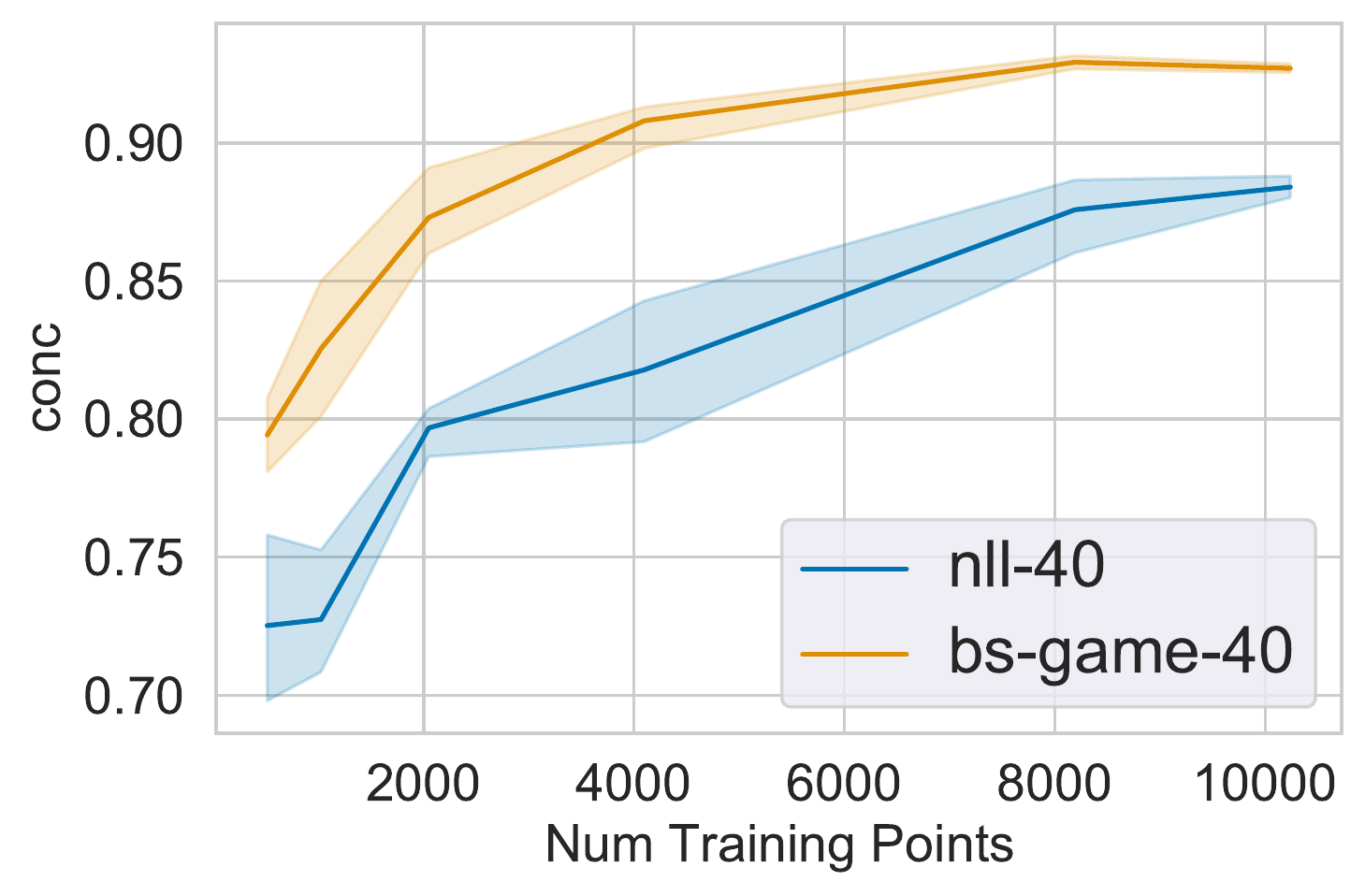}
    }
   \subfigure[Categorical \acrshort{nll}]{
        \includegraphics[width=32mm]{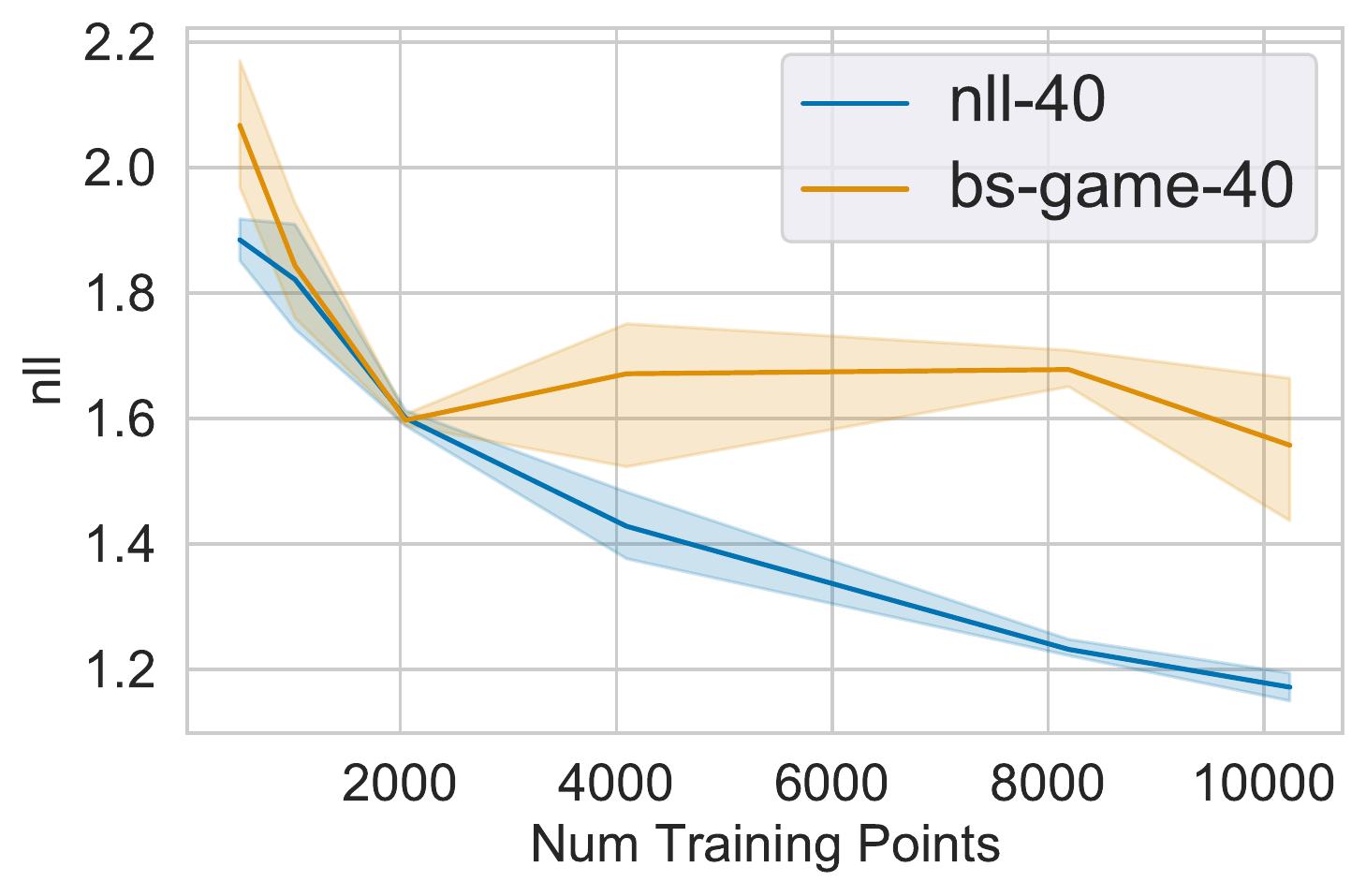}
    }
    \caption{40 bins.  \gls{nll} (Blue). \gls{bs}-Game (Orange).}
\end{figure}

\begin{figure}[h!]
    \centering
    \subfigure[Uncensored \acrshort{bs}]{
        \includegraphics[width=32mm]{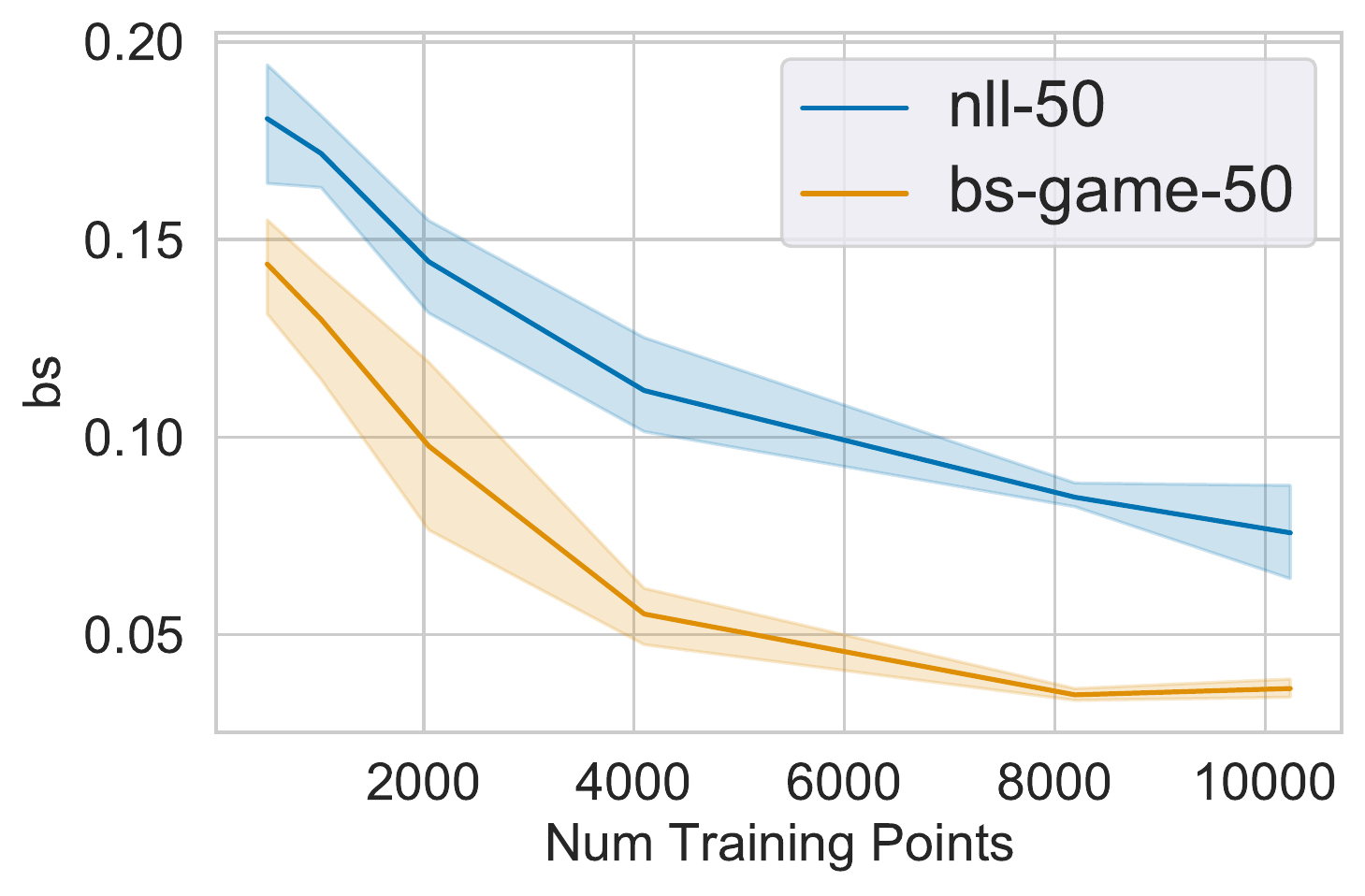}
    }
    \subfigure[Uncensored Neg \acrshort{bll}]{
        \includegraphics[width=32mm]{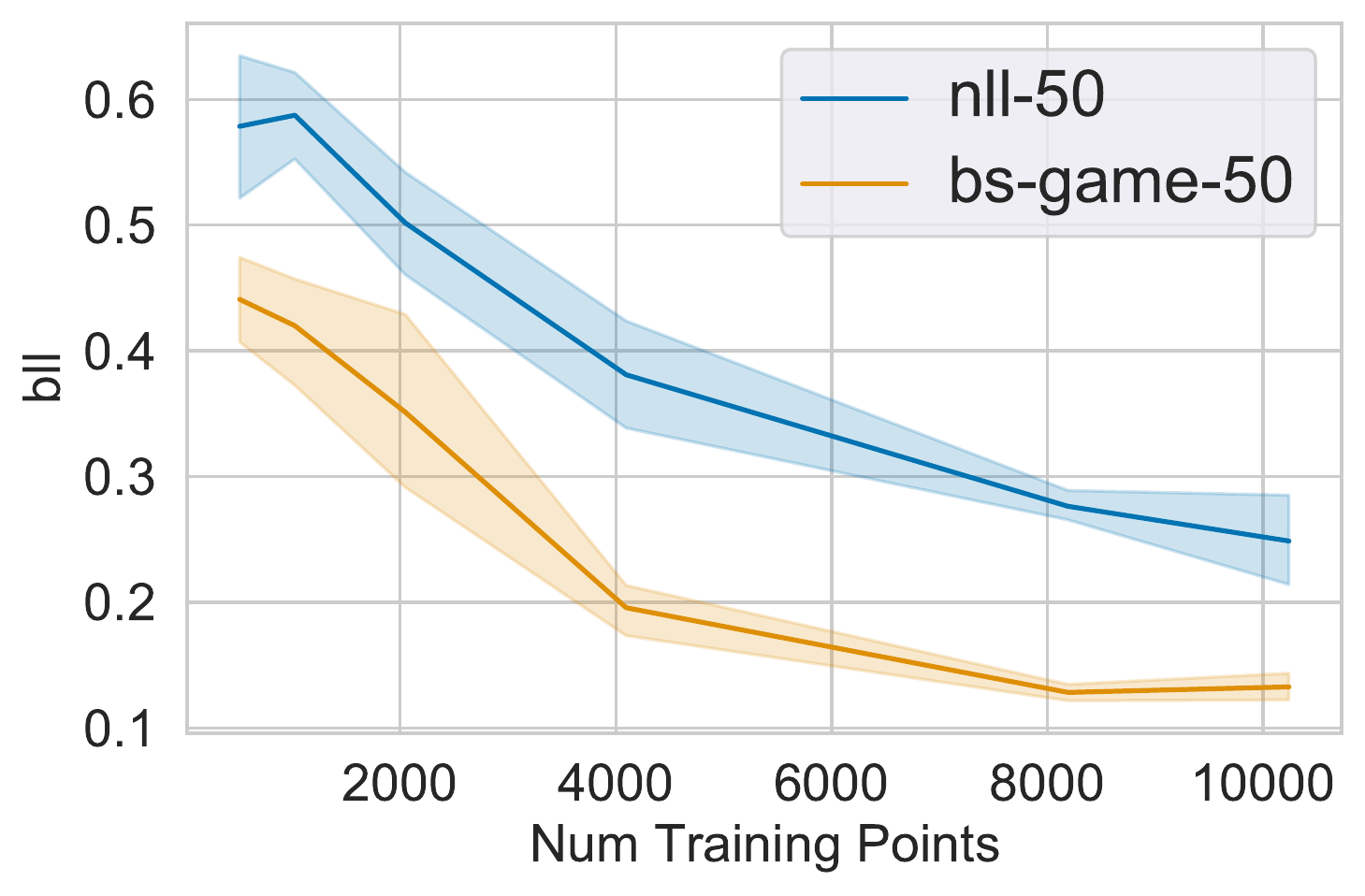}
    }
    \subfigure[Concordance]{
        \includegraphics[width=32mm]{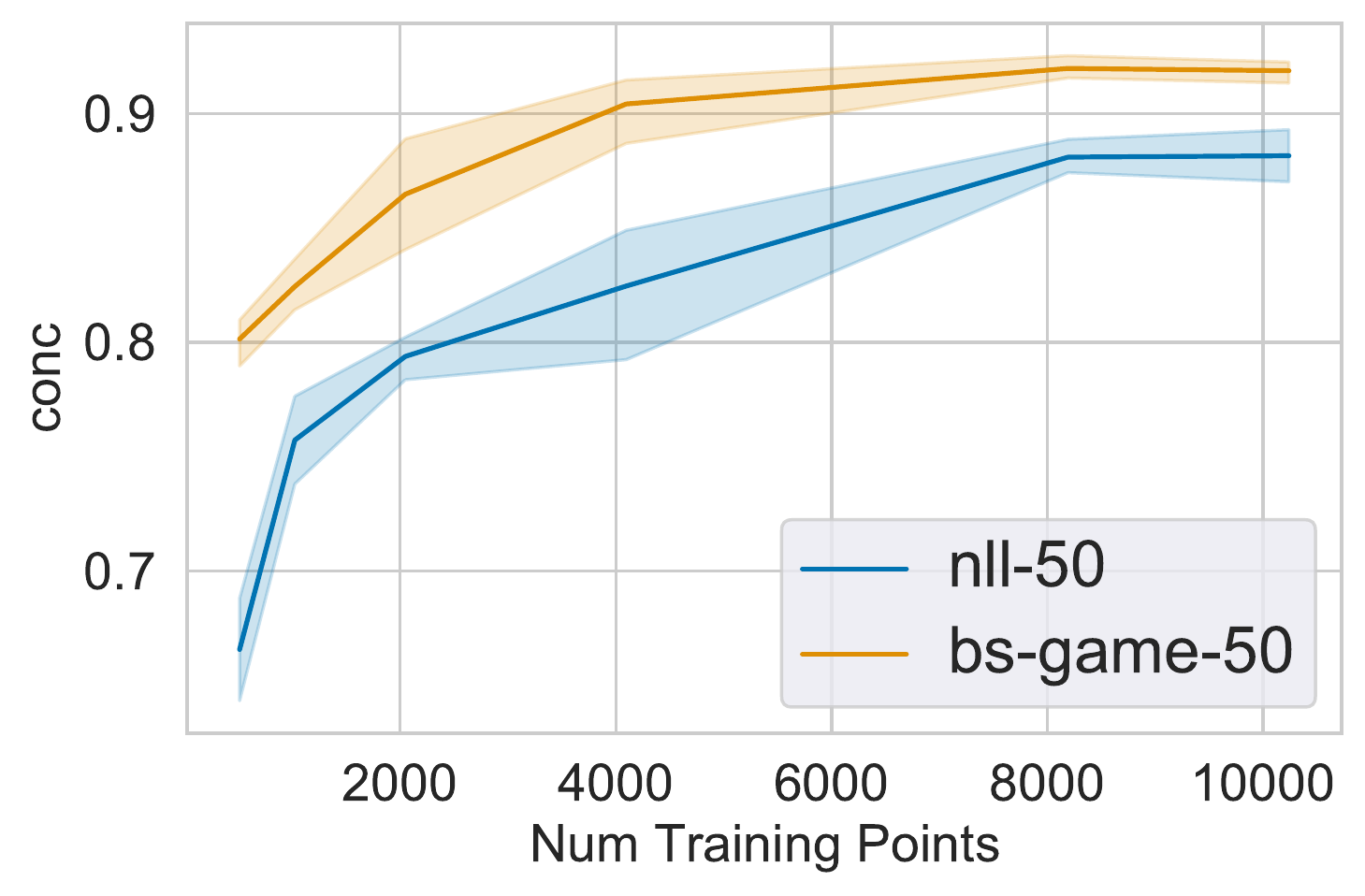}
    }
   \subfigure[Categorical \acrshort{nll}]{
        \includegraphics[width=32mm]{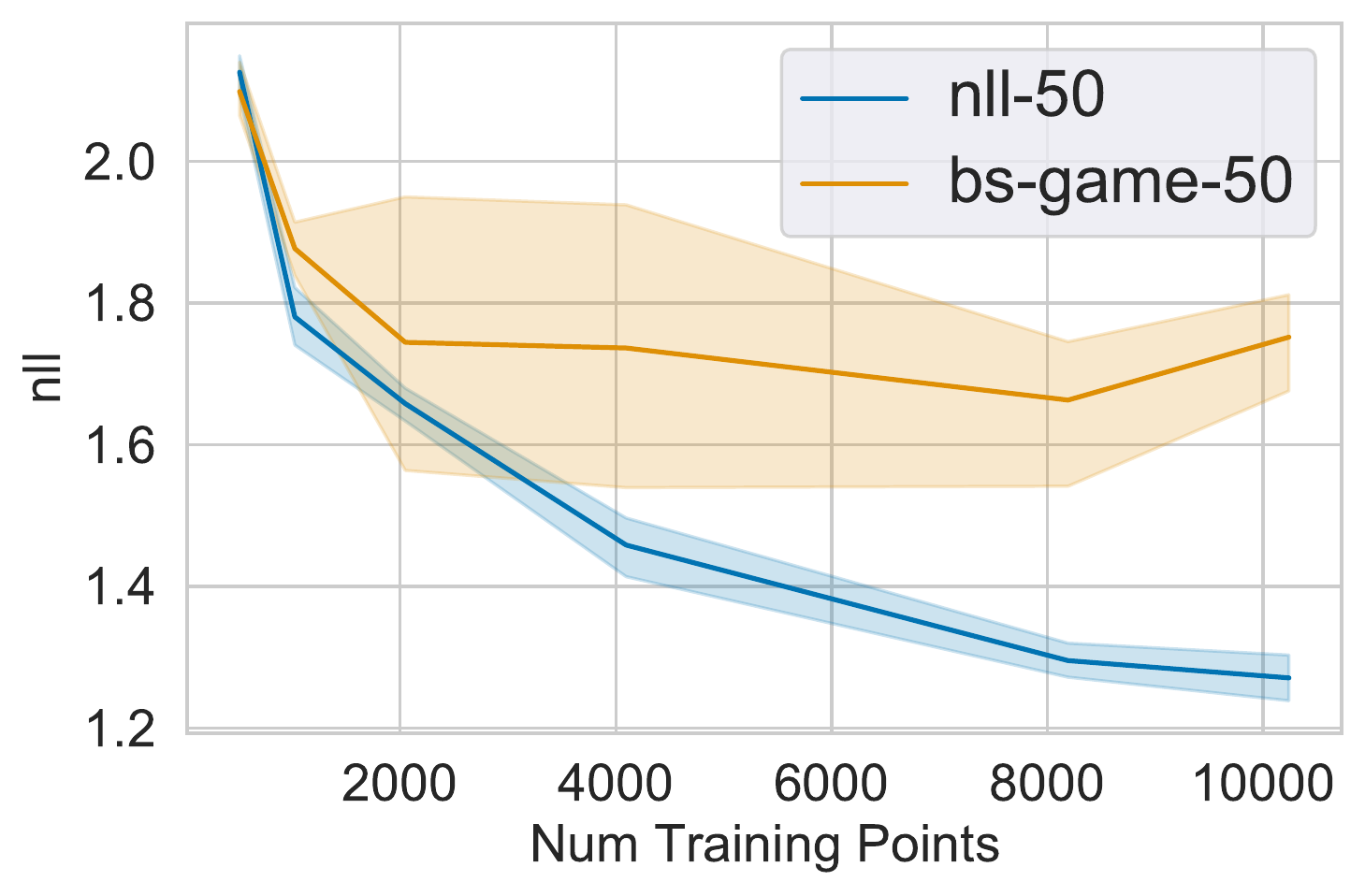}
    }
    \caption{50 bins.  \gls{nll} (Blue). \gls{bs}-Game (Orange).}
\end{figure}

\section{\label{appsec:sumgame} Proof of Summed or Integrated Brier Score to be proper}
\begin{proposition}
Assume we have a list of time $t_1, \dots, t_K$. Assume the true distribution for $T$ is $F^* =  F_{\theta_T^*}$ in \cref{eq:fbs}. We have:
\begin{itemize}
    \item The summed \gls{bs} $\sum_{i=1}^K BS(t_i; \theta)$ is proper, i.e., it has one minimizer at the true parameters $\theta_T^*$.
    \item The integrated \gls{bs} $\int_{t_1}^{t_K} BS(t; \theta) dt$ is proper, i.e., it has one minimizer at the true parameters $\theta_T^*$.
\end{itemize}
\end{proposition}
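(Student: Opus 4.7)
The plan is to bootstrap from the well-known fact that the single-time Brier score BS$(t;\theta)$ is strictly proper for the Bernoulli classification problem at horizon $t$, and then invoke linearity of the sum and the integral. Concretely, for any fixed $t$, decompose
\begin{align*}
\text{BS}(t;\theta) - \text{BS}(t;\theta_T^\star)
= \E\bigl[(F_{\theta_T}(t\mid X) - F^\star(t\mid X))^2\bigr] \geq 0,
\end{align*}
which follows by adding and subtracting $F^\star(t\mid X)$ inside the square in \cref{eq:fbs}, expanding, and using the tower property with $\E[\mathbbm{1}[T\leq t]\mid X] = F^\star(t\mid X)$. Equality holds iff $F_{\theta_T}(t\mid X) = F^\star(t\mid X)$ almost surely under $X$.

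For the summed case, since each $\text{BS}(t_i;\theta) - \text{BS}(t_i;\theta_T^\star) \geq 0$, the sum is minimized at $\theta_T^\star$, and the minimum is attained iff $F_{\theta_T}(t_i\mid X) = F^\star(t_i\mid X)$ a.s.\ for every $i \in \{1,\ldots,K\}$. In the paper's setting, the categorical model is parameterized by its probability mass on the bins whose right endpoints are precisely $t_1,\ldots,t_K$; matching the CDF at all $K$ grid points pins down the categorical distribution, yielding the uniqueness claim. For the integrated case, Tonelli lets me swap the $t$-integral and the expectation, giving
\begin{align*}
\int_{t_1}^{t_K}\bigl[\text{BS}(t;\theta)-\text{BS}(t;\theta_T^\star)\bigr]\dif t
= \E\int_{t_1}^{t_K}\bigl(F_{\theta_T}(t\mid X) - F^\star(t\mid X)\bigr)^2 \dif t,
\end{align*}
which is nonnegative and vanishes iff $F_{\theta_T}(\cdot\mid X) = F^\star(\cdot\mid X)$ Lebesgue-a.e.\ on $[t_1,t_K]$ (and, for the piecewise-constant categorical $F_{\theta_T}$ with grid points $\{t_i\}$, a.e.\ coincidence upgrades to coincidence at each bin boundary, hence parameter identifiability).

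The main obstacle is really only interpretive rather than technical: one needs to be careful about what ``one minimizer'' means, since BS$(t;\theta)$ only constrains the CDF at the queried times and says nothing about $F_{\theta_T}$ off the grid. I would therefore state the uniqueness part of the conclusion relative to the categorical parameterization assumed in \cref{sec:notationassumptions} (so that fixing the CDF at $t_1,\ldots,t_K$ fixes $\theta_T$), and, for the integrated version, note that a.e.\ agreement plus piecewise-constancy of the model's CDF gives pointwise agreement at the bin endpoints. Apart from this bookkeeping, everything reduces to the nonnegativity of a squared error integrated against a nonnegative measure (counting measure in the summed case, Lebesgue measure in the integrated case), which immediately preserves properness.
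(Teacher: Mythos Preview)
Your proposal is correct and follows the same high-level idea as the paper---properness at each fixed $t$ propagates to the sum or integral by linearity---but you execute it with more detail than the paper does. The paper's proof simply invokes that $\text{BS}(t;\theta_T^\star)\leq\text{BS}(t;\theta_T)$ for every $t$ as a black box and then sums (respectively integrates) both sides; it never writes down the gap explicitly and, in particular, it only shows that $\theta_T^\star$ is \emph{a} minimizer, not the unique one. Your decomposition $\text{BS}(t;\theta)-\text{BS}(t;\theta_T^\star)=\E[(F_{\theta_T}(t\mid X)-F^\star(t\mid X))^2]$ is the standard bias--variance identity for squared error and gives you something the paper's argument does not: a clean characterization of equality, which you then parlay into uniqueness via the categorical parameterization. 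That extra bookkeeping about what ``one minimizer'' means (CDF agreement at grid points pinning down the categorical parameters; a.e.\ agreement plus piecewise constancy for the integrated case) is a genuine addition over the paper's treatment, which is silent on the point. In short: same route, but you walk it more carefully and arrive at a slightly stronger conclusion.
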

\begin{proof}
Since $\gls{bs}(t)$ is proper, it has one minimizer at $\theta_T^\star$, i.e., for $\theta_T \neq \theta_T^*$, $\gls{bs}(t; \theta_T^\star) \leq \gls{bs}(t; \theta_T)$ for all $t$. Since this holds for all $t$, we then have:
\[
\sum_{i=1}^K BS(t_i; \theta_T^\star) \leq \sum_{i=1}^K BS(t_i; \theta_T).
\]
This means that the summed Brier Score at $\theta_T^*$ is smaller than at any other $\theta_T$. The summed \gls{bs} has one minimizer at the true parameters $\theta_T^*$, i.e., it is  proper.
Since the BS inequality holds for all $t$, we also have 
\[
\int_{t_1}^{t_K} BS(t; \theta_T^\star) dt \leq \int_{t_1}^{t_K} BS(t; \theta_T) dt
\]
This means that the integrated Brier Score at $\theta_T^*$ is smaller than at any other $\theta_T$. The integrated \gls{bs} has one minimizer at the true parameters $\theta_T^*$, i.e., it is proper.
\end{proof}
\section{\label{appsec:ipcwgames}Proof of \cref{prop: exist}}
Here we prove that the true solution is a stationary point of the game. We restate the proposition here. 
\begin{proposition*}
Assume $\exists \theta_T^\star \in \Theta_T,\exists \theta_C^\star \in \Theta_C$ such that $F^\star=F_{\theta_T^\star}$ and
$G^\star=G_{\theta_C^\star}$.
Assume the game losses $\ell_F,\ell_G$ are based on proper losses $L$
and that the games are only computed at times for which positivity holds.
Then $(\theta_T^\star,\theta_C^\star)$ is a stationary point of the game \cref{eq:iwgame}.
\end{proposition*}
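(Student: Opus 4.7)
The plan is to leverage the two defining identities of \gls{ipcw}: when the re-weighting model equals the true distribution, the \gls{ipcw} loss collapses to the plain (uncensored) loss $L$. Concretely, by assumption we have $L_I(F_{\theta_T};G^\star)=L(F_{\theta_T})$ and $L_I(G_{\theta_C};F^\star)=L(G_{\theta_C})$, and these identities are exactly what positivity (\cref{eq:positivity} and its dual) is imposed to guarantee, so the game losses can be rewritten cleanly when one coordinate is fixed at the truth.

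First I would consider the failure player's partial gradient $\nabla_{\theta_T}\ell_F(\theta)$ evaluated at $(\theta_T^\star,\theta_C^\star)$. Because $\ell_F(\theta)=L_I(F_{\theta_T};G_{\theta_C})$, substituting $\theta_C=\theta_C^\star$ gives $\ell_F(\theta_T,\theta_C^\star)=L_I(F_{\theta_T};G^\star)=L(F_{\theta_T})$ as a function of $\theta_T$ alone. Properness of $L$ together with the realizability assumption $F^\star=F_{\theta_T^\star}$ means that $\theta_T^\star$ is a global minimizer of $\theta_T\mapsto L(F_{\theta_T})$; since stationarity is a first-order necessary condition for an interior minimum, $\nabla_{\theta_T}L(F_{\theta_T})\big|_{\theta_T^\star}=0$. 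Combining these two observations,
\begin{equation*}
\nabla_{\theta_T}\ell_F(\theta)\big|_{(\theta_T^\star,\theta_C^\star)}
=\nabla_{\theta_T}L(F_{\theta_T})\big|_{\theta_T^\star}=0.
\end{equation*}
A fully symmetric argument applied to $\ell_G$, using the dual \gls{ipcw} identity and propriety of $L$ as a loss for the censoring model, yields $\nabla_{\theta_C}\ell_G(\theta)\big|_{(\theta_T^\star,\theta_C^\star)}=0$. Stacking these two component gradients gives $\xi(\theta_T^\star,\theta_C^\star)=0$, which is the definition of a stationary point. The same argument applies verbatim to each player in the multi-player formulation (where the same identities hold per-timestep $t$) and, by linearity of the gradient, to the summed game.

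The main conceptual hurdle is that $\ell_F$ depends on $\theta_C$ and $\ell_G$ depends on $\theta_T$, so one might worry that the cross-dependency introduces additional gradient terms at $\theta^\star$; the key observation is that in simultaneous-gradient dynamics each player only differentiates with respect to its \emph{own} parameter, and once the other coordinate is pinned at its true value the \gls{ipcw} reweighting is exact, so the proper-loss argument applies without modification. The only regularity caveat is the standard one: the first-order condition requires $\theta_T^\star$ and $\theta_C^\star$ to lie in the interior of their respective parameter sets (or for $L$ to be differentiable at the truth), which is the usual smoothness assumption implicit in writing $\xi(\theta)$ at all; no additional machinery beyond propriety and positivity is needed.
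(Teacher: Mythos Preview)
Your proposal is correct and follows essentially the same argument as the paper: fix the opponent's parameter at the truth so the \gls{ipcw} loss collapses to the uncensored proper loss, then invoke propriety plus realizability to conclude the gradient vanishes, and repeat symmetrically. Your added remarks on the summed/multi-player variants and the interior-point regularity caveat are sound extras that the paper leaves implicit.
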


\begin{align}
\tag{\ref{eq:iwgame}}
\begin{split}
    \ell_{F}(\theta) = 
 L_{I}(F_{\theta_T};G_{\theta_C}), \quad 
    \ell_{G}(\theta) &=
    L_{I}(G_{\theta_C}; F_{\theta_T})
    \end{split}
\end{align}
\begin{proof}
In $\ell_F(\theta)$, by the definition of the \gls{ipcw} estimator, when $\theta_C = \theta_C^*$, $L_I(F_{\theta_T}; G_{\theta_C}) = L(F_{\theta_T})$. Due to the fact that $L$ is proper, $\theta_T^*$ is a minimizer for $ L(F_{\theta_T})$. Then at $(\theta_T, \theta_C) = (\theta_T^*, \theta_C^*)$,
we have 
$$
\left.
\frac{d \ell_F(\theta)}{ d \theta_T}\right|_{\substack{\theta_T = \theta_T^*\\ \theta_C=\theta_C^*}}=\left.
\frac{d L_I(F_{\theta_T}; G_{\theta_C^*})}{ d \theta_T}\right|_{\theta_T = \theta_T^*} = \left.\frac{d L(F_{\theta_T})}{ d \theta_T}\right|_{\theta_T = \theta_T^*} = 0
$$
Similarly for $\ell_G(\theta)$, we have
$$
\left.
\frac{d \ell_G(\theta)}{ d \theta_C}\right|_{\substack{\theta_C = \theta_C^* \\ \theta_T=\theta_T^*}}=\left.
\frac{d L_I(G_{\theta_C}; F_{\theta_T^*})}{ d \theta_C}\right|_{\theta_C = \theta_C^*} = \left.\frac{d L(G_{\theta_C})}{ d \theta_C}\right|_{\theta_C = \theta_C^*} = 0
$$
Since the two gradients are zero, the game will stay at the true parameters. Therefore, $(\theta_T^\star,\theta_C^\star)$ is a stationary point of the game \cref{eq:iwgame}.
\end{proof}

\section{\label{appsec:brierstationary} Proof of \cref{prop: uniq}}
Here we prove that under one construction of the game in \cref{alg:mul-step}, the true solution is the unique stationary point of the game. We restate the proposition here. 
\begin{proposition*}
Consider discrete distributions over $K$ times. Let $\theta_T = \{\theta_{T1},\cdots, \theta_{T(K-1)}\}$, $\theta_{Tt}=P_{\theta}(T=t)$, $F_{\theta_T}(t)= \sum_{k=1}^t \theta_{Tk}$, and likewise for $C,\theta_C$.
Assuming that $\theta^\star_{Tt}>0$ and $\theta^\star_{Ct}>0$,
the solution
 $(\theta_{T}^\star, \theta_{C}^\star)$ is the only stationary point for the multi-player \gls{bs} game
 shown in \cref{alg:mul-step}
 for times $t \in \{1,\ldots,K-1\}$ 
\end{proposition*}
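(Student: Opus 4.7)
The plan is to induct on $t \in \{1,\ldots,K-1\}$ and exploit a block-triangular structure of the multi-player game. First I would check that the stationarity equation for the pair $(\theta_{Tt},\theta_{Ct})$ involves only parameters indexed by $k\leq t$: in $\ell_F^t$ the denominator $\overline{G}_{\theta_C}(U^-)$ is only evaluated when $\Delta=1$ and $U\leq t$, so the $U^-$ index is at most $t-1$, while the remaining $F(t)$ and $G(t)$ factors involve indices $k\leq t$; the symmetric statement holds for $\ell_G^t$. Thus the $2(K-1)$ stationarity equations can be solved in order of increasing $t$, each step a $2\times 2$ system in the two unknowns $(\theta_{Tt},\theta_{Ct})$.

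Assuming the inductive hypothesis $\theta_{Tk}=\theta_{Tk}^\star$ and $\theta_{Ck}=\theta_{Ck}^\star$ for $k<t$, I would differentiate the two losses. The failure-player first-order condition $\overline{F}_{\theta_T}(t)\,A_t = F_{\theta_T}(t)\,B_t$ simplifies by the IPCW identity used in Proposition~1: under $T\indep C$ and the inductive hypothesis, the sum $A_t=\sum_{k\leq t}\theta_{Tk}^\star\,\overline{G^\star}(k-1)/\overline{G}_{\theta_C}(k-1)$ collapses to $F^\star(t)$, and $B_t=\overline{F^\star}(t)\overline{G^\star}(t)/\overline{G}_{\theta_C}(t)$. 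Writing $a=F_{\theta_T}(t)$, $b=G_{\theta_C}(t)$, $a^\star=F^\star(t)$, $b^\star=G^\star(t)$, and $g^\star=G^\star(t-1)$, the condition becomes $(1-a)\,a^\star\,(1-b) = a\,(1-a^\star)\,(1-b^\star)$. The symmetric censor calculation, in which the $k<t$ terms of $\tilde A_t$ collapse while a single residual $k=t$ term remains carrying the unknown $1/(1-a)$, produces
\begin{align*}
(1-b)\bigl[g^\star(1-a) + (b^\star-g^\star)(1-a^\star)\bigr] = b\,(1-a^\star)\,(1-b^\star).
\end{align*}

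Solving the failure equation for $1-b$, substituting into the censor equation, and dividing through by $(1-a^\star)(1-b^\star)$ (nonzero by positivity and $\theta_{Tt}^\star,\theta_{Ct}^\star>0$), the system collapses after routine algebra to the factored identity $(a^\star - a)\,(1 - a\,g^\star) = 0$. Either $a = a^\star$, which fed back into the failure equation forces $b = b^\star$, or $a = 1/g^\star$. The second branch is eliminated by positivity: $\theta_{Tt}^\star>0$ combined with the censoring-positivity assumption gives $\overline{G^\star}(t^-)\geq\epsilon>0$, whence $g^\star=G^\star(t-1)<1$ and $1/g^\star>1$, which is not a valid CDF value. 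Only $a=a^\star$ is admissible; the base case $t=1$ is the same calculation with $g^\star=0$, and the induction closes.

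The main obstacle I anticipate is the censor-side bookkeeping: the sum defining $\tilde A_t$ has $t-1$ terms that collapse under the inductive hypothesis and one residual $k=t$ term involving $1/(1-a)$, and the cancellations that produce the clean factor $(1 - a\,g^\star)$ depend on this residual lining up precisely with the factors coming from the failure equation. Everything else is direct differentiation, the IPCW identity already invoked in Proposition~1, and a one-line appeal to positivity to rule out the spurious branch.
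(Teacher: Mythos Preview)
Your proposal is correct and follows the same inductive skeleton as the paper: both exploit the block-triangular structure so that the stationarity conditions at time $t$ reduce, under the inductive hypothesis, to a $2\times 2$ system in the two unknowns $(\theta_{Tt},\theta_{Ct})$. The difference lies in how that system is dispatched. The paper writes out $\text{F-BS-CW}(k{+}1)$ and $\text{G-BS-CW}(k{+}1)$ explicitly, takes derivatives, and feeds the resulting pair of polynomial equations to \emph{Mathematica}, which returns the correct root together with a spurious one that is then ruled out by checking that it forces $G_{\theta_C}(k{+}1)>1$. You instead substitute the failure equation into the censor equation and factor by hand to obtain $(a^\star-a)(1-ag^\star)=0$, eliminating the spurious branch $a=1/g^\star$ directly via $g^\star=G^\star(t-1)<1$ from positivity. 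Your route is more elementary and more transparent---no symbolic solver is needed, and the factored form makes plain both why exactly one extraneous root appears and why it is inadmissible---whereas the paper's route is mechanical but leaves the structure of the spurious solution opaque.
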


\begin{proof} 
We show by induction on the time $t$ of the \gls{ipcw} \gls{bs} game that
the simultaneous gradient equations are only satisfied at $\hat{\theta_T}=\theta_T^\star$
and $\hat{\theta}_{C}=\theta_C^\star$.
There is a lot of arithmetic but eventually it comes down to (1) substitution of one variable for another (2) assuming all previous timestep parameters are correct (induction) (3)  finding the zeros of a quadratic (4) showing that one of the two solutions is the correct parameter and the other is invalid.

\textbf{Note:} this proof uses the notation that $\hat{\theta}$ is a model parameter and $\theta^\star$ is the correct one.

\subsection{BS(1) (base case)}

We can compute the expectations defining
$\text{F-BS-CW}(1)$ and  $\text{G-BS-CW}(1)$
in closed form. That gives us:
\begin{align*}
    \text{F-BS-CW}(1) &= \theta^\star_{T1}
    (1-\hat{\theta}_{T1})^2 
    +
    (1-\theta^\star_{T1})(1-\theta^\star_{C1})
        \frac{\hat{\theta}_{T1}^2}{1-\hat{\theta}_{C1}}  \label{eq:gweightedfbs} \\
    \text{G-BS-CW}(1) &= \frac{\theta^\star_{C1}(1 - \theta^\star_{T1})
    (1-\hat{\theta}_{C1})^2 }{1 - \hat{\theta}_{T1}}
    +
    (1-\theta^\star_{T1})
    (1-\theta^\star_{C1})
        \frac{\hat{\theta}_{C1}^2}{1-\hat{\theta}_{T1}}   
\end{align*}
The derivatives are
\begin{align*}
\frac{d\text{F-BS-CW}(1)}{d\hat{\theta}_{T1}}
&=
     2
 \frac{
 (1-\theta^\star_{T1} )
        (1-\theta^\star_{C1} )}{1-\hat{\theta}_{C1}}
    \hat{\theta}_{T1}
    -
    2(1-\hat{\theta}_{T1})
 \theta^\star_{T1}  = 0\\
 \frac{d\text{G-BS-CW}(1)}{d\hat{\theta}_{C1}}
&=
     2
 \frac{
  (1-\theta^\star_{T1})
 (1-\theta^\star_{C1} )
       }{1-\hat{\theta}_{T1}}
    \hat{\theta}_{C1}
    -
    2\frac{(1 - \theta^\star_{T1})(1-\hat{\theta}_{C1})
 \theta^\star_{C1}}{1 - \hat{\theta}_{T1}}  = 0
\end{align*}
We can take each derivative equation and write one variable in terms of the other. First, 
taking $d\text{F-BS-CW}/d\hat{\theta}_{T1}$
and writing $\hat{\theta}_{T1}$ in terms of 
$\hat{\theta}_{C1}$:
\begin{align*}
\frac{d\text{F-BS-CW}(1)}{d\hat{\theta}_{T1}}=
  2
 \frac{
 (1-\theta^\star_{T1} )
        (1-\theta^\star_{C1} )}{1-\hat{\theta}_{C1}}
    \hat{\theta}_{T1}
    -
    2(1-\hat{\theta}_{T1})
 \theta^\star_{T1}  = 0
 \end{align*}
 implies
 \begin{align*}
 \frac{
 (1-\theta^\star_{T1} )
        (1-\theta^\star_{C1} )}{1-\hat{\theta}_{C1}}
    \hat{\theta}_{T1}
     =  (1-\hat{\theta}_{T1})
 \theta^\star_{T1} \\
 \frac{
 (1-\theta^\star_{T1} )
        (1-\theta^\star_{C1} )}{1-\hat{\theta}_{C1}}
    \hat{\theta}_{T1}
    +
    \theta^\star_{T1}   \hat{\theta}_{T1}
     =  \theta^\star_{T1}  \\
   \Big(  \frac{
 (1-\theta^\star_{T1} )
        (1-\theta^\star_{C1} )}{1-\hat{\theta}_{C1}}
    +
    \theta^\star_{T1}
    \Big) \hat{\theta}_{T1}
     =  \theta^\star_{T1}  \\
 \hat{\theta}_{T1}
     = \frac{ \theta^\star_{T1} }{\Big(  \frac{
 (1-\theta^\star_{T1} )
        (1-\theta^\star_{C1} )}{1-\hat{\theta}_{C1}}
    +
    \theta^\star_{T1}
    \Big)}
\end{align*}
Now solving for $\hat{\theta}_{C1}$ in the \text{G-BS-CS} derivative:
\begin{align*}
    \frac{d\text{G-BS-CW}(1)}{d\hat{\theta}_{C1}}=2
 \frac{
  (1-\theta^\star_{T1})
 (1-\theta^\star_{C1} )
       }{1-\hat{\theta}_{T1}}
    \hat{\theta}_{C1}
    -
    2\frac{(1 - \theta^\star_{T1})(1-\hat{\theta}_{C1})
 \theta^\star_{C1}}{1 - \hat{\theta}_{T1}}  = 0
\end{align*}
implies
\begin{align*}
 \frac{
  (1-\theta^\star_{T1})
 (1-\theta^\star_{C1} )
       }{1-\hat{\theta}_{T1}}
    \hat{\theta}_{C1}
    =
    \frac{(1 - \theta^\star_{T1})(1-\hat{\theta}_{C1})}{1 - \hat{\theta}_{T1}}\theta^\star_{C1}
 \end{align*}
Given $1 - \theta^\star_{T1} \neq 0$ and $1 - \hat{\theta}_{T1} \neq 0$, we have
\[
 (1-\theta^\star_{C1})    \hat{\theta}_{C1} = (1-\hat{\theta}_{C1})\theta^\star_{C1}
\]
which gives us $\hat{\theta}_{C1}=\theta^\star_{C1}$.
Given $1 - \theta^\star_{T1} \neq 0$ and $1 - \hat{\theta}_{T1} \neq 0$, the above derivative equations jointly imply
\begin{align*}
    \hat{\theta}_{T1}
     =  
     \Big(\theta^\star_{T1} \Big)
     \Bigg( 
 \frac{
 (1-\theta^\star_{T1} )
        (1-\theta^\star_{C1} )}{1-\hat{\theta}_{C1}}
 +  \theta^\star_{T1}
\Bigg)^{-1}
,\quad 
     \hat{\theta}_{C1}
     =\theta^\star_{C1} 
\end{align*}
Substituting  $\hat{\theta}_{C1}
     =\theta^\star_{C1}$ in the formula for $\hat{\theta}_{T1}$ in terms of $\hat{\theta}_{C1}$, we have
     \[\hat{\theta}_{T1}
     = 
     \Big( \theta^\star_{T1}\Big)
     \Bigg( 
 \frac{
 (1-\theta^\star_{T1} )
        (1-\theta^\star_{C1} )}{1-\theta^\star_{C1}}
 +  \theta^\star_{T1} 
\Bigg)^{-1}
=\frac{\theta^\star_{T1} }
     {
 (1-\theta^\star_{T1} )
 +  \theta^\star_{T1}} =  \theta^\star_{T1}
\]
Therefore, under the assumptions, for the BS(1) case, we have the only stationary point at the two true 1st-timestep parameters: $\hat{\theta}_{T1}=\theta_{T1}^\star$
and
$\hat{\theta}_{C1}=\theta_{C1}^\star$.

\subsection{Induction step} 

We can proceed by induction over timesteps. Claim: given     $P_{\theta}(T \leq a) = P^\star(T \leq a)$
    and
   $
    P_{\theta}(C \leq a) = P^\star(C \leq a), \quad a=1,\dots,k$,
the stationary point of the game BS(k+1) has to satisfy
$ P_{\theta}(T = k+1) = P^\star(T=k+1)$
and
$P_{\theta}(C = k+1) = P^\star(C=k+1)$ i.e.
$\hat{\theta}_{T,k+1}=\theta^\star_{T,k+1}$
and
$\hat{\theta}_{C,k+1}=\theta^\star_{C,k+1}$.We first simplify F-BS-CW.
\begin{align*}
 \text{F-BS-CW}(k+1) =   \E_{T,C} \Big[\frac{(1 - F_\theta(k+1))^2 \indicator{T \leq C} \indicator{U \leq k+1}}{P_\theta(C^\prime \geq U)}
        + \frac{F_\theta(k+1)^2 \indicator{U > k+1}}{P_\theta(C^\prime > k+1)}\Big] 
\end{align*}
We simplify each term of F-BS-CW separately. The left term of F-BS-CW is
\begin{align*}
     &\E_{T,C} \frac{(1 - F_\theta(k+1))^2 \indicator{T \leq C} \indicator{U \leq k+1}}{P_\theta(C^\prime \geq U)}\\
     =&P_\theta(T>k+1)^2 \E_{T,C}\frac{ \indicator{T \leq C} \indicator{U \leq k+1}}{P_\theta(C^\prime \geq U)}\\
     =&P_\theta(T>k+1)^2\sum_{a=1}^{K}\sum_{b=1}^{K}P^\star(T=a)P^\star(C=b)\frac{ \indicator{a \leq b} \indicator{\min(a,b) \leq k+1}}{P_\theta(C^\prime \geq \min(a,b))}\quad \\
          & \quad \quad \quad 
          \Big[ \text{condition }\indicator{a \leq b} \text{moves from indicator to sum limits and } \min(a,b)=a \Big] \\
     =&P_\theta(T>k+1)^2\sum_{a=1}^{K}\sum_{b=a}^{K} \frac{P^\star(T=a)P^\star(C=b)  \indicator{a \leq k+1}}{P_\theta(C^\prime \geq a)}
     \\
      &\quad \quad \quad
      \Big[ 
     \text{condition } \indicator{a\leq k+1} \text{ moves from indicator to sum limit}\Big]\\
     =&P_\theta(T>k+1)^2\sum_{a=1}^{k+1}\sum_{b=a}^{K} \frac{ P^\star(T=a)P^\star(C=b) }{P_\theta(C^\prime \geq a)}\\
      =&P_\theta(T>k+1)^2\sum_{a=1}^{k+1}P^\star(T=a)\sum_{b=a}^{K} \frac{  P^\star(C=b)}{P_\theta(C^\prime \geq a)} \\
   =&P_\theta(T>k+1)^2\sum_{a=1}^{k+1}P^\star(T=a)\frac{ P^\star(C\geq a) }{P_\theta(C^\prime \geq a)} \\
    &\Bigg[\text{induction hypothesis: } P_{\theta}(C \leq a) = P^\star(C \leq a), \quad a=1,\dots,k \implies  P_{\theta}(C > a) = P^\star(C > a), \quad a=1,\dots,k\Bigg]\\
     =&P_\theta(T>k+1)^2\sum_{a=1}^{k+1}P^\star(T=a) \cdot 1 \\
    =& P_\theta(T>k+1)^2P^\star(T\leq k+1) \\
    =& (1 - \sum_{i=1}^{k}\tihat-\hat{\theta}_{T(k+1)})^2\sum_{i=1}^{k+1}\theta^\star_{Ti}\\
        & \quad \quad \quad \Big[ \text{induction hypothesis: }
         P_{\theta}(T \leq a) = P^\star(T \leq a),  \quad a=1,\dots,k\Big] \\
    =& (1 - \sum_{i=1}^{k}\theta^\star_{Ti}-\hat{\theta}_{T(k+1)})^2\sum_{i=1}^{k+1}\theta^\star_{Ti} \\
  =&(1-p-x)^2(p+t)\\
  \overset{\Delta}{=}& A,
  \quad \quad
  \text{where }
  p=\sum_{i=1}^k \titrue, q=\sum_{i=1}^k \citrue,x=\tkplusonehat,
y=\ckplusonehat.
,t=\tkplusonetrue
c=\ckplusonetrue.
\end{align*}
The right term of F-BS-CW is
\begin{align*}
 &\E_{T,C} \frac{F_\theta(k+1)^2 \indicator{U > k+1}}{P_\theta(C^\prime > k+1)}  \\
 &=\frac{F_\theta(k+1)^2}{P_\theta(C^\prime > k+1)} \E_{T,C}  \indicator{U > k+1}  \\
 & \quad \quad \quad 
 \Bigg[ 
    T ~\text{and}~C~\text{are independent means } \indicator{U>z}=\indicator{T>z}\indicator{C>z}\Bigg]\\
 &=\frac{F_\theta(k+1)^2}{P_\theta(C^\prime > k+1)} P^\star(T>k+1)P^\star(C>k+1) \\
 &= \frac{(\sum_{i=1}^{k+1} \tihat)^2}{1-\sum_{i=1}^{k+1} \cihat}(1 - \sum_{i=1}^{k+1}\titrue)(1 - \sum_{i=1}^{k+1}\citrue) \\
 &
 \quad 
 \Bigg[ 
 \text{induction hypothesis: }
 P_{\theta}(T \leq a) = P^\star(T \leq a)
 \text{ and }
P_{\theta}(C \leq a) = P^\star(C \leq a), \quad a=1,\dots,k
 \Bigg]\\
 &=\frac{(\sum_{i=1}^{k} \titrue + \tkplusonehat)^2}{1-\sum_{i=1}^{k} \citrue -\ckplusonehat}(1 - \sum_{i=1}^{k+1}\titrue)(1 - \sum_{i=1}^{k+1}\citrue)\\
 &=\frac{(p + x)^2}{1-q-y}(1-p-t)(1-q-c) \triangleq B
 \end{align*}
where again $ p=\sum_{i=1}^k \titrue, q=\sum_{i=1}^k \citrue,x=\tkplusonehat,
y=\ckplusonehat
,t=\tkplusonetrue,
c=\ckplusonetrue$. To summarize, F-BS-CW$(k+1)=A+B$:
\begin{align*}
\text{F-BS-CW}(k+1) =  (1-p-x)^2(p+t) + 
    \frac{(p + x)^2}{1-q-y}(1-p-t)(1-q-c)
\end{align*}
Then we simplify G-BS-CW.
\begin{align*}
    \text{G-BS-CW}(k+1)= \E_{T,C} \Big[\frac{(1 - G_\theta(k+1))^2 \indicator{C < T} \indicator{U \leq k+1}}{P_\theta(T^\prime > U)}
        + \frac{G_\theta(k+1)^2 \indicator{U > k+1}}{P_\theta(T^\prime > k+1)}\Big]\\
\end{align*}
The left term of G-BS-CW
\begin{align*}
&\E_{T,C} \frac{(1 - G_\theta(k+1))^2 \indicator{C < T} \indicator{U \leq k+1}}{P_\theta(T^\prime > U)}\\
=&  (1 - G_\theta(k+1))^2   \E_{T,C} \frac{ \indicator{C < T} \indicator{U \leq k+1}}{P_\theta(T^\prime > U)}\\
=& (1 - G_\theta(k+1))^2  \sum_{a=1}^K\sum_{b=1}^K P^\star(C=a)P^\star(T=b) \frac{ \indicator{a < b} \indicator{\min(a,b) \leq k+1}}{P_\theta(T^\prime > \min(a,b))}\\
 & \quad \quad \quad \text{condition }\indicator{a < b} \text{moves from indicator to sum limits and} \min(a,b)=a\\
=&(1 - G_\theta(k+1))^2  \sum_{a=1}^K\sum_{b=a+1}^K  \frac{ P^\star(C=a)P^\star(T=b) \indicator{a \leq k+1}}{P_\theta(T^\prime > a)}\\
& \quad \quad \quad 
\text{condition } \indicator{a \leq k+1} \text { moves from indicator to sum limits}\\
=&(1 - G_\theta(k+1))^2  \sum_{a=1}^{k+1}\sum_{b=a+1}^K \frac{P^\star(C=a)P^\star(T=b) }{P_\theta(T^\prime > a)}\\
& \quad \quad \quad 
\Bigg[
    \text{split sum over a into two terms: 1 through k, and k+1, recall b starts at a+1}\Bigg]
\\
=&(1 - G_\theta(k+1))^2  
    \left(\sum_{a=1}^{k}\sum_{b=a+1}^K  \frac{P^\star(C=a)P^\star(T=b)}{P_\theta(T^\prime > a)} + \sum_{b=k+2}^K \frac{ P^\star(C=k+1)P^\star(T=b) }{P_\theta(T^\prime > k+1)}\right)\\
=&(1 - G_\theta(k+1))^2  \left(\sum_{a=1}^{k} P^\star(C=a)\sum_{b=a+1}^K \frac{P^\star(T=b) }{P_\theta(T^\prime > a)} +  P^\star(C=k+1)\sum_{b=k+2}^K \frac{P^\star(T=b)}{P_\theta(T^\prime > k+1)}\right)\\
=&(1 - G_\theta(k+1))^2 \left( \sum_{a=1}^{k}  \frac{ P^\star(C=a)P^\star(T\geq a + 1)}{P_\theta(T^\prime > a)} +  \frac{ P^\star(C=k+1)P^\star(T>k+1)}{P_\theta(T^\prime > k+1)}\right)\\
=&(1 - G_\theta(k+1))^2  \left(\sum_{a=1}^{k} \frac{  P^\star(C=a)P^\star(T> a )}{P_\theta(T^\prime > a)} +   \frac{P^\star(C=k+1)P^\star(T>k+1)}{P_\theta(T^\prime > k+1)}\right)\\
&\Bigg[\text{induction hypothesis: }P_{\theta}(T \leq a) = P^\star(T \leq a), \quad a=1,\dots,k \implies  P_{\theta}(T > a) = P^\star(T > a), \quad a=1,\dots,k\Bigg]\\
=&(1 - G_\theta(k+1))^2  \left(\sum_{a=1}^{k} P^\star(C=a)+  \frac{ P^\star(C=k+1)P^\star(T>k+1)}{P_\theta(T^\prime > k+1)}\right)\\
 =&(1-\sum_{i=1}^{k}\cihat-\ckplusonehat)^2\left(\sum_{i=1}^k\citrue + \frac{\ckplusonetrue (1-\tkplusonetrue-\sum_{i=1}^k \titrue)}{1-\sum_{i=1}^k \tihat -\tkplusonehat}\right) \\
  & \quad 
 \Bigg[ 
 \text{induction hypothesis: } 
 P_{\theta}(T \leq a) = P^\star(T \leq a)
    \quad \text{ and }
    \quad 
    P_{\theta}(C \leq a) = P^\star(C \leq a), \quad a=1,\dots,k
    \Bigg]\\
 =&(1-\sum_{i=1}^{k}\citrue-\ckplusonehat)^2\left(\sum_{i=1}^k\citrue + \frac{\ckplusonetrue (1-\tkplusonetrue-\sum_{i=1}^k \titrue)}{1-\sum_{i=1}^k \titrue -\tkplusonehat} \right)\\
=& (1-q-y)^2(q+\frac{c(1-t-p)}{1-p-x})
\overset{\Delta}{=} C
\end{align*}
By symmetry with $B$, the right term is
\begin{align*}
    &\E_{T,C}\frac{G_\theta(k+1)^2 \indicator{U > k+1}}{P_\theta(T^\prime > k+1)}
    =\frac{(q + y)^2}{1-p-x}(1-q-c)(1-p-t)
    \overset{\Delta}{=}D
\end{align*}

Again using $ p=\sum_{i=1}^k \titrue, q=\sum_{i=1}^k \citrue,x=\tkplusonehat,
y=\ckplusonehat
,t=\tkplusonetrue,
c=\ckplusonetrue$,
we have 
\begin{align*}
\text{G-BS-CW}(k+1) &= C + D \\
&=
(1-q-y)^2(q+\frac{c(1-t-p)}{1-p-x})
+
\frac{(q + y)^2}{1-p-x}(1-q-c)(1-p-t)
\end{align*}
The stationary point satisfies
\begin{align*}
    \frac{\partial \text{G-wt-FBS(k+1)}}{\partial x}
    &= \frac{\partial A}{\partial x} +  \frac{\partial B}{\partial x}\\
    &=-2(1-p-x)(p+t) + 2\frac{(p + x)}{1-q-y}(1-p-t)(1-q-c)
   \\
   &=0\\
 \frac{\partial \text{F-wt-GBS(k+1)}}{\partial y}
 &=\frac{\partial C}{\partial y} + \frac{\partial D}{\partial y}\\ 
 &= -2(1-q-y)(q+\frac{c(1-t-p)}{1-p-x}) + 2\frac{(q + y)}{1-p-x}(1-q-c)(1-p-t)
 = 0
\end{align*}
It's a system of quadratic equations with two unknowns. The system has analytical solutions. Solving the above equations for $x, y$ by \textit{Mathematica} (it is quite a long derivation manually), the solutions are 
\[
x = t, y = c
\]
or 
\begin{align*}
    x =& (1/(-q + q^2 + 
   q c))(c p - q c p - q t + q^2 t + c t \\
  & - (
    p (-1 + q + c + q p - q^2 p - c p + q t - q^2 t - 
       c t))/((-1 + q) (p + t)) \\
       &+ (
    q p (-1 + q + c + q p - q^2 p - c p + q t - q^2 t - 
       c t))/((-1 + q) (p + t)) \\
       &- (
    t (-1 + q + c + q p - q^2 p - c p + q t - q^2 t - 
       c t))/((-1 + q) (p + t)) \\
       &+ (
    q t (-1 + q + c + q p - q^2 p - c p + q t - q^2 t - 
       c t))/((-1 + q) (p + t)))\\
 y =& (-1 + q + c + q p - q^2 p - c p + q t - q^2 t - 
   c t)/((-1 + q) (p + t))\\
\end{align*}
To check if this second solution is valid,
it would need to be the case that $q+y < 1$ because we only consider $k+1 < K$. If we ask mathematica to simplify q+y that satisfies the above solution, then this holds:
\begin{align*}
    q+y = \frac{-1+q-c(-1+p+t)}{(-1+q)(p+t)}
\end{align*}
The numerator and the denominator are both negative. If $k+1 <K$ (we know \gls{bs} at K is 0 and also we only have K-1 parameters), the numerator minus denominator =
\begin{align*}
    -1+q-c(-1+p+t) - (-1+q)(p+t) &= (-1+q)(1-p-t) -c(-1+p+t)\\
    &=(-1 +q+c)(1-p-t) \\
    &<0
\end{align*}
Therefore, 
\[
\sum_{i=1}^{k} \citrue + \ckplusonehat  = q + y >1
\]
This is invalid. So 
\[
x = t, y = c
\]
is the only solution, i.e., $\tkplusonehat = \tkplusonetrue, \ckplusonehat = \ckplusonetrue$. By induction, we conclude that 
\[
\tihat = \titrue, \cihat = \citrue, i= 1,\dots, K-1
\]
By $\hat{\theta}_{TK} = 1- \sum_{i=1}^{K-1} \tihat$ and $\hat{\theta}_{CK} = 1- \sum_{i=1}^{K-1} \cihat$, we have 
\[
\hat{\theta}_{TK} = \theta^\star_{TK}, \hat{\theta}_{CK} = \theta^\star_{CK}
\]
Therefore, 
\[
\tihat = \titrue, \cihat = \citrue, i= 1,\dots, K
\]
is the only stationary point for the game.
\end{proof}

\end{document}